\documentclass{article}

\usepackage{iclr2027_preprint,times}

\usepackage{hyperref}
\usepackage{url}
\usepackage{graphicx}
\usepackage{xcolor}
\usepackage{amsmath,amssymb,amsthm}
\usepackage{mathtools}
\usepackage{booktabs}
\usepackage{subcaption}
\usepackage{microtype}
\usepackage{bm}
\usepackage{dsfont}
\usepackage{float}
\usepackage{accents}
\usepackage{standalone}
\usepackage{tikz}
\usepackage{pgfplots}
\usepackage{standalone}
\usepackage{siunitx}
\usepackage[frozencache]{minted}
\usepackage[capitalize]{cleveref}
\usepackage{csquotes}
\usepackage{IEEEtrantools}
\usepackage{algorithm}
\usepackage{pythonhighlight}
\usepackage[noend]{algpseudocode}

\theoremstyle{plain}
\newtheorem{theorem}{Theorem}
\newtheorem{lemma}{Lemma}
\newtheorem{corollary}{Corollary}
\newtheorem{proposition}{Proposition}
\theoremstyle{remark}
\newtheorem*{remark}{Remark}
\theoremstyle{definition}
\newtheorem{definition}{Definition}

\pgfplotsset{compat=newest}
\usepgfplotslibrary{fillbetween,groupplots}
\usetikzlibrary{positioning,calc,arrows.meta,fit}

\title{Scalable Discrete-to-Continuous Channel Simulation for Compression and Privacy}

\author{Joseph Rowan, Buu Phan and Ashish J.\ Khisti \\
Department of Electrical and Computer Engineering, University of Toronto\\
\small\texttt{\{joseph.rowan,truong.phan\}@mail.utoronto.ca, akhisti@ece.utoronto.ca}
}

\newcommand{\<}{\negmedspace{}}
\algnewcommand{\LComment}[1]{\State \textcolor{blue}{\(\triangleright\) \textit{{#1}}}}
\renewcommand{\algorithmicrequire}{\textbf{Input:}}
\renewcommand{\algorithmicensure}{\textbf{Output:}}
\newcommand{\todo}[1]{}
\renewcommand{\todo}[1]{{\color{red} TODO: {#1}}}

\newcommand{\mc}{\mathcal}

\renewcommand{\Pr}{\operatorname{Pr}}
\DeclareMathOperator{\E}{\mathbb{E}}
\DeclareMathOperator{\Var}{Var}
\DeclareMathOperator{\1}{\mathds{1}}

\DeclarePairedDelimiter{\abs}{\lvert}{\rvert}
\DeclarePairedDelimiter{\norm}{\lVert}{\rVert}

\DeclareMathOperator*{\argmin}{arg\,min}

\definecolor{c0}{HTML}{4C78A8}
\definecolor{c1}{HTML}{8F63B8}
\definecolor{c2}{HTML}{59A14F}
\definecolor{c3}{HTML}{E15759}
\definecolor{c4}{HTML}{F28E2B}
\definecolor{c5}{HTML}{653700}

\begin{document}

\maketitle

\begin{abstract}
Channel simulation has recently emerged as a useful component in machine learning systems where samples from a prescribed probability distribution are to be compressed.
Yet, general channel simulation algorithms often suffer from high computational costs, random stopping times or, in the worst case, can require generating an infinite number of shared random samples.
We introduce a scheme for both exact and approximate simulation of discrete-to-continuous channels which conversely uses a fixed number of random samples, and therefore has a runtime independent of the channel and the input.
Unlike existing channel simulation schemes which generate a sequence of independent samples from a proposal distribution, our approach generates one sample, or alternatively a fixed number of samples, from each potential target distribution. We then apply a latent permutation to the samples before performing sample selection using an exponential race.
Our scheme provides a flexible tradeoff between the number of  generated samples and the compression rate.
Using polar and multilevel coding, we scale our approach to handle long blocklengths in $O(n \log n)$ time in order to benefit from reduced per-symbol overhead.
We conclude by demonstrating applications to variable-rate compression with stochastic VQ-VAEs and communication-efficient differentially private distributed mean estimation via exact simulation of the Gaussian mechanism.
\end{abstract}

\section{Introduction}\label{sec:introduction}
In channel simulation, one party, termed the \emph{encoder}, observes a source $X \in \mc{X}$, $X \sim P_X$, and sends a message $M$ to another party, the \emph{decoder}, which uses the message to output a random variable $Y$.
Here, $Y$ must follow a prescribed conditional distribution $P_{Y \mid X}(\cdot \mid X)$, called the \emph{channel}, while the expected bit-length of $M$, $\E[\ell(M)]$, should be made as small as possible.
To assist in reducing the communication cost, we assume that shared randomness $W$ is available at both the encoder and decoder.
Viewed in this way, channel simulation provides a general method to compress a noisy version of $X$ into a binary representation, where the amount and type of noise added to $X$ is controlled by the choice of $P_{Y \mid X}$.
It may also be regarded as a stochastic generalization of lossy compression methods such as quantization, which similarly perturb a source and produce discrete representations.
As such, it has emerged as a key building block in neural compression architectures, where the deterministic nature of quantization can pose difficulties during training \citep{flamich2026}, and has also been applied to privacy \citep{liu2024}.

The nature of $X$ and $Y$ has a direct bearing on the difficulty of the channel simulation task.
While numerous constructions have been proposed for arbitrary probability spaces, notably the Poisson functional representation (PFR, \citealp{li2018}), rejection-sampling based schemes \citep{flamich2023c,phan2025} and approximate methods leveraging importance sampling \citep{havasi2019,phan2024}, these may either be slow without additional restrictions on $P_{Y \mid X}$, suffer from highly variable running times, or produce biased outputs at smaller sample counts.
Most critically, known exact general methods require generating infinitely many samples in the worst case, making them less attractive for real-world applications.
In contrast, when $Y$ is discrete, the exponential functional representation (EFR, \citealp{li2018}) can be used to simulate any channel exactly in $O(\abs{\mc{Y}})$ time with the shared randomness consisting of $\abs{\mc{Y}}$ exponential random variables.
We are interested in the complementary case where $\mc{X}$ is discrete but $\mc{Y}$ is continuous, which naturally arises in situations where continuous noise perturbs discrete input data, and ask whether this second problem may also be solved with a \emph{fixed number} of shared randomness samples.
Our first goal in this paper is to answer in the affirmative by introducing an approach we term the \emph{permuted scheme}, which uses $2 \abs{\mc{X}}$ random samples to achieve communication-efficient simulation of any discrete-input, continuous-output channel.

In machine learning applications, a further challenge is that the channels to be simulated are often high-dimensional, involving for example vectors of latents, gradients or network weights.
On one hand, increasing the channel dimension, also known as the blocklength, has the benefit of amortizing communication overheads.
For example, PFR achieves an expected message length at most $I(X; Y) + \log(I(X;Y) + 2) + 3$ bits \citep{li2024a}; extending to $n$ independent uses of the channel yields $I(X; Y) + \log(n I(X;Y) + 2) / n + 3 / n$ bits per sample, which approaches the lower bound of $I(X;Y)$ in the limit as $n \to \infty$.
Unfortunately, the time complexity of PFR and similar general channel simulation schemes increases exponentially with $n$, rendering their use on high-dimensional channels impractical.
Recent work has turned to techniques from modern coding theory to accelerate channel simulation across many independent dimensions \citep{sriramu2024,zhao2026,ozyilkan2026}, yet their applicability remains limited to binary or other discrete outputs.
Our second objective is therefore to demonstrate how a polar coding framework can similarly be incorporated into the permuted scheme, allowing for scalable discrete-to-continuous channel simulation at long blocklengths.
In summary, our contributions are:
\begin{enumerate}
    \item We introduce a method for the simulation of general discrete-input, continuous-output channels which uses a fixed number of shared randomness samples, with the runtime being independent of the input and choice of $P_{Y \mid X}$.
    We further show how an approximation based on the Sinkhorn-Knopp algorithm \citep{sinkhorn1964,knight2008} can be used to scale our approach to large input alphabets in applications such as neural compression where an exact output distribution may not be necessary. 
    \item By adopting a construction based on polar codes \citep{arikan2009} and multilevel coding (MLC, \citealp{wachsmann1999}), we scale the dimension of channels which can be simulated with the permuted scheme.
    The complexity of our algorithm scales with the channel dimension as $O(n \log n)$, making it practical even when $n$ is large and allowing operation at long blocklengths where per-symbol overhead is reduced.
    \item We deploy our scheme to two applications: first, we demonstrate how our discrete-to-continuous channel simulation can be used to achieve variable-rate lossy compression from VQ-VAEs \citep{vandenoord2017} without retraining; second, we present a scheme for communication-efficient distributed mean estimation (DME) with central differential privacy (CDP) via exact simulation of the Gaussian mechanism \citep{dwork2014}.
\end{enumerate}

Throughout, logarithms are base 2 unless otherwise stated.
Entropies, mutual information and KL divergence are to the same base.
We write $x^n$ for a sequence $\{ x_i \}_{i=1}^{n}$ and let $[n] = \{ 1, \ldots, n \}$. 


\section{Background and Related Work}\label{sec:related_work}
\paragraph{One-Shot Channel Simulation.}
In the \emph{one-shot} setting, the encoder observes a single symbol $X$ and the decoder outputs a corresponding $Y$.
Since $X$ and $Y$ are not required to be scalars, but may instead be random sequences, vectors, or any other object, the one-shot setting is the most general.
\citet{bennett2002} and \citet{winter2002} established theoretical performance bounds for channel simulation under unlimited common randomness, with the lower bound being $I(X;Y)$. 
\citet{harsha2010} introduced an early simulation algorithm for discrete channels known as greedy rejection sampling (GRS) which achieves a near-optimal coding cost, which was later extended to general probability spaces \citep{flamich2023c,flamich2023b}.
Adopting a shared Poisson point process as the common randomness, \citet{li2018} introduced PFR, a popular exact channel simulation algorithm whose rate also attains the mutual information with logarithmic redundancy.
More recently, schemes based on standard rejection sampling have also emerged as alternatives with similar theoretical guarantees \citep{phan2025,hill2026}.

\paragraph{Multi-Shot Channel Simulation.}
By the multi-shot setting, we refer to the task of simulating $n$ iid copies of a channel, i.e.\ the product channel $P_{X \mid Y}^{\times n}$.
In principle, any one-shot simulation algorithm can be scaled to the multi-shot case by treating the input $X^n$ as one high-dimensional symbol.
However, while several algorithms have been proposed which improve the performance of one-shot simulation in special cases \citep{flamich2022,flamich2023a,hegazy2022}, existing general sampling-based schemes do not scale well to large blocklengths as their sample complexity is exponential in the joint mutual information \citep{li2024a}.
By focusing on binary-output channels, \citet{sriramu2024} devised a multi-shot algorithm based on polar coding \citep{arikan2009} which runs in $O(n \log n)$ time and can hence take advantage of decreased per-symbol overhead at long blocklengths.
Subsequently, \citet{zhao2026} introduced a family of algorithms based on polar and other linear codes extending multi-shot simulation to additive exchangeable noise channels over finite fields, while \citet{ozyilkan2026} generalized to a non-stationary setting.
However, these approaches are currently limited to discrete or, in the case of long-blocklength polar code constructions, binary outputs, and cannot be used when a richer output space is desired.

\paragraph{Applications to Neural Compression and Privacy.}
The connection between channel simulation and lossy compression was established in \citet{winter2002}; any channel simulation algorithm can be converted into a lossy compressor \citep{li2018}.
On the practical side, \citet{havasi2019} used approximate simulation to compress network parameters, while \citet{flamich2020} applied a similar approach to images.
\citet{theis2022a} defined a channel from a diffusion model and simulated it with PFR to achieve image compression under a realism constraint, which \citet{vonderfecht2025} accelerated to practical speeds using an approximate construction.
\citet{phan2024} adopted channel simulation as the basis for image compression with side information at the decoder.
Continuous latent spaces derived from $\beta$-VAEs or Gaussian diffusion have typically been used in these approaches, which consequently suffer from the high computational cost of simulating general channels and are practically limited to short blocklengths.
By instead using our discrete-to-continuous scheme and the discrete latent space of a VQ-VAE, we retain the flexibility of channel simulation-based compression while reducing coding overhead by jointly processing blocks of up to $8192$ latent variables.

Channel simulation has also found applications in differential privacy (DP, \citealp{dwork2006,dwork2014}), which offers a principled way to protect users' data released to an untrusted party.
A DP mechanism applies a randomized privacy-preserving perturbation to the data $X$, canonical examples being additive  Gaussian or Laplacian noise, and can therefore be construed as a channel \citep{li2024a}.
\cite{feldman2021} used approximate simulation via rejection sampling and pseudorandomness to reduce the communication cost of locally differentially private (LDP) mechanisms.
\citet{shah2022} developed an approach based on importance sampling (IS).
However, these methods are approximate in the sense that they do not preserve the target noise distribution.
Under a trusted aggregator, \citet{hasircioglu2024} and \citet{hegazy2024} used dithered quantization to achieve exact realizations of 1D additive noise mechanisms.
\citet{liu2024} proposed a general method extending PFR to exactly simulate any LDP or CDP mechanism, at the expense of inheriting PFR's cost and non-deterministic stopping time.
We offer a complementary approach that can be used to exactly and quickly simulate CDP mechanisms acting on discrete inputs up to 16-ary alphabets.

\section{One-Shot Discrete-to-Continuous Channel Simulation}\label{sec:one_shot_simulation}
We are interested in simulating a channel $P_{Y \mid X}$ from $X \in \mc{X}$ to $Y \in \mc{Y}$, where $\mc{X}$ is a finite set, while requiring that the shared randomness contain a fixed number of samples.
Suppose without loss of generality that $\mc{X} = [N]$; we assume for simplicity that for each realization $X = x$, the density $p_{Y \mid X}(\cdot \mid x)$ exists.
A naive scheme that satisfies our requirements consists of the following:

\begin{enumerate}
    \item Generate shared randomness $\bar{U}^N$, where $\bar{U}_i \sim P_{Y \mid X}(\cdot \mid i)$.
    \item At the encoder, observe $X = x$ and select the index $K = x$.
    \item Send $K$ to the decoder, which will output $Y = \bar{U}_K$.
\end{enumerate}

However, since this scheme directly transmits $X$ to the decoder independently of the common randomness, the communication cost is at least $H(X)$, which may in general be much larger than $I(X;Y)$.
An intuitive idea to reduce the amount of communication might be to introduce randomness into the index $K$ while retaining the same sample pool $\bar{U}^N$.
Unfortunately, it turns out that this approach does not allow an arbitrary target distribution to be reproduced for any rate less than $H(X)$.
To see this, suppose $N = 2$ and let the index follow a conditional distribution $P_{K \mid X, \bar{U}^2}(\cdot \mid x, u^2)$.
One can then show, see \cref{sec:biawgn_example}, that the only solution preserving the output distribution is $P_{K \mid X, \bar{U}^2}(k \mid x, u^2) = \delta(k - x)$, bringing us back to the naive scheme; the same argument holds for larger values of $N$.
To reduce the rate below $H(X)$ while achieving our goal of simulating $P_{Y \mid X}$ exactly with a fixed sample count, a successful strategy must also apply additional randomization to $\bar{U}^N$.
This forms the basis of our permuted scheme, which we will consider in the next section.

\subsection{Permuted Scheme for Channel Simulation with Finitely Many Samples}\label{sec:oneshot_permuted}

To overcome the naive scheme's failure to drive the communication cost closer to $I(X;Y)$, our key insight is to break the deterministic mapping between the index of each sample $\bar{U}_i$ and the distribution which generated it by introducing a random permutation.
Importantly, the permutation is ignored by both the encoder and decoder after being used to shuffle the indices, the samples being treated thereafter as marginally identically distributed.
The key steps are as below:

\begin{enumerate}
    \item Generate a uniform random permutation $\Pi$ of $[N]$ and, given $\Pi = \pi$, generate shared randomness $\bar{U}^N$, where $\bar{U}_{\pi(i)} \sim P_{Y \mid X}(\cdot \mid i)$.
    \item At the encoder, observe $X = x$ and select $K$ according to the target distribution $P_{K \mid X, \bar{U}^N}(k \mid x, u^N) = \Pr[\Pi(x) = k \mid \bar{U}^N = u^N]$.
    Note that $K$ is conditionally independent of $\Pi$ given $\bar{U}^N$ by construction.
    \item Losslessly encode $K$ into the message $M$ and send it to the decoder to output $Y = \bar{U}_K$.
\end{enumerate}

\begin{figure}[tb]
    \centering
    \begin{tikzpicture}[
    >=Latex,
    font=\footnotesize,
    box/.style={
        draw,
        semithick,
        minimum width=2cm,
        minimum height=0.95cm,
        fill=gray!8,
        align=center
    },
    smallbox/.style={
        draw,
        semithick,
        minimum width=1.2cm,
        minimum height=0.6cm,
        fill=gray!8,
        align=center
    },
    sample/.style={
        draw,
        semithick,
        minimum width=3mm,
        minimum height=3mm,
        inner sep=0pt
    }
]

\definecolor{s0}{RGB}{49,54,149}
\definecolor{s1}{RGB}{69,117,180}
\definecolor{s2}{RGB}{116,173,209}
\definecolor{s3}{RGB}{171,217,233}
\definecolor{s4}{RGB}{253,174,97}
\definecolor{s5}{RGB}{215,48,39}

\node[box] (enc) {$P_{K \mid X,\bar{U}^N}$};
\node[box, right=40mm of enc] (dec) {$P_{K \mid \bar{U}^N}$};
\coordinate (mid) at ($(enc)!0.5!(dec)$);

\node[left=8mm of enc] (source) {$X$};
\draw[->, semithick] (source) -- (enc);

\draw[->, semithick] ([yshift=-3mm]enc.east) -- node[below] {Message $M$} ([yshift=-3mm]dec.west);

\node[right=8mm of dec] (output) {$K \sim P_{K \mid X,\bar{U}^N}(\cdot \mid X,\bar{U}^N)$};
\node[below=0.5cm of output.west, anchor=west] {$Y = \bar{U}_K \sim P_{Y \mid X}(\cdot \mid X)$};
\draw[->, semithick] (dec) -- (output.west);

\node[below=0.0mm of enc, xshift=-0.3cm] {Encoder};
\node[below=0.0mm of dec, xshift=0.3cm] {Decoder};

\node[smallbox, below=8mm of mid] (exp) {$S^N$};

\draw[->, semithick] ([yshift=1mm]exp.west) -| ([xshift=0.5cm]enc.south);
\draw[->, semithick] ([yshift=1mm]exp.east) node[right, yshift=-2mm, xshift=-0.5mm] {Exponential race} -| ([xshift=-0.5cm]dec.south);

\begin{scope}[shift={([yshift=3mm]mid.center)}]
    \node[sample, fill=s0] (t1) at (-10mm, 10mm) {};
    \node[sample, fill=s1] (t2) at (-6mm, 10mm) {};
    \node[sample, fill=s2] (t3) at (-2mm, 10mm) {};
    \node[sample, fill=s3] (t4) at (2mm, 10mm) {};
    \node[sample, fill=s4] (t5) at (6mm, 10mm) {};
    \node[sample, fill=s5] (t6) at (10mm, 10mm) {};
    
    \node[sample, fill=s2] (b1) at (-10mm, 0mm) {};
    \node[sample, fill=s4] (b2) at (-6mm, 0mm) {};
    \node[sample, fill=s0] (b3) at (-2mm, 0mm) {};
    \node[sample, fill=s5] (b4) at (2mm, 0mm) {};
    \node[sample, fill=s1] (b5) at (6mm, 0mm) {};
    \node[sample, fill=s3] (b6) at (10mm, 0mm) {};

    \draw[->] (t2) to[out=-90,in=90] node [xshift=1cm, yshift=0.5mm] {$\Pi$}(b5);
    \draw[->] (t5) to[out=-90,in=90] (b2);

    \node[
        draw,
        dashed,
        rounded corners=2pt,
        inner sep=3pt,
        fit=(b1)(b6)
    ] (pool) {};
\end{scope}

\draw[->, semithick] (pool.west) node[above, xshift=-2.5mm] (samples label) {$\bar{U}^N$} -- (enc.east |- pool.west);
\draw[->, semithick] (pool.east) -- (dec.west |- pool.east);
\node[right=1pt of t6] {Unpermuted samples};

\end{tikzpicture}
    \caption{Permuted scheme for one-shot channel simulation. After being used to scramble the random samples, the permutation $\Pi$ does not play any further role in the encoder's target distribution $P_{K \mid X, \bar{U}^N}(\cdot \mid X, \bar{U}^N)$ or the decoder's prior $P_{K \mid \bar{U}^N}(\cdot \mid \bar{U}^N)$.}
    \label{fig:permuted_diagram}
    \vspace{-1em}
\end{figure}

The problem is now reduced to communicating a sample from $P_{K \mid X, \bar{U}^N}(\cdot \mid x, u^N)$.
We highlight that unlike most prior channel simulation schemes which generate a shared sequence of independent samples and afterwards select an output from among them, our method instead generates one sample from each potential target distribution.
Crucially, we then permute the samples before applying sample selection to reduce the communication rate.
This latent permutation increases randomness in the output given $X$ and $\bar{U}^N$ to facilitate compression.
A concrete example on a BI-AWGN channel, motivating the need for the random permutation, is provided in \cref{sec:biawgn_example}.
Next, and before proceeding to the rate analysis, we provide a guarantee that the permuted scheme performs exact channel simulation.
The proof is found in \cref{sec:correctness_proof}.

\begin{proposition}\label{prop:correctness}
The permuted scheme exactly simulates $P_{Y \mid X}$.
\end{proposition}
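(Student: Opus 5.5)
The plan is to show that, conditioned on $X = x$, the pair $(K, \bar{U}^N)$ has the same joint law as $(\Pi(x), \bar{U}^N)$. Once this holds, $Y = \bar{U}_K$ has the same law as $\bar{U}_{\Pi(x)}$, and by construction $\bar{U}_{\Pi(x)} \sim P_{Y \mid X}(\cdot \mid x)$ regardless of $\Pi$. Throughout I take the shared randomness $(\Pi, \bar{U}^N)$ to be independent of $X$. The posterior $\Pr[\Pi(x) = k \mid \bar{U}^N = u^N]$ is then computed from the prior joint law of $(\Pi, \bar{U}^N)$ alone, and I will say explicitly that it does not depend on the realized $X$.

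First I will write the joint density of $(\Pi, \bar{U}^N)$ explicitly:
\[
p(\pi, u^N) = \frac{1}{N!} \prod_{i=1}^N p_{Y \mid X}\bigl(u_{\pi(i)} \mid i\bigr).
\]
The marginal $p(u^N)$ is obtained by summing over $\pi$. For $p(u^N) > 0$ this gives a well-defined posterior
\[
q_x(k \mid u^N) = \sum_{\pi : \pi(x) = k} p(\pi, u^N) / p(u^N).
\]
This is a valid distribution over $k \in [N]$, since the events $\{\pi(x) = k\}$ partition the set of permutations. So the encoder's rule is well-defined almost surely.

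Next, fix $x$ and a measurable set $A \subseteq \mc{Y}$. I will use the facts that $K$ is drawn from $q_x(\cdot \mid \bar{U}^N)$ using private randomness, and that $K$ is conditionally independent of $\Pi$ given $\bar{U}^N$. These give
\[
\Pr[\bar{U}_K \in A \mid X = x] = \E\Bigl[\sum_k q_x(k \mid \bar{U}^N)\, \1\{\bar{U}_k \in A\}\Bigr].
\]
Since $\1\{\bar{U}_k \in A\}$ is a function of $\bar{U}^N$, the tower property lets me replace $q_x(k \mid \bar{U}^N)$ by $\1\{\Pi(x) = k\}$ inside the expectation. The expression becomes $\E[\1\{\bar{U}_{\Pi(x)} \in A\}]$. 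Conditioning on $\Pi = \pi$, the sample $\bar{U}_{\pi(x)}$ has law $P_{Y \mid X}(\cdot \mid x)$ for every $\pi$, so averaging over $\pi$ gives exactly $P_{Y \mid X}(A \mid x)$.

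The only delicate step is the tower-property substitution. Because $Y$ is continuous, the conditional probability given $\bar{U}^N$ has to be handled through densities. I would state the identity directly as an integral against $p(u^N)$. Then $q_x(k \mid u^N)\, p(u^N) = \sum_{\pi:\pi(x)=k} p(\pi,u^N)$ cancels the denominator, and the null set where $p(u^N) = 0$ contributes nothing. After that cancellation, the rest is the routine change of summation order described above.
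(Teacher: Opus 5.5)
Your proposal is correct and follows essentially the same route as the paper's proof in Appendix~\ref{sec:correctness_proof}. There, too, the posterior $\Pr[\Pi(x)=k \mid \bar{U}^N=u^N]$ is integrated against $p_{\bar{U}^N}(u^N)$ so that Bayes' rule cancels the denominator, leaving $\frac{1}{N}\sum_k$ of the law of $\bar{U}_k$ given $\Pi(x)=k$, which equals $P_{Y\mid X}(\cdot\mid x)$ for every $k$; your tower-property phrasing is a cleaner version of the same computation that avoids the paper's Dirac-delta densities and its factorization step, since it needs only that $\bar{U}_{\pi(x)} \sim P_{Y\mid X}(\cdot\mid x)$ for each fixed $\pi$.
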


To efficiently send the index $K \sim P_{K \mid X, \bar{U}^N}(\cdot \mid x, u^N)$ to the decoder and establish the communication cost, we can use the well-established strategy of coordinated sampling with a shared exponential race, also known as EFR \citep{li2018}.
In particular, a sequence $S^N$ of iid $\operatorname{Exp}(1)$ random variables is generated and shared between the encoder and the decoder.
After sorting $S^N$ to take into account the decoder's knowledge of the prior $P_{K \mid \bar{U}^N}(\cdot \mid u^N)$, $K$ is chosen to be the winning index of the exponential race, and its position $\bar{K}$ in the \emph{sorted} list is communicated using a suitable lossless entropy code.
This approach is detailed in \cref{alg:permuted_scheme} and visualized in \cref{fig:permuted_diagram}.
The upper bound on the expected length achieved by EFR \citep[Theorem~14]{li2024a} then applies directly to the permuted scheme, giving us the following one-shot achievability result.
A proof is given in \cref{sec:upper_bound_proof}.

\begin{theorem}\label{thm:cost_upper_bound}
The index transmitted to the decoder as part of the permuted scheme satisfies
\begin{IEEEeqnarray}{c}
    \E[\log \bar{K}] \leq I(X;Y \mid \bar{U}^N) + 1
\end{IEEEeqnarray}
and achieves an expected message length in bits of
\begin{IEEEeqnarray}{c}
    \E[\ell(M)] \leq I(X;Y \mid \bar{U}^N) + \log(I(X;Y \mid \bar{U}^N) + 2) + 3. \label{eqn:coding_length_bound}
\end{IEEEeqnarray}
\end{theorem}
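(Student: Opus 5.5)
The plan is to reduce the statement to the known guarantee for the exponential functional representation, \citep[Theorem~14]{li2024a}, applied conditionally on the shared sample pool $\bar{U}^N = u^N$. For a fixed realization $u^N$, the encoder must communicate $K \sim P_{K \mid X, \bar{U}^N}(\cdot \mid x, u^N)$. The decoder knows the prior $P_{K \mid \bar{U}^N}(\cdot \mid u^N)$, and the exponential race with sorting by this prior is precisely EFR for the discrete "channel" $X \to K$ under the conditional law given $\bar{U}^N = u^N$. Applying the EFR bound pointwise in $u^N$ gives
\[
\E[\log \bar{K} \mid \bar{U}^N = u^N] \leq I(X; K \mid \bar{U}^N = u^N) + 1 .
\]
Averaging over $u^N$ then gives $\E[\log \bar{K}] \leq I(X;K \mid \bar{U}^N) + 1$. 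Here I use that $S^N$ is independent of $(X, \Pi, \bar{U}^N)$, so that conditioning on $u^N$ leaves the race distributed exactly as EFR requires.

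The second step replaces $I(X;K \mid \bar{U}^N)$ with $I(X;Y \mid \bar{U}^N)$. Since $Y = \bar{U}_K$ is a function of $(K, \bar{U}^N)$, the data-processing inequality gives $I(X;Y \mid \bar{U}^N) \leq I(X;K \mid \bar{U}^N)$. This is the wrong direction, so I need equality. The natural argument is that, given $\bar{U}^N = u^N$, the map $k \mapsto u_k$ is injective almost surely: the samples are drawn from densities, so ties occur with probability zero. Hence $K$ is recoverable from $(Y, \bar{U}^N)$ almost surely, and the two conditional mutual informations coincide. I expect this to be the main technical obstacle. It requires care with continuous $Y$ and with conditioning on a continuous $\bar{U}^N$, and I would justify it by writing $I(X;K \mid \bar{U}^N)$ and $I(X;Y\mid\bar{U}^N)$ via a.s.-equal sigma-algebras $\sigma(K,\bar{U}^N)=\sigma(Y,\bar{U}^N)$ modulo null sets. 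If ties have positive probability, for instance when the conditional laws have atoms, one would instead define $Y$'s information through $K$ directly.

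For the message-length bound, I would encode $\bar{K}$ with a prefix-free code for positive integers whose length is roughly $\log k + \log(\log k + 1) + O(1)$. One option is the Elias-delta code; another, following \citet{li2024a}, is a Zipf-type distribution $q(k) \propto k^{-(1+1/\lambda)}$ with $\lambda$ tuned to $\E[\log\bar{K}]$. Taking expectations and applying Jensen's inequality to the concave $\log(\cdot)$ term gives $\E[\ell(M)] \leq \E[\log \bar K] + \log(\E[\log\bar K]+1) + 2$. Substituting the first bound yields \eqref{eqn:coding_length_bound}. Alternatively, since \citep[Theorem~14]{li2024a} already states this combined length bound for EFR, I would invoke it conditionally on $u^N$, average, and then use the concavity of $t \mapsto t + \log(t+2)$ together with Jensen's inequality to move the expectation over $\bar{U}^N$ inside.
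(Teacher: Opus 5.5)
Your first inequality follows the paper's argument. The paper conditions on $(X,\bar{U}^N)$ and averages the KL bound. Your version, which conditions on the pool only, is equivalent, because $X$ is independent of $\bar{U}^N$ and the decoder's prior is the $K$-marginal given $u^N$. You then use, as the paper does, the almost-sure injectivity of $k \mapsto u_k$ to replace $I(X;K \mid \bar{U}^N)$ by $I(X;Y \mid \bar{U}^N)$.

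Your first version of the length step does not go through as written. No prefix-free code can have length at most $\log k + \log(\log k + 1) + c$ for all $k$: that would give $\sum_k 2^{-\ell(k)} \geq 2^{-c} \sum_k 1/(k(\log k + 1)) = \infty$, violating Kraft. Elias-delta in particular has length $\lfloor \log k \rfloor + 2\lfloor \log(\lfloor \log k \rfloor + 1) \rfloor + 1$. With $I = I(X;Y \mid \bar{U}^N)$, your Jensen argument then only gives $I + 2\log(I+2) + 2$, which is weaker than the claim whenever $I > 0$.

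The Zipf code is the right tool, but the bound does not come from Jensen. A Shannon code for $\mathrm{Zipf}(1 + 1/\lambda)$ has length at most $(1 + 1/\lambda)\log k + \log \zeta(1 + 1/\lambda) + 1$, which is linear in $\log k$. Combined with $\zeta(s) \leq s/(s-1)$, this gives $\E[\ell(M)] \leq \lambda + \log(\lambda + 1) + 2$ for any fixed $\lambda \geq \E[\log \bar{K}]$. This is the Li--El Gamal cross-entropy bound the paper invokes. The paper takes the fixed choice $\lambda = I + 1$ supplied by the first part, which yields $I + \log(I+2) + 3$ directly.

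Your second alternative is valid and genuinely different. You apply the EFR length bound conditionally on $u^N$ with a pool-dependent Zipf parameter, then use concavity of $t \mapsto t + \log(t+2)$. This code adapts to each realization of the pool, so both ends must evaluate $I(X;K \mid \bar{U}^N = u^N)$, i.e.\ all posteriors, every time. The paper's single global code needs only the averaged bound from the first part and no Jensen step.
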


In \cref{sec:lower_bound}, we provide context on the differences between the upper bound in \cref{thm:cost_upper_bound} and similar achievability results for other channel simulation algorithms such as PFR and GRS, especially the role of the conditional mutual information $I(X;Y \mid \bar{U}^N)$, and prove a related lower bound.

\subsection{Computing the Posterior Distribution}\label{sec:permuted_scheme_impl}

To run \cref{alg:permuted_scheme} in practice, the encoder needs to compute the posterior $P_{K \mid X, \bar{U}^N}(k \mid x, u^N)$ for all $k \in [N]$, while the decoder needs to calculate $P_{K \mid \bar{U}^N}(k \mid u^N)$.
From Bayes' rule, we have
\begin{IEEEeqnarray}{c}
    P_{K \mid X, \bar{U}^N}(k \mid x, u^N) = \frac{p_{\bar{U}^N}(u^N \mid \Pi(x) = k)}{N p_{\bar{U}^N}(u^N)}. \label{eqn:posterior_distribution}
\end{IEEEeqnarray}
The prior $P_{K \mid \bar{U}^N}(k \mid u^N)$ can in turn be found from \eqref{eqn:posterior_distribution}, and evaluates to a constant if $X$ is uniform.
However, computing the terms on the right-hand side is difficult due to the introduction of the latent permutation $\Pi$ and the dependence between the shared random samples, distinct from existing channel simulation schemes where these are independent.
To derive an exact expression for the posterior, we introduce the $N \times N$ matrix of likelihoods $L(u^N)$, where $L_{ij} = p_{Y \mid X}(u_i \mid j)$.
Then,
\begin{IEEEeqnarray}{c}
    P_{K \mid X, \bar{U}^N}(k \mid x, u^N) = \frac{p_{Y \mid X}(u_k \mid x) \operatorname{perm}(L(u^N)_{k,x})}{\operatorname{perm}(L(u^N))} \label{eqn:exact_posterior}
\end{IEEEeqnarray}
where $\operatorname{perm}$ denotes the matrix permanent and $L(u^N)_{k,x}$ is formed by removing row $k$ and column $x$ from $L(u^N)$.
While computing the permanent is \#P-complete, it remains fast for small $N$ up to around 16.
This is the method we adopt for the CDP experiment in \cref{sec:experiment_cdp}, which requires exact simulation to maintain privacy.
For large $N$, as is the case for our VQ-VAE image compression experiment in \cref{sec:experiment_vqvae} which uses $N=256$, we turn to an approximation based on applying Sinkhorn-Knopp iterations to the likelihood matrix, which has previously been adopted in machine learning for approximating posteriors over latent permutations \citep{mena2018,mena2020}.
Although this sacrifices the exactness of the simulation, we find it sufficient to achieve competitive end-to-end rate-distortion performance with our stochastic VQ-VAE framework.
Additional details concerning the approximation and its empirical performance are provided in \cref{sec:approx_posterior}.
We emphasize that while large \emph{alphabets} require an approximation in our scheme, exact simulation remains practical for large \emph{blocklengths} if $N$ is small enough, as we will demonstrate in \cref{sec:experiment_cdp}. 

\begin{algorithm}[bt]
\caption{One-Shot Discrete to Continuous Channel Simulation}\label{alg:permuted_scheme}
\begin{algorithmic}\small
\Function{Encode}{$x, \mathfrak{G}, \mathtt{enc}$}
    \State \textbf{Input:} Channel input $x$, PRNG $\mathfrak{G}$, lossless encoder $\mathtt{enc}$. \textbf{Output:} Message $M$.
    \State Generate $\Pi$, a uniform random permutation of $[N]$, using $\mathfrak{G}$.
    \State Given $\Pi = \pi$, sample $\bar{U}_{\pi(i)} \sim P_{Y \mid X}(\cdot \mid i)$ using $\mathfrak{G}$, along with $S^N \sim \operatorname{Exp}(1)$ iid.
    \State Given $\bar{U}^N = u^N$, $p_i \gets P_{K \mid X, \bar{U}^N}(i \mid x, u^N)$, $q_i \gets P_{K \mid \bar{U}^N}(i \mid u^N)$.
    \Comment See \eqref{eqn:exact_posterior}.
    \State Compute a permutation $\sigma$ such that $S_{\sigma(1)} / q_{\sigma(1)}  \leq S_{\sigma(2)} / q_{\sigma(2)} \leq \cdots \leq S_{\sigma(N)} / q_{\sigma(N)}$.
    \State $K \gets \argmin_{1 \leq i \leq N} S_i / p_i$, $\bar{K} \gets \sigma^{-1}(K)$, $M \gets \mathtt{enc}(\bar{K})$
    \State \Return $M$ 
\EndFunction
\vspace{1ex}
\Function{Decode}{$M, \mathfrak{G}, \mathtt{dec}$}
    \State \textbf{Input:} Message $M$, PRNG $\mathfrak{G}$, lossless decoder $\mathtt{dec}$. \textbf{Output:} Channel output $Y$.
    \State Generate $\Pi$, a uniform random permutation of $[N]$, using $\mathfrak{G}$.
    \State Given $\Pi = \pi$, sample $\bar{U}_{\pi(i)} \sim P_{Y \mid X}(\cdot \mid i)$ using $\mathfrak{G}$, along with $S^N \sim \operatorname{Exp}(1)$ iid.
    \State Given $\bar{U}^N = u^N$, $q_i \gets P_{K \mid \bar{U}^N}(i \mid u^N)$.
    \Comment See \eqref{eqn:exact_posterior}.
    \State Compute a permutation $\sigma$ such that $S_{\sigma(1)} / q_{\sigma(1)}  \leq S_{\sigma(2)} / q_{\sigma(2)} \leq \cdots \leq S_{\sigma(N)} / q_{\sigma(N)}$.
    \State $\bar{K} \gets \mathtt{dec}(M)$, $K \gets \sigma(\bar{K})$, $Y \gets \bar{U}_K$
    \State \Return $Y$
\EndFunction
\end{algorithmic}
\end{algorithm}

\subsection{Communication Cost with Multiple Samples per Mode}\label{sec:asymptotic_result}

A natural question following from the theoretical discussion in \cref{sec:oneshot_permuted} is whether the structure of the shared randomness in the permuted scheme can be modified to recover the mutual information $I(X;Y)$ in place of $I(X;Y \mid \bar{U}^N)$.
Since
\begin{IEEEeqnarray}{c}
    I(X;Y \mid \bar{U}^N) = I(X;Y) + I(X; \bar{U}^N \mid Y),
\end{IEEEeqnarray}
approaching $I(X;Y)$ implies reducing the dependence between $X$ and the set of rejected samples given $Y = \bar{U}_K$.
As this dependence arises due to the finite size of the sample pool, it is reasonable to expect that increasing the total number of samples beyond $N$ may serve to reduce the size of the penalty.
Following this idea, we now consider a generalized setting with $m > 1$ iid samples per mode; as $m$ becomes large, knowing $X$ and $Y$ reveals relatively less information about the remaining $Nm - 1$ samples.
The target distribution for the index $K_m \in [Nm]$ becomes
\begin{IEEEeqnarray}{c}
    P_{K_m \mid X, \bar{U}^{Nm}}(k \mid x, u^{Nm}) = \frac{1}{m} \Pr[k \in \Pi(x) \mid \bar{U}^{Nm} = u^{Nm}] \label{eqn:multisample_target_distribution}
\end{IEEEeqnarray}
where $\Pi$ is now a uniform random mapping from each mode $x$ to a set $\Pi(x)$ containing the $m$ unique indices assigned to it.
First, we show that this scheme also performs exact channel simulation, with the proof following in \cref{sec:multi_sample_correctness_proof}.

\begin{proposition}\label{prop:multi_sample_correctness}
The multi-sample permuted scheme exactly simulates $P_{Y \mid X}$.
\end{proposition}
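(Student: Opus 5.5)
The plan is to show that, for every fixed $x \in [N]$ and every measurable $A \subseteq \mc{Y}$, $\Pr[\bar{U}_{K_m} \in A \mid X = x] = P_{Y \mid X}(A \mid x)$. We first set up the joint law of the latent mapping and the samples. $\Pi$ is uniform over all assignments of $[Nm]$ into $N$ disjoint blocks $\Pi(1), \ldots, \Pi(N)$ of size $m$. Given $\Pi = \pi$, the samples are independent with $\bar{U}_j \sim P_{Y \mid X}(\cdot \mid i)$ for $j \in \pi(i)$. The pair $(\Pi, \bar{U}^{Nm})$ is generated independently of $X$, so conditioning on $X = x$ leaves its law unchanged. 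This independence is what lets the encoder's target \eqref{eqn:multisample_target_distribution} be read as a genuine posterior computed under the prior of $\Pi$.

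The key step is to introduce an auxiliary random index $J$. Conditionally on $\Pi$ and $\bar{U}^{Nm}$, we draw $J$ uniformly from $\Pi(x)$. Since $\Pi(x)$ is a uniformly random $m$-subset and $J$ is uniform within it, $J$ is uniform on $[Nm]$ and independent of nothing we need. More importantly, conditionally on $\Pi = \pi$ and $J = j \in \pi(x)$, the sample $\bar{U}_j$ has law $P_{Y \mid X}(\cdot \mid x)$. It is independent of $J$ given $\Pi$, because all samples in block $\pi(x)$ are iid from mode $x$. Hence $\bar{U}_J \sim P_{Y \mid X}(\cdot \mid x)$.

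Next we compute the conditional law of $J$ given $\bar{U}^{Nm} = u^{Nm}$ by marginalizing over $\Pi$:
\[
\Pr[J = k \mid \bar{U}^{Nm} = u^{Nm}] = \E\!\left[\tfrac{1}{m}\1\{k \in \Pi(x)\} \,\middle|\, \bar{U}^{Nm} = u^{Nm}\right] = \tfrac{1}{m}\Pr[k \in \Pi(x) \mid \bar{U}^{Nm} = u^{Nm}].
\]
This is exactly the target \eqref{eqn:multisample_target_distribution}. It is also a valid distribution on $[Nm]$, since $\sum_k \1\{k \in \Pi(x)\} = m$. The encoder samples $K_m$ from this distribution, which depends only on $(x, u^{Nm})$, and the EFR step reproduces the target exactly. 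Therefore $(K_m, \bar{U}^{Nm})$ has the same joint law as $(J, \bar{U}^{Nm})$ given $X = x$. Since $Y = \bar{U}_{K_m}$ is a fixed function of this pair, $Y \overset{d}{=} \bar{U}_J \sim P_{Y \mid X}(\cdot \mid x)$.

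The main subtlety is the measure-theoretic bookkeeping. We must make sure the posterior over $\Pi$ given a continuous $\bar{U}^{Nm}$ is well defined. This is handled by working with densities, as in \eqref{eqn:posterior_distribution}, or by writing the argument as the identity $\E[\1\{\bar{U}_{K_m} \in A\}] = \E[\sum_k \Pr[K_m = k \mid \bar{U}^{Nm}]\1\{\bar{U}_k \in A\}]$ and then applying the tower property. The case $m=1$ recovers \cref{prop:correctness}, which serves as a consistency check.
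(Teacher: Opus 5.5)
Your proof is correct, but it takes a different route from the paper. The paper's proof is a direct computation. It writes the conditional output density as $\frac{1}{m}\sum_k \delta(y-u_k)\Pr[k\in\Pi(1)\mid \bar{U}^{Nm}=u^{Nm}]$. It then applies Bayes' rule with $\Pr[k\in\Pi(1)]=1/N$ to rewrite the posterior as $p_{\bar{U}^{Nm}}(u^{Nm}\mid k\in\Pi(1))/(N p_{\bar{U}^{Nm}}(u^{Nm}))$, so the marginal density cancels. Finally it integrates out the other samples term by term, and each of the $Nm$ terms contributes $p_{Y\mid X}(y\mid 1)/(Nm)$.

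You instead argue by coupling. The auxiliary index $J$, uniform on $\Pi(x)$, trivially gives $\bar{U}_J\sim P_{Y\mid X}(\cdot\mid x)$. Its posterior given $\bar{U}^{Nm}$ is exactly the target \eqref{eqn:multisample_target_distribution}, so $(K_m,\bar{U}^{Nm})$ and $(J,\bar{U}^{Nm})$ have the same law.

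Both arguments rest on the same fact: given $k\in\Pi(x)$, $\bar{U}_k$ is a fresh draw from mode $x$. Your version buys several things.
\begin{itemize}
\item It avoids Dirac-delta manipulations and density factorizations.
\item It holds for any version of the conditional probability. Such a version always exists here because $\Pi$ takes finitely many values, so the measure-theoretic worry you raise is milder than you suggest.
\item It shows that neither densities nor the uniformity of $\Pi$ is needed. Any prior on $\Pi$ works, provided the encoder uses the posterior under that same prior; the naive scheme is the case of a deterministic $\Pi$.
\end{itemize}
The paper's computation, in exchange, is fully explicit and stays in the density language of \eqref{eqn:posterior_distribution} that the implementation uses.

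Two minor points. Your observation that $J$ is uniform on $[Nm]$ is true but unused. The phrase ``independent of nothing we need'' should be replaced by the property you actually rely on: $J$ depends only on $\Pi$ (and auxiliary randomness), so it is conditionally independent of $\bar{U}^{Nm}$ given $\Pi$.
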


If we again use EFR sampling to communicate $K_m$, \cref{thm:cost_upper_bound} also applies to the multi-sample extension and shows that the communication cost is determined by $I(X; Y \mid \bar{U}^{Nm})$.
Summarized below and in \cref{sec:asymptotic_proof}, we show that this approaches $I(X;Y)$ as $m$ increases.
For technical reasons we restrict our analysis to the binary-input case where $N=2$ and leave the general case to future work, though we expect the result to extend to larger $N$.
Note also that in the polar coding extension and experiments to follow in \cref{sec:polar,sec:experiments}, we focus on $m=1$ for simplicity, which we find already sufficient to obtain competitive compression rates.

\begin{theorem}\label{thm:asymptotic}
Let $\E[\ell_m(M)]$ be the expected message length when $m$ samples are taken per mode.
When $N=2$, the expected message length using the multi-sample permuted scheme satisfies
\begin{IEEEeqnarray}{c}
    \limsup_{m \to \infty} \E[\ell_m(M)] \leq I(X;Y) + \log(I(X;Y) + 2) + 3.
\end{IEEEeqnarray}
\end{theorem}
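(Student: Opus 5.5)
By \cref{thm:cost_upper_bound}, which applies verbatim to the multi-sample scheme with $\bar{U}^{Nm}$ in place of $\bar{U}^N$, we have
\[
\E[\ell_m(M)] \leq I_m + \log(I_m + 2) + 3, \qquad I_m := I(X;Y \mid \bar{U}^{2m}).
\]
The right-hand side is continuous and nondecreasing in $I_m$. It therefore suffices to show $\limsup_{m\to\infty} I_m \leq I(X;Y)$. Using the chain rule identity from \cref{sec:asymptotic_result}, $I_m = I(X;Y) + I(X;\bar{U}^{2m}\mid Y)$, so the whole task reduces to proving that the penalty term $I(X;\bar{U}^{2m}\mid Y)$ tends to zero.

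The plan is to write $I_m = H(X\mid \bar{U}^{2m}) - H(X \mid Y, \bar{U}^{2m})$ and control each entropy separately. First, note that the pool is a uniformly random arrangement of $m$ iid samples from $P_0 := P_{Y\mid X}(\cdot\mid 1)$ and $m$ from $P_1 := P_{Y\mid X}(\cdot\mid 2)$. Its law does not depend on $X$, and $X$ is independent of $\bar{U}^{2m}$ by construction. Hence $H(X \mid \bar{U}^{2m}) = H(X)$, and it remains to show
\[
\liminf_{m\to\infty} H(X \mid Y, \bar{U}^{2m}) \geq H(X\mid Y).
\]
Since $(X,Y)$ has the correct joint law by \cref{prop:multi_sample_correctness}, we can compute $P_{X \mid Y, \bar{U}^{2m}}$ by Bayes. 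Given $X=x$, $Y$ is one of the $m$ samples of mode $x$. Conditioning on $Y = y$ and $\bar{U}^{2m}=u^{2m}$, with $y=u_k$, the posterior ratio is
\[
\frac{P(X=1\mid\cdot)}{P(X=2\mid\cdot)} = \frac{P_X(1)\, p_0(y)\, e_1(u^{2m}\setminus y)}{P_X(2)\, p_1(y)\, e_2(u^{2m}\setminus y)},
\]
where $e_x(\cdot)$ is the likelihood that the remaining $2m-1$ samples consist of $m-1$ from mode $x$ and $m$ from the other mode. This is the binary-input analogue of \eqref{eqn:exact_posterior}. The ratio $e_1/e_2$ is a ratio of elementary symmetric-type sums (permanents of a $2\times(2m-1)$ block structure) in the likelihood ratios $r_j = p_0(u_j)/p_1(u_j)$.

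The key step, and the main obstacle, is to show that $e_1/e_2 \to 1$ in probability (or at least in a sense sufficient for convergence of the conditional entropy) as $m\to\infty$. The intuition is that removing one sample out of $2m$ shifts the composition by one, and the resulting perturbation should be negligible. Concretely, $e_1/e_2$ equals $\E[r_J^{-1}\cdots]$-type expectations over a uniformly random balanced split. I would express it as the conditional expectation, under the uniform split posterior, of $r_J$ for a random element $J$ added or removed. I would then use a law-of-large-numbers or concentration argument for sampling without replacement (e.g.\ Hoeffding for finite populations, after truncating $r_j$ to a compact range, with the truncation error controlled by $\E_{P_1}[r]=1$). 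This should show that the posterior over splits concentrates so that the odds correction converges to the constant $1$.

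Given this, $P_{X\mid Y,\bar{U}^{2m}} \to P_{X\mid Y}$ in probability. Because the binary entropy function is bounded and continuous, dominated convergence gives $H(X\mid Y,\bar{U}^{2m}) \to H(X\mid Y)$, hence $I_m \to I(X;Y)$, and the theorem follows by continuity of the bound. If the concentration step is too delicate, a fallback is to bound $I(X;\bar{U}^{2m}\mid Y)$ directly by showing that the conditional law of $\bar{U}^{2m}$ given $(X,Y)$ is close in total variation to that given $Y$ alone, using exchangeability. For this I would use a de Finetti / finite-exchangeability estimate of order $O(1/m)$, which suffices since $X$ is binary.
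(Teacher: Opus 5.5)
Your outer reduction is sound and matches the paper's. You apply the one-shot bound of \cref{thm:cost_upper_bound} with $\bar{U}^{2m}$, use $H(X\mid\bar U^{2m})=H(X)$, write the posterior odds of $X$ given $(Y,\bar U^{2m})$ as the odds given $Y$ times a correction $\Lambda_m=e_1/e_2$, and finish with bounded convergence through $h_B$.

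The gap is the step you flag as the main obstacle, $\Lambda_m\to1$. This is essentially all of the technical content of the paper's proof, and the tools you name do not deliver it.

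The identity you are reaching for is $\Lambda_m^{-1}=e_2/e_1=\E_{B\sim w}[\frac1m\sum_{j\notin B}r_j]$. Here $B$ ranges over $(m-1)$-subsets of the other $2m-1$ samples, with $w(B)\propto\prod_{i\in B}r_i$. This is an average under the \emph{posterior} over splits, which is an exponentially tilted measure on subsets, so finite-population Hoeffding does not apply. Under a genuinely uniform split, the average would concentrate at the pool mean of $r$. That mean is roughly $1+\frac12\chi^2(P_0\Vert P_1)$, possibly infinite, and not $1$. Also, when the modes overlap the posterior does not concentrate on any single split, so ``the posterior concentrates'' is not the mechanism either.

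Your fallback is not an independent route. Given $Y$, the conditional laws of the pool under $X=1$ and $X=2$ differ only through the $(m-1,m)$ versus $(m,m-1)$ laws of the remaining $2m-1$ samples. Their likelihood ratio is exactly $\Lambda_m$, so their total variation distance going to zero is the same claim. Moreover, finite de Finetti bounds control $k$-coordinate marginals with error $O(k/m)$, not the full $2m$-sample pool.

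\textbf{How the paper closes the step.} It normalises each sample to $\theta_i=p_0(u_i)/(p_0(u_i)+p_1(u_i))$, which gives $\Lambda_m=\Pr[S_k=m-1]/\Pr[S_k=m]$ for a Poisson binomial $S_k$. Each (mode-1, mode-2) pair contributes exactly $\E_{P_0}[\theta]+\E_{P_1}[\theta]=1$ to its mean. The law of the iterated logarithm then places the mean within $O(\sqrt{m\ln\ln m})$ of $m$, uniformly in $k$. The strong law makes the variance of order $m$. Mamatov's local limit theorem then yields $\sup_k\abs{\Lambda_m-1}\to0$ almost surely. Trials with $\theta_i\in\{0,1\}$ are split off as deterministic, which covers partially overlapping supports.

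\textbf{How your own representation could be completed.} It needs one missing idea. Given $X=1$, the other $2m-1$ samples follow exactly the $(m-1,m)$ law that defines $w$. So a split drawn from $w$ has the same joint law with the data as the true split. Hence, under $X=1$, $\E\abs{\Lambda_m^{-1}-1}\le\E\abs{\frac1m\sum_{j=1}^m r(V_j)-1}\to0$ for $V_j$ iid from $P_1$, by the $L^1$ law of large numbers. The case $X=2$ is symmetric, using $1/r$. This gives convergence in probability, which is enough for your bounded-convergence step and avoids the local limit theorem. However, it uses $\E_{P_1}[r]=\E_{P_0}[1/r]=1$, i.e.\ mutual absolute continuity. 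With partial overlap, $\E_{P_1}[r]=P_0(p_1>0)<1$, and the deterministically assigned samples must be accounted for separately.
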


\section{Scaling the Permuted Scheme with Polar Coding}\label{sec:polar}
As mentioned in \cref{sec:introduction}, in certain applications such as the transmission of a $n$-dimensional vector of independent latent variables, it may be necessary to simulate multiple instances of one channel.
Similar to existing channel simulation protocols, the one-shot permuted scheme benefits from the resulting amortization of the excess coding cost over $n$ samples; when considering a source vector $X^n \sim P_X$ iid and $n$ independent copies of the channel $P_{Y \mid X}$, \eqref{eqn:coding_length_bound} gives
\begin{IEEEeqnarray}{c}
    \E[\ell(M)] / n \leq I(X; Y \mid \bar{U}^N) + \log(n I(X;Y \mid \bar{U}^N) + 2) / n + 3 / n
\end{IEEEeqnarray}
which approaches $I(X;Y \mid \bar{U}^N)$ as $n$ becomes large.
However, directly scaling the one-shot algorithm in this way quickly runs into computational bottlenecks due to its unstructured nature; if $X^n$ is a binary source, the size of the likelihood matrix is $2^n$, and even the Sinkhorn-Knopp approximation rapidly becomes infeasible as $n$ increases.
Therefore, the one-shot permuted scheme suffers from the rapid growth in complexity with the source dimension common to existing protocols like PFR and GRS.
This motivates us to adapt the PolarSim algorithm \citep{sriramu2024}, which natively operates on iid copies of binary-output channels in $O(n\log n)$ time, to devise an efficient multi-shot version of the permuted scheme.

Owing to PolarSim's binary-output requirement, which is also inherited by recent extensions \citep{ozyilkan2026}, it is not immediately obvious how to incorporate its framework into our discrete-to-continuous setting.
While we have shown in \cref{sec:oneshot_permuted} that the permuted scheme reduces to communicating the $N$-ary index $K$ after conditioning on the shared random samples, even if $K$ were binary, constructing a new polar code for each instance of the shared randomness would be impractical.
Moreover, the resulting channel instances would not be identically distributed.
In the remainder of this section, we show how to overcome these challenges.
First, the permuted scheme with $N = 2$ can be  reformulated as simulating a binary-symmetric channel (BSC) with side information, which can be solved efficiently by a simple generalization of PolarSim.
Second, we generalize to larger alphabets via multilevel coding (MLC, \citealp{wachsmann1999}).

\subsection{Applying Polar Coding for a Binary Alphabet}\label{sec:polar_coding_n_2}

To begin, we consider $N = 2$ as a special case, where we can evaluate \eqref{eqn:posterior_distribution} directly to get
\begin{IEEEeqnarray}{rCl}
    P_{K \mid X, \bar{U}^2}(1 \mid 1, u^2) & = & P_{K \mid X, \bar{U}^2}(2 \mid 2, u^2) = 1 - \delta(u^2) \\
    P_{K \mid X, \bar{U}^2}(1 \mid 2, u^2) & = & P_{K \mid X, \bar{U}^2}(2 \mid 1, u^2) = \delta(u^2)
\end{IEEEeqnarray}
where we have introduced the state-dependent BSC crossover probability
\begin{IEEEeqnarray}{c}
    \delta(u^2) = \frac{p_{Y \mid X}(u_1 \mid 2) p_{Y \mid X}(u_2 \mid 1)}{p_{Y \mid X}(u_1 \mid 1) p_{Y \mid X}(u_2 \mid 2) + p_{Y \mid X}(u_1 \mid 2) p_{Y \mid X}(u_2 \mid 1)}.
\end{IEEEeqnarray}
For $n$ independent channel uses, 
\begin{IEEEeqnarray}{c}
    P_{K^n \mid X^n, \bar{U}^{2n}}(k^n \mid x^n, u^{2n}) = \prod_{i=1}^{n} \operatorname{BSC}(k_i \mid x_i, \delta(u^2_i)) = \delta(k_i - x_i)(1 - 2 \delta(u_i^2)) + \delta(u_i^2) \IEEEeqnarraynumspace
\end{IEEEeqnarray}
which represents a nonstationary BSC where the crossover probability changes depending on the shared randomness realization at each timestep $i$.
While \citet{ozyilkan2026} recently proposed a generalization of PolarSim to simulate such a time-varying channel, their approach relies on interleaving custom permutations inside the polar transform, introducing additional overhead and requiring a more complicated theoretical analysis.

Instead, we propose viewing the channel as a stationary channel with side information, which is possible due to the specific structure of the shared randomness in the permuted scheme.
Let $w_i = u_i^2$ be the side information at timestep $i$, which is known to both the encoder and the decoder.
We have that $(x_i, w_i)$ are iid draws from the joint distribution $P_X \times P_{\bar{U}^2}$.
Then, following the discussion in \cref{sec:polar_sim}, and in particular \cref{thm:polar_sim}, the modified PolarSim framework which we present there in \cref{alg:polar_sim} can be used to simulate the channel with a communication cost approaching $I(X; Y \mid \bar{U}^2)$ bits per channel use as $n$ grows.

\subsection{Increasing the Alphabet Size via Multilevel Coding}\label{sec:polar_mlc}

For real-world applications like those we will pursue experimentally in \cref{sec:experiments}, it is important to be able to use alphabets larger than $N = 2$.
Fortunately, the binary scheme provides a foundation for efficient simulation at long blocklengths when $N$ is increased, so long as the posterior distribution in \eqref{eqn:posterior_distribution} or its approximation remains tractable.
We base our method on MLC \citep{wachsmann1999}, which is a classical approach for constructing capacity-achieving codes for $2^m$-ary transmission, $m \geq 1$, from existing binary codes.
Beginning with the channel $P_{K \mid X, \bar{U}^N}$, we start by representing $K$ as a binary vector of length $m = \log N$, written as the sequence $B^m$.
Then, $K = \sum_{l=1}^{m} 2^{l-1} B_l$.
Applying the chain rule, $P_{K \mid X, \bar{U}^N}(k \mid x, u^N) = \prod_{l=1}^{m} P_{B_l \mid B^{l-1}, X, \bar{U}^N}(b_l \mid b^{l-1}, x, u^N)$, which decomposes the task of sampling $K$ into $m$ sequential binary sampling steps at each level $1 \leq l \leq m$.

To find the bit-wise conditional distributions, we take the posterior distribution in \eqref{eqn:posterior_distribution} and let $\delta_{jk}(u^N) = P_{K \mid X, \bar{U}^N}(k \mid j, u^N)$ be the crossover probability from $X = j$ to $K = k$.
Next, we divide the set of binary numbers $A = \{ 0, 1 \}^m$ into subsets $A(b^l) = \{ a^m \in A : a^l = b^l \}$.
Then,
\begin{IEEEeqnarray}{c}
    P_{B_l \mid B^{l-1}, X, \bar{U}^N}(b_l \mid b^{l-1}, x, u^N) = \frac{\sum_{k \in A(b^l)} \delta_{x(k+1)}(u^N)}{\sum_{k \in A(b^{l-1})} \delta_{x(k+1)}(u^N)}.
\end{IEEEeqnarray}
We now have a binary-output channel at each level, which can be efficiently simulated for $n$ independent uses using the polar coding method introduced in \cref{sec:polar_coding_n_2}.
For level $l$, the side information at timestep $i$ is $w_i = (u_i^N, b^{l-1}_i)$, and $(x_i, w_i)$ are iid draws from $(P_X \times P_{\bar{U}^N}) P_{B^{l-1} \mid X, \bar{U}^N}$.
Again referring to \cref{sec:polar_sim} and \cref{thm:polar_sim}, for each level and as the blocklength grows, the extended PolarSim scheme in \cref{alg:polar_sim} achieves
\begin{IEEEeqnarray}{c}
    \lim_{n \to \infty} \E[\ell(M_l)] / n = I(X; B_l \mid B^{l-1}, \bar{U}^N)
\end{IEEEeqnarray}
where $M_l$ is the transmitted message at level $l$.
The total cost to communicate $K$ in the limit is then
\begin{IEEEeqnarray}{c}
    \lim_{n \to \infty} \frac{1}{n} \sum_{l=1}^{m} \E[\ell(M_l)] = \sum_{l=1}^{m} I(X; B_l \mid B^{l-1}, \bar{U}^N) = I(X; B^{l} \mid \bar{U}^N) = I(X; Y \mid \bar{U}^N).
\end{IEEEeqnarray}

\section{Experiments}\label{sec:experiments}
\subsection{Variable-Rate Compression with Stochastic VQ-VAEs}\label{sec:experiment_vqvae}
VQ-VAEs \citep{vandenoord2017} are autoencoders which compress high-dimensional data such as images into discrete tokens, and have emerged in recent years as  fundamental components in autoregressive image generation models \citep{esser2021,yu2024}.
Letting $W$ be the input image, a standard VQ-VAE first consists of an encoder network producing a sequence of $n$ latent vectors $z^n = f_{\phi_e}(W) \in \mathbb{R}^{n \times D}$, typically arranged in a grid, where $D$ is the dimension of the latent space.
A vector quantizer then snaps each latent to the nearest codeword from a learned codebook $c^N$, producing
\begin{IEEEeqnarray}{c}
    k_i^\ast = \argmin_{1 \leq k \leq N} \norm{f_{\phi_e}(W)_i - c_k}_2, \ \hat{z}_i = c_{k_i^\ast}. \label{eqn:vq_vae_assignment}
\end{IEEEeqnarray}
Finally, a decoder network takes the quantized latents and outputs a reconstruction $\hat{W} = g_{\phi_d}(\hat{z}^n)$.

Unlike $\beta$-VAEs \citep{higgins2017} which offer inherent rate-distortion flexibility, the rate of a basic VQ-VAE is fixed at approximately $\log N$ bits per latent, as the one-hot posterior over the indices induced by \eqref{eqn:vq_vae_assignment} does not provide a differentiable rate objective.
However, if the deterministic output $\hat{z}_i$ is replaced with a learned Gaussian posterior, thus creating a \emph{stochastic} variant of the VQ-VAE, our permuted scheme can take the discrete indices returned by the argmin and reduce the communication cost below $\log N$ by adding noise to the latents.
To this end, we learn an isotropic Gaussian with mean and variance vectors $\mu_k, \sigma_k^2 \in \mathbb{R}^D$ in place of the fixed codeword $c_k$ for each codebook index.
We note that related stochastic formulations have previously been explored to enhance the training of VQ-VAEs \citep{takida2022,xu2026}, but have not to the best our knowledge been adopted at inference time to reduce the rate.
By adopting the loss-conditional training technique of \citet{dosovitskiy2020} to scale the variances and condition the decoder at inference time based on a scalar input $\lambda$, our approach can train a single VQ-VAE checkpoint to sweep a continuous range of rate-distortion operating points without retraining.

We demonstrate our approach using image compression on CIFAR-10 upscaled to $64 \times 64$, with all models sharing the architecture from \citet{vandenoord2017}.
The input symbols fed to the channel simulator are $X_i = k^\ast_i$ as in \eqref{eqn:vq_vae_assignment} and the alphabet size is $N = 256$.
Each batch of 32 images yields 8192 discrete latents which are jointly compressed by simulating the channel $P_{Y\mid X}(\cdot \mid x) = \mathcal{N}(\mu_x, \operatorname{diag}(\sigma_x^2))$ using the polar-MLC algorithm from \cref{sec:polar}, where the Sinkhorn-Knopp approximation is used for the discrete posterior.
In \cref{fig:rd_plot} we compare against individually trained deterministic VQ-VAEs with smaller codebooks.
By way of ablation, we also evaluate the same variable-rate stochastic models replacing our polar-MLC scheme with coupled IS \citep{phan2024}, closely related to  ordered random coding \citep{theis2022b}.
Our method smoothly achieves rates between $1.6$ and $7.8$ bits/latent while outperforming the IS baseline with varying chunk sizes and incurring only a small distortion penalty over individually trained fixed-rate models.
The experimental setup is discussed in further detail in \cref{sec:img_compression_extra}.

\begin{figure}[tb]
    \centering
    \begin{subfigure}[b]{0.6\linewidth}
        \centering
        \begin{tikzpicture}
\begin{axis}[
    width=8.5cm, 
    height=6.2cm,
    tick label style={font=\footnotesize},
    xlabel={\footnotesize Rate (bits/latent)},
    ylabel={\footnotesize MSE $(\times 10^{-3})$},
    xmax=8,
    grid=major,
    max space between ticks=25,
    axis lines=left,
    axis line style={-},
    enlarge x limits=false,
    legend pos=north east,
    legend cell align={left},
    legend style={font=\scriptsize, inner sep=1.5pt, row sep=0pt, column sep=3pt, legend image post style={scale=0.8}, cells={anchor=west}}
]

\addplot[color=c0, thick] table[x=rate_mean,y expr=\thisrow{mse_dec_mean}*1000,col sep=comma] {plots/polar_data.csv};
\addlegendentry{Polar-MLC}

\addplot[draw=none, name path=lower, forget plot] table[x=rate_mean,y expr=\thisrow{mse_dec_p05}*1000,col sep=comma] {plots/polar_data.csv};

\addplot[draw=none, name path=upper, forget plot] table[x=rate_mean,y expr=\thisrow{mse_dec_p95}*1000,col sep=comma] {plots/polar_data.csv};

\addplot[fill=c0!30, opacity=0.7, forget plot] fill between[of=lower and upper];

\addplot[color=c4, thick] table[x=rate_bnd_mean,y expr=\thisrow{mse_mean}*1000,col sep=comma] {plots/is4_data.csv};
\addlegendentry{IS (chunk size 4)};

\addplot[draw=none, name path=lower, forget plot] table[x=rate_bnd_mean,y expr=\thisrow{mse_p05}*1000,col sep=comma] {plots/is4_data.csv};

\addplot[draw=none, name path=upper, forget plot] table[x=rate_bnd_mean,y expr=\thisrow{mse_p95}*1000,col sep=comma] {plots/is4_data.csv};

\addplot[fill=c4!30, opacity=0.7, forget plot] fill between[of=lower and upper];

\addplot[color=c2, thick] table[x=rate_bnd_mean,y expr=\thisrow{mse_mean}*1000,col sep=comma] {plots/is2_data.csv};
\addlegendentry{IS (chunk size 2)};

\addplot[draw=none, name path=lower, forget plot] table[x=rate_bnd_mean,y expr=\thisrow{mse_p05}*1000,col sep=comma] {plots/is2_data.csv};

\addplot[draw=none, name path=upper, forget plot] table[x=rate_bnd_mean,y expr=\thisrow{mse_p95}*1000,col sep=comma] {plots/is2_data.csv};

\addplot[fill=c2!30, opacity=0.7, forget plot] fill between[of=lower and upper];

\addplot[color=c3, thick] table[x=rate_bnd_mean,y expr=\thisrow{mse_mean}*1000,col sep=comma] {plots/is1_data.csv};
\addlegendentry{IS (chunk size 1)};

\addplot[draw=none, name path=lower, forget plot] table[x=rate_bnd_mean,y expr=\thisrow{mse_p05}*1000,col sep=comma] {plots/is1_data.csv};

\addplot[draw=none, name path=upper, forget plot] table[x=rate_bnd_mean,y expr=\thisrow{mse_p95}*1000,col sep=comma] {plots/is1_data.csv};

\addplot[fill=c3!30, opacity=0.7, forget plot] fill between[of=lower and upper];





\addplot[color=c1, only marks] table[x=rate_mean,y expr=\thisrow{mse_mean}*1000,col sep=comma] {plots/baseline_data.csv};
\addlegendentry{VQ-VAE (7 models)}

\end{axis}
\end{tikzpicture}
    \end{subfigure}
    \hfill
    \begin{subfigure}[b]{0.39\linewidth}
        \centering
        \raisebox{15pt}{
            \includestandalone{diagrams/image_grid}
        }
    \end{subfigure}
    \caption{Rate-distortion performance. \emph{Left:} Using discrete-to-continuous channel simulation, a single stochastic model with the rate chosen at inference time can approach the performance of individually trained deterministic models with fixed codebooks, especially at high rates. P05--P95 distortions are shown. \emph{Right:} CIFAR-10 images compressed using fixed-rate VQ-VAEs and our variable-rate stochastic model.}
    \label{fig:rd_plot}
    \vspace{-1em}
\end{figure}

\subsection{Distributed Mean Estimation with Central Differential Privacy}\label{sec:experiment_cdp}
DME is a fundamental primitive in private distributed learning and optimization \citep{abadi2016,mcmahan17}.
In a general DME setting under DP, $C$ clients each hold data $X_i \in \mc{X}$ and an aggregator aims to estimate the mean $\mu = \sum_{i=1}^{C} X_i / C$ subject to privacy constraints.
A DP mechanism $P_{Y \mid X}$ is applied identically to each $X_i$ to give a randomized output $Y_i$, yielding the estimate $\hat{\mu} = \sum_{i=1}^{C} Y_i / C$.
Given the desired privacy $\varepsilon$, the aim is to minimize the expected mean squared error (MSE) $\E[\norm{\hat{\mu} - \mu}_2^2]$.
We consider CDP, where the aggregator is trusted and releases $\hat{\mu}$ to an untrusted party. 
Although this setting is weaker than LDP, where the aggregator itself is also untrusted, in practice it allows for significantly more accurate estimators \citep{chen2023}.

Our experiments adopt a similar setting to \citet{chen2023} and \citet{liu2024}, applying the Gaussian mechanism for $(\varepsilon, \delta)$-CDP with $\delta = 10^{-6}$.
We use $C=10^5$ clients and the dimension of a local vector $X_i$ is $D=512$.
Each coordinate $X_{ij}$ is uniform on a set of 16 values evenly spaced in $[-1,1]$.
Compared to the binary vectors used in \citet{liu2024}, this larger alphabet size has relevance to emerging applications in ultra-low-precision, e.g.\ INT4, federated learning \citep{ding2025}.
To reduce communication overhead, we assume $B=8$ instances of the DME problem can be batched and encoded jointly, giving a coding blocklength up to $BD = 4096$. 
In \cref{fig:cdp}, we plot the bits per sample at privacy $\varepsilon \in [0.05,30]$, with the required Gaussian noise calculated using the R\'{e}nyi DP accountant \citep{mironov2017}.
We compare with PFR as a standard channel simulation baseline; note that the Poisson private representation (PPR, \citealp{liu2024}) incurs a higher expected coding cost than PFR and is unnecessary for CDP. Both algorithms tested perform exact simulation and therefore provably achieve the same expected MSE as the uncompressed Gaussian mechanism at a given $\varepsilon$.
Our polar-MLC scheme achieves lower rates across the operating range, largely thanks to its ability to jointly encode all 4096 samples in each client vector at practical speeds.
More details are in \cref{sec:dme_extra}, including a comparison with dithered quantization which, while not a general-purpose channel simulation method, has also been used for compressed DME \citep{hasircioglu2024}.

\begin{figure}[tb]
    \centering
    \begin{subfigure}[c]{0.58\linewidth}
        \centering
        \begin{tikzpicture}
\begin{axis}[
    width=8cm, 
    height=5.5cm,
    tick label style={font=\footnotesize},
    xlabel={\footnotesize Privacy $\varepsilon$},
    ylabel={\footnotesize Rate (bits/sample)},
    xmax=30,
    clip=false,
    grid=major,
    max space between ticks=25,
    axis lines=left,
    axis line style={-},
    enlarge x limits=false,
    legend pos=south east,
    legend cell align={left},
    legend style={font=\scriptsize, inner sep=1.5pt, row sep=0pt, column sep=3pt, legend image post style={scale=0.8}, cells={anchor=west}, reverse legend}
]

\addplot[color=c3, thick] table[x=eps,y=I_X_Y_U,col sep=comma] {plots/theoretical_results_scalar.csv};
\addlegendentry{$I(X;Y \mid \bar{U}^{16})$}

\addplot[color=c2, thick] table[x=eps,y=I_X_Y,col sep=comma] {plots/theoretical_results_scalar.csv};
\addlegendentry{$I(X;Y)$}

\addplot[color=c4, thick] table[x=eps,y=dat_mean,col sep=comma] {plots/pfr_results_zipf.csv};
\addlegendentry{PFR}

\addplot[draw=none, name path=lower, forget plot] table[x=eps,y=dat_p05,col sep=comma] {plots/pfr_results_zipf.csv};

\addplot[draw=none, name path=upper, forget plot] table[x=eps,y=dat_p95,col sep=comma] {plots/pfr_results_zipf.csv};

\addplot[fill=c4!30, opacity=0.7, forget plot] fill between[of=lower and upper];

\addplot[color=c0, thick] table[x=eps,y=dat_mean,col sep=comma] {plots/polar_results_scalar.csv};
\addlegendentry{Polar-MLC}

\addplot[draw=none, name path=lower, forget plot] table[x=eps,y=dat_p05,col sep=comma] {plots/polar_results_scalar.csv};

\addplot[draw=none, name path=upper, forget plot] table[x=eps,y=dat_p95,col sep=comma] {plots/polar_results_scalar.csv};

\addplot[fill=c0!30, opacity=0.7, forget plot] fill between[of=lower and upper];

\end{axis}
\end{tikzpicture}
    \end{subfigure}
    \hfill
    \begin{subfigure}[c]{0.41\linewidth}
        \centering
        \setlength\tabcolsep{5 pt}
        \newcolumntype{T}{S[
            table-format=1.2(1),
            uncertainty-mode=separate
        ]}
        \raisebox{50pt}{
        \begin{tabular}{@{}l
            S[table-format=1.2(1), uncertainty-mode=separate, table-text-alignment=right]
            S[table-format=5.1(5), uncertainty-mode=separate, table-text-alignment=right]
        @{}}
            \toprule
            $\varepsilon$ & {Polar-MLC} & {PFR} \\
            \midrule
            0.05 & 3.06(32) & 58.7(78) \\
            2.19 & 2.96(12) & 182.2(400) \\
            4.33 & 2.93(03) & 1852.8(755) \\
            6.47 & 3.03(24) & 13148.9(2990) \\
            8.61 & 2.93(03) & 59890.6(1801.4) \\
            \bottomrule
        \end{tabular}}
    \end{subfigure}
    \caption{DME using the Gaussian mechanism. \emph{Left:} Communication rate as a function of the privacy $\varepsilon$. P05--P95 rates are shown. \emph{Right:} Encoding time in seconds and standard deviation for 128 blocks of 4096 samples; PFR splits the client vector into chunks of 8 samples. For comparison, the final arithmetic/entropy coding stage is not included for either method.}
    \label{fig:cdp}
    \vspace{-1em}
\end{figure}

\section{Conclusion}
In this work, we have considered exact and approximate channel simulation with a discrete input alphabet and introduced the permuted scheme to solve both variants while using a fixed number of shared random samples.
Following this, we applied techniques from coding theory, namely polar and multi-level codes, to scale the scheme efficiently to long blocklengths.
Our polar-coded discrete-to-continuous simulation scheme was demonstrated in two relevant machine learning applications: variable-rate neural image compression with VQ-VAEs and communication-efficient DME with privacy guarantees via the Gaussian mechanism.
While the general channel simulation problem remains computationally difficult, our work shows that searching for scalable constructions on narrower yet practically relevant classes of channels may constitute a promising direction for continued research.

\bibliography{iclr2027_conference}

@article{li2024a,
    author={Cheuk Ting Li},
    title={Channel simulation: theory and applications to lossy compression and differential privacy},
    journal={Foundations and Trends in Communications and Information Theory},
    volume={21},
    number={6},
    pages={847--1106},
    year={2024}
}

@article{li2018,
    author={Cheuk Ting Li and Abbas El Gamal},
    journal={IEEE Transactions on Information Theory}, 
    title={Strong functional representation lemma and applications to coding theorems}, 
    year={2018},
    volume={64},
    number={11},
    pages={6967--6978}
}

@inproceedings{flamich2023a,
    author={Gergely Flamich},
    pages={37089--37127},
    title={Greedy {P}oisson rejection sampling},
    booktitle={Advances in Neural Information Processing Systems},
    volume={36},
    year={2023}
}

@inproceedings{mena2020,
    title={{S}inkhorn permutation variational marginal inference},
    author={Gonzalo Mena and Erdem Varol and Amin Nejatbakhsh and Eviatar Yemini and Liam Paninski},
    booktitle={Proceedings of the 2nd Symposium on Advances in Approximate Bayesian Inference},
    pages={1--9},
    year={2020}
}

@inproceedings{mena2018,
    author={Gonzalo Mena and David Belanger and Scott Linderman and Jasper Snoek},
    title={Learning latent permutations with {G}umbel-{S}inkhorn networks},
    booktitle={International Conference on Learning Representations},
    year={2018}
}

@inproceedings{sriramu2024,
    author={Sharang M. Sriramu and Rochelle Barsz and Elizabeth Polito and Aaron B. Wagner},
    booktitle={Advances in Neural Information Processing Systems},
    title={Fast channel simulation via error-correcting codes},
    pages={107932--107959},
    volume={37},
    year={2024}
}

@article{arikan2009,
    author={Erdal Ar{\i}kan},
    journal={{IEEE} Transactions on Information Theory}, 
    title={Channel polarization: a method for constructing capacity-achieving codes for symmetric binary-input memoryless channels}, 
    year={2009},
    volume={55},
    number={7},
    pages={3051--3073}
}

@inproceedings{arikan2010,
    author={Erdal Ar{\i}kan},
    booktitle={{IEEE} International Symposium on Information Theory}, 
    title={Source polarization}, 
    year={2010},
    pages={899--903}
}

@inproceedings{ozyilkan2026,
    title={{SoftBinary} coding: a new information-theoretic neural compression paradigm}, 
    author={Ezgi Ozyilkan and Sharang M. Sriramu and Elza Erkip and Aaron B. Wagner and Jona Ball{\'e}},
    booktitle={43rd International Conference on Machine Learning},
    year={2026}
}

@article{rissanen1976,
    author={Jorma Rissanen},
    title={Generalized {K}raft inequality and arithmetic coding},
    journal={{IBM} Journal of Research and Development},
    year={1976},
    pages={198--203},
    volume={20},
    number={3}
}

@article{wachsmann1999,
    author={Udo Wachsmann and  Robert F. H. Fischer and Johannes B. Huber},
    journal={{IEEE} Transactions on Information Theory}, 
    title={Multilevel codes: theoretical concepts and practical design rules}, 
    year={1999},
    volume={45},
    number={5},
    pages={1361--1391}
}

@inproceedings{flamich2023b,
    title={Adaptive greedy rejection sampling},
    author={Gergely Flamich and Lucas Theis},
    booktitle={{IEEE} International Symposium on Information Theory},
    pages={454--459},
    year={2023},
    organization={IEEE}
}

@article{harsha2010,
    author={Prahladh Harsha and Rahul Jain and David McAllester and Jaikumar Radhakrishnan},
    title={The communication complexity of correlation},
    journal={{IEEE} Transactions on Information Theory},
    year={2010},
    volume={56},
    number={1},
    pages={438--449}
}

@inproceedings{phan2024,
    title={Importance matching lemma for lossy compression with side information},
    author={Buu Phan and Ashish Khisti and Christos Louizos},
    booktitle={Proceedings of the 27th International Conference on Artificial Intelligence and Statistics},
    year={2024},
    pages={1387--1395}
}

@inproceedings{havasi2019,
    author={Marton Havasi and Robert Peharz and Jos{\'e} Miguel Hern{\'a}ndez-Lobato},
    title={Minimal random code learning: getting bits back from compressed model parameters},
    booktitle={International Conference on Learning Representations},
    year={2019}
}

@article{bennett2002,
    author={Charles H. Bennett and Peter W. Shor and John A. Smolin and Ashish V. Thapliyal},
    journal={{IEEE} Transactions on Information Theory}, 
    title={Entanglement-assisted capacity of a quantum channel and the reverse {S}hannon theorem},
    year={2002},
    volume={48},
    number={10},
    pages={2637--2655},
}

@inproceedings{zhao2026,
    author={Jianguo Zhao and Cheuk Ting Li},
    booktitle={{IEEE} International Symposium on Information Theory}, 
    title={Rejection-sampled linear codes for channel simulation}, 
    year={2026},
    pages={1--6},
}

@article{tang2023,
    author={Wenpin Tang and Fengmin Tang},
    title={The {P}oisson binomial distribution -- old and new},
    journal={Statistical Science},
    year={2023},
    volume={38},
    number={1},
    pages={108--119}
}

@inproceedings{hasircioglu2024,
    author={Burak Has{\i}rc{\i}o{\u{g}}lu and Deniz G{\"u}nd{\"u}z},
    title={Communication efficient private federated learning using dithering},
    booktitle={{IEEE} International Conference on Acoustics, Speech and Signal Processing},
    year={2024},
    pages={7575--7579}
}

@inproceedings{vandenoord2017,
    author={Aaron van den Oord and Oriol Vinyals and Koray Kavukcuoglu},
    booktitle={Advances in Neural Information Processing Systems},
    title={Neural discrete representation learning},
    volume={30},
    year={2017}
}

@article{knight2008,
    title={The {S}inkhorn--{K}nopp algorithm: convergence and applications},
    author={Philip A. Knight},
    journal={{SIAM} Journal on Matrix Analysis and Applications},
    volume={30},
    number={1},
    pages={261--275},
    year={2008}
}

@article{sinkhorn1964,
    title={A relationship between arbitrary positive matrices and doubly stochastic matrices},
    author={Richard Sinkhorn},
    journal={The Annals of Mathematical Statistics},
    volume={35},
    number={2},
    pages={876--879},
    year={1964}
}

@misc{winter2002,
    title={Compression of sources of probability distributions and density operators}, 
    author={Andreas Winter},
    year={2002},
    eprint={quant-ph/0208131},
    archivePrefix={arXiv},
    primaryClass={quant-ph},
    url={https://arxiv.org/abs/quant-ph/0208131}, 
}

@article{flamich2023c,
    title={Faster relative entropy coding with greedy rejection coding},
    author={Gergely Flamich and Stratis Markou and Jos{\'e} Miguel Hern{\'a}ndez-Lobato},
    journal={Advances in Neural Information Processing Systems},
    volume={36},
    pages={50558--50569},
    year={2023}
}

@inproceedings{phan2025,
    author={Buu Phan and Ashish Khisti},
    booktitle={Advances in Neural Information Processing Systems},
    pages={43066--43123},
    title={Channel simulation and distributed compression with ensemble rejection sampling},
    volume={38},
    year={2025}
}

@misc{hill2026,
    title={Rejection sampling is optimal for relative entropy coding}, 
    author={Spencer Hill and Fady Alajaji and Tam{\'a}s Linder and Gergely Flamich},
    year={2026},
    eprint={2604.23076},
    archivePrefix={arXiv},
    primaryClass={cs.IT},
    url={https://arxiv.org/abs/2604.23076}, 
}

@inproceedings{flamich2022,
    title={Fast relative entropy coding with {A*} coding},
    author={Gergely Flamich and Stratis Markou and Jos{\'e} Miguel Hern{\'a}ndez-Lobato},
    booktitle={Proceedings of the 39th International Conference on Machine Learning},
    pages={6548--6577},
    year={2022}
}

@inproceedings{hegazy2022,
    author={Mahmoud Hegazy and Cheuk Ting Li},
    booktitle={2022 IEEE Information Theory Workshop}, 
    title={Randomized quantization with exact error distribution}, 
    year={2022},
    pages={350--355}
}

@inproceedings{flamich2020,
    author={Gergely Flamich and Marton Havasi and Jos{\'e} Miguel Hern{\'a}ndez-Lobato},
    booktitle={Advances in Neural Information Processing Systems},
    pages={16131--16141},
    title={Compressing images by encoding their latent representations with relative entropy coding},
    volume={33},
    year={2020}
}

@misc{theis2022a,
    title={Lossy compression with {G}aussian diffusion}, 
    author={Lucas Theis and Tim Salimans and Matthew D. Hoffman and Fabian Mentzer},
    year={2022},
    eprint={2206.08889},
    archivePrefix={arXiv},
    primaryClass={stat.ML},
    url={https://arxiv.org/abs/2206.08889}, 
}

@inproceedings{vonderfecht2025,
    author={Jeremy Vonderfecht and Liu Feng},
    booktitle={International Conference on Learning Representations},
    pages={687--702},
    title={Lossy compression with pretrained diffusion models},
    year={2025}
}

@inproceedings{dwork2006,
    author={Cynthia Dwork and Frank McSherry and Kobbi Nissim and Adam Smith},
    title={Calibrating noise to sensitivity in private data analysis},
    booktitle={Theory of Cryptography},
    year={2006},
    pages={265--284},
}

@inproceedings{feldman2021,
    title={Lossless compression of efficient private local randomizers},
    author={Vitaly Feldman and Kunal Talwar},
    booktitle={Proceedings of the 38th International Conference on Machine Learning},
    pages={3208--3219},
    year={2021}
}

@inproceedings{shah2022,
    title={Optimal compression of locally differentially private mechanisms},
    author={Abhin Shah and Wei-Ning Chen and Johannes Ball{\'e} and Peter Kairouz and Lucas Theis},
    booktitle={Proceedings of the 25th International Conference on Artificial Intelligence and Statistics},
    pages={7680--7723},
    year={2022}
}

@inproceedings{hegazy2024,
    title={Compression with exact error distribution for federated learning},
    author={Mahmoud Hegazy and R{\'e}mi Leluc and Cheuk Ting Li and Aymeric Dieuleveut},
    booktitle={Proceedings of the 27th International Conference on Artificial Intelligence and Statistics},
    pages={613--621},
    year={2024}
}

@inproceedings{liu2024,
    author={Yanxiao Liu and Wei-Ning Chen and Ayfer {\"O}zg{\"u}r and Cheuk Ting Li},
    booktitle={Advances in Neural Information Processing Systems},
    title={Universal exact compression of differentially private mechanisms},
    pages={91492--91531},
    volume={37},
    year={2024}
}

@article{dwork2014,
    author={Cynthia Dwork and Aaron Roth},
    title={The algorithmic foundations of differential privacy},
    year={2014},
    volume={9},
    number={3--4},
    journal={Foundations and Trends in Theoretical Computer Science},
    pages={211--407}
}

@inproceedings{theis2022b,
    title={Algorithms for the communication of samples},
    author={Lucas Theis and Noureldin Y. Ahmed},
    booktitle={Proceedings of the 39th International Conference on Machine Learning},
    pages={21308--21328},
    year={2022},
}

@inproceedings{dosovitskiy2020,
    title={You only train once: loss-conditional training of deep networks},
    author={Alexey Dosovitskiy and Josip Djolonga},
    booktitle={International Conference on Learning Representations},
    year={2020}
}

@inproceedings{higgins2017,
    title={$\beta$-{VAE}: learning basic visual concepts with a constrained variational framework},
    author={Irina Higgins and Loic Matthey and Arka Pal and Christopher Burgess and Xavier Glorot and Matthew Botvinick and Shakir Mohamed and Alexander Lerchner},
    booktitle={International Conference on Learning Representations},
    year={2017}
}

@inproceedings{xu2026,
    title={Training-free vector quantization via {G}aussian {VAE}s}, 
    author={Tongda Xu and Wendi Zheng and Jiajun He and Jos{\'e} Miguel Hern{\'a}ndez-Lobato and Yan Wang and Ya-Qin Zhang and Jie Tang},
    year={2026},
    booktitle={43rd International Conference on Machine Learning}
}

@inproceedings{takida2022,
    title={{SQ-VAE}: variational {B}ayes on discrete representation with self-annealed stochastic quantization},
    author={Yuhta Takida and Takashi Shibuya and Weihsiang Liao and Chieh-Hsin Lai and Junki Ohmura and Toshimitsu Uesaka and Naoki Murata and Shusuke Takahashi and Toshiyuki Kumakura and Yuki Mitsufuji},
    booktitle={Proceedings of the 39th International Conference on Machine Learning},
    pages={20987--21012},
    year={2022},
}

@inproceedings{chen2023,
    author={Wei-Ning Chen and Dan Song and Ayfer {\"O}zg{\"u}r and Peter Kairouz},
    booktitle={Advances in Neural Information Processing Systems},
    pages={69202--69227},
    title={Privacy amplification via compression: achieving the optimal privacy-accuracy-communication trade-off in distributed mean estimation},
    volume={36},
    year={2023}
}

@inproceedings{ding2025,
    title={{LBI-FL}: low-bit integerized federated learning with temporally dynamic bit-width allocation},
    author={Li Ding and Hao Zhang and Wenrui Dai and Chenglin Li and Weijia Lu and Zhifei Yang and Xiaodong Zhang and Xiaofeng Ma and Junni Zou and Hongkai Xiong},
    booktitle={Proceedings of the 42nd International Conference on Machine Learning},
    pages={13885--13899},
    year={2025},
}

@book{polyanskiy2025, 
    address={Cambridge},
    title={Information theory: from coding to learning}, publisher={Cambridge University Press},
    author={Polyanskiy, Yury and Wu, Yihong}, 
    year={2025}
}

@article{mamatov1965,
    author={M. M. Mamatov},
    title={Estimation of the remainder term of the generalized {M}oivre-{L}aplace local limit theorem},
    journal={Fergan. Gos. Ped. Inst. U\v cen. Zap. Ser. Mat.},
    year={1965},
    volume={1},
    pages={63--66},
    note={MR0200962 (34 847)}
}

@inproceedings{altschuler2017,
    author={Jason Altschuler and Jonathan Niles-Weed and Philippe Rigollet},
    booktitle={Advances in Neural Information Processing Systems},
    title = {Near-linear time approximation algorithms for optimal transport via {S}inkhorn iteration},
    volume={30},
    year={2017}
}

@inproceedings{mironov2017,
    author={Ilya Mironov},
    booktitle={{IEEE} 30th Computer Security Foundations Symposium}, 
    title={{R}{\'e}nyi differential privacy}, 
    year={2017},
    pages={263--275},
}

@inproceedings{perez2018,
    author={Ethan Perez and Florian Strub and Harm de Vries and Vincent Dumoulin and Aaron Courville},
    title={{FiLM}: visual reasoning with a general conditioning layer},
    year = {2018},
    booktitle = {Proceedings of the 32nd {AAAI} Conference on Artificial Intelligence},
}

@article{roberts1962,
    author={Lawrence Roberts},
    journal={IRE Transactions on Information Theory}, 
    title={Picture coding using pseudo-random noise}, 
    year={1962},
    volume={8},
    number={2},
    pages={145--154}
}

@article{gray1993,
    author={Robert M. Gray and Thomas G. Stockham},
    journal={IEEE Transactions on Information Theory}, 
    title={Dithered quantizers}, 
    year={1993},
    volume={39},
    number={3},
    pages={805--812},
}

@inproceedings{kobus2024,
    author={Szymon Kobus and Lucas Theis and Deniz G{\"u}nd{\"u}z},
    booktitle={{IEEE} International Symposium on Information Theory}, 
    title={{G}aussian channel simulation with rotated dithered quantization}, 
    year={2024},
    pages={1907--1912},
}

@inproceedings{shahmiri2024,
    author={Ali Moradi Shahmiri and Chih Wei Ling and Cheuk Ting Li},
    booktitle={IEEE International Conference on Acoustics, Speech and Signal Processing}, 
    title={Communication-efficient {L}aplace mechanism for differential privacy via random quantization}, 
    year={2024},
    pages={4550--4554},
}

@misc{ling2025,
    title={Communication-efficient and privacy-adaptable mechanism for federated learning}, 
    author={Chih Wei Ling and Chun Hei Michael Shiu and Youqi Wu and Jiande Sun and Cheuk Ting Li and Linqi Song and Weitao Xu},
    year={2025},
    eprint={2501.12046},
    archivePrefix={arXiv},
    primaryClass={cs.LG},
    url={https://arxiv.org/abs/2501.12046}, 
}

@inproceedings{abadi2016,
    author={Martin Abadi and Andy Chu and Ian Goodfellow and H. Brendan McMahan and Ilya Mironov and Kunal Talwar and Zhang Li},
    title={Deep learning with differential privacy},
    year={2016},
    booktitle={Proceedings of the 2016 {ACM} {SIGSAC} Conference on Computer and Communications Security},
    pages={308--318},
}

@inproceedings{mcmahan17,
    title={Communication-efficient learning of deep networks from decentralized data},
    author={H. Brendan McMahan and Eider Moore and Daniel Ramage and Seth Hampson and  Blaise Ag{\"u}era y Arcas},
    booktitle={Proceedings of the 20th International Conference on Artificial Intelligence and Statistics},
    pages={1273--1282},
    year={2017}
}

@inproceedings{esser2021,
    author={Patrick Esser and Robin Rombach and Bj{\"o}rn Ommer},
    booktitle={Proceedings of the {IEEE/CVF} Conference on Computer Vision and Pattern Recognition},
    title={Taming transformers for high-resolution image synthesis},
    year={2021},
    pages={12873--12883},
}

@inproceedings{yu2024,
    title={Language model beats diffusion --- tokenizer is key to visual generation},
    author={Lijun Yu and Jos{\'e} Lezama and Nitesh Bharadwaj Gundavarapu and Luca Versari and Kihyuk Sohn and David Minnen and Yong Cheng and Agrim Gupta and Xiuye Gu and Alexander G Hauptmann and Boqing Gong and Ming-Hsuan Yang and Irfan Essa and David A Ross and Lu Jiang},
    booktitle={International Conference on Learning Representations},
    year={2024}
}

@article{flamich2026,
    author={Gergely Flamich and Deniz G{\"u}nd{\"u}z},
    journal={{IEEE} {BITS} the Information Theory Magazine}, 
    title={Data compression with stochastic codes}, 
    year={2026},
    pages={1--14},
}

@book{conover1999,
    author={W. J. Conover},
    title={Practical nonparametric statistics},
    publisher={Wiley},
    address={New York},
    year={1999},
    edition={3}
}
\bibliographystyle{iclr2027_conference}

\newpage
\appendix
\section{BI-AWGN Example}\label[appendix]{sec:biawgn_example}
\begin{figure}[b]
    \centering
    \begin{tikzpicture}[semithick, font=\footnotesize]

\node [draw, circle, minimum size=0.5cm, fill=blue!25] (adder) {$+$};
\node [left=1.2cm of adder] (input) {$W \sim \operatorname{Ber}(0.5)$};
\node [below=0.8cm of adder] (noise) {$Z \sim \mathcal{N}(0, \sigma^2)$};
\node [right=1.2cm of adder] (output) {$Y$};

\draw [->, >=latex] (input) -- (adder);
\draw [->, >=latex] (noise) -- (adder);
\draw [->, >=latex] (adder) -- (output);

\end{tikzpicture}
    \caption{System model for the BI-AWGN channel.}
    \label{fig:biawgn}
\end{figure}

Here, we consider a concrete worked example focusing on simulating a BI-AWGN channel, as pictured in \cref{fig:biawgn}, to motivate the need for the permuted scheme to reduce the channel simulation rate while maintaining the correct output distribution using only $N = 2$ shared Gaussian samples.
In this example, the physical channel input $W \in \{ -1, 1 \}$ is uniformly distributed, and the channel noise $Z$ is a 1D Gaussian with variance $\sigma^2$.
To harmonize the problem with the indexing convention used in \cref{sec:one_shot_simulation}, we map $W$ to a canonical channel input $X \in \{ 1, 2 \}$, with $X = (W + 1) / 2 + 1$.
We have $H(X) = 1$, and the channel capacity can be shown to be, see e.g.\ \citet[Example~3.4]{polyanskiy2025},
\begin{IEEEeqnarray}{c}
    I(X;Y) = 1 - \int_{-\infty}^{\infty} \frac{1}{\sqrt{2\pi}} e^{-z^2 / 2} \log(1 + e^{2z / \sigma - 2 / \sigma^2}) \, dz.
\end{IEEEeqnarray}
We consider the one-shot channel simulation methods proposed in \cref{sec:one_shot_simulation} in turn, beginning with the naive scheme, continuing to the index randomization idea, and concluding by demonstrating how introducing a random permutation finally succeeds in driving the rate below $H(X)$.

\paragraph{Naive Scheme.}
In this particular example, the naive scheme consists of sampling 
\begin{IEEEeqnarray}{c}
    \bar{U}_1 \sim P_{Y \mid X}(\cdot \mid 1) = \mc{N}(-1, \sigma^2) \text{ and } \bar{U}_2 \sim P_{Y \mid X}(\cdot \mid 2) = \mc{N}(1, \sigma^2).
\end{IEEEeqnarray}
Upon observing the index $X = x$, the encoder would send $K = x$ to the decoder, which would then output $\bar{U}_K$.
Clearly, if $X = 1$ the decoder's output correctly follows the density $p_{Y \mid X}(y \mid 1) = \mc{N}(y; -1, \sigma^2)$, and if $X = 2$ it follows $p_{Y \mid X}(y \mid 2) = \mc{N}(y; 1, \sigma^2)$.
However, since $K$ is always equal to $X$, which in turn is independent of $\bar{U}^2$, it is not possible to transmit $K$ to the decoder using less than $H(X) = 1$ bits.
Thus, while the naive scheme produces correctly distributed output samples, it does not achieve any compression whatsoever compared to directly sending $X$ and later adding noise at the decoder.

\paragraph{Randomizing the Index Only.}
As briefly mentioned in \cref{sec:one_shot_simulation}, a first idea is to continue generating $\bar{U}_1 \sim \mc{N}(-1, \sigma^2)$ and $\bar{U}_2 \sim \mc{N}(1, \sigma^2)$, but additionally randomize the index $K$ in an attempt to reduce the communication cost.
To formalize this, we allow the encoder to sample $K$ from an arbitrary conditional distribution $P_{K \mid X, \bar{U}^2}(\cdot \mid x, u^2)$.
Without loss of generality, assume $X = 1$ for the sake of the analysis.
Applying Bayes' rule,
\begin{IEEEeqnarray}{rCl}
    P_{K \mid X, \bar{U}^2}(k \mid 1, u^2) & = & \frac{p_{\bar{U}^2 \mid K, X}(u^2 \mid k, 1) P_{K \mid X}(k \mid 1)}{p_{\bar{U}^2 \mid X}(u^2 \mid 1)} \label{eqn:naive_posterior} \\
    & = & \frac{\mc{N}(u_1; -1, \sigma^2) \mc{N}(u_2; 1, \sigma^2) P_{K \mid X}(k \mid 1)}{\mc{N}(u_1; -1, \sigma^2) \mc{N}(u_2; 1, \sigma^2)} \\
    & = & P_{K \mid X}(k \mid 1).
\end{IEEEeqnarray}
Let $\alpha = P_{K \mid X}(1 \mid 1)$, then the density of the output, which we now write as $\tilde{Y}$ to avoid confusion with the target distribution $P_{Y \mid X}$, is
\begin{IEEEeqnarray}{rCl}
    p_{\tilde{Y} \mid X}(y \mid 1) & = & \iint ( \alpha \delta(y - u_1) + (1 - \alpha) \delta(y - u_2) ) p_{\bar{U}^2}(u^2) \, du_1 du_2 \\
    & = & \alpha \int \delta(y - u_1) \mc{N}(u_1; -1, \sigma^2) \, du_1 \int \mc{N}(u_2; 1, \sigma^2) \, du_2 \\
    &&\< + (1 - \alpha) \int \delta(y - u_2) \mc{N}(u_2; 1, \sigma^2) \, du_2 \int \mc{N}(u_1; -1, \sigma^2) \, du_1 \nonumber \\
    & = & \alpha \mc{N}(y; -1, \sigma^2) + (1 - \alpha) \mc{N}(y; 1, \sigma^2).
\end{IEEEeqnarray}
The only solution yielding the desired output density $p_{\tilde{Y} \mid X}(y \mid 1) = \mc{N}(y; -1, \sigma^2)$ is the one-hot solution with $\alpha = 1$.
The same argument can be repeated for $X = 2$, showing that an exact simulation requires $P_{K \mid X, \bar{U}^2}(k \mid x, u^2) = \delta(k - x)$, i.e.\ $K = X$ just as in the naive scheme.
As a result, the best possible rate is again $H(X) = 1$, and the index randomization gives no advantage over directly sending $X$ losslessly and adding noise after the fact.

\begin{figure}[tb]
    \centering
    \begin{tikzpicture}
\begin{axis}[
    width=8.5cm, 
    height=5.5cm,
    tick label style={font=\footnotesize},
    xlabel={\footnotesize SNR (dB)},
    ylabel={\footnotesize Bits},
    xmin=-20, xmax=15,
    ymax=1,
    clip=false,
    grid=major,
    max space between ticks=25,
    axis lines=left,
    axis line style={-},
    enlarge x limits=false,
    legend pos=north west,
    legend cell align={left},
    legend style={font=\scriptsize, inner sep=1.5pt, row sep=0pt, column sep=3pt, legend image post style={scale=0.8}, cells={anchor=west}, reverse legend},
]

\addplot[color=c2, thick] coordinates {
    (-20,1) (15,1)
};
\addlegendentry{$H(X)$}

\addplot[color=c4, thick] table[x=snr,y=cap,col sep=comma] {plots/biawgn_data.csv};
\addlegendentry{$I(X; Y)$}

\addplot[color=c0, thick] table[x=snr,y=bnd,col sep=comma] {plots/biawgn_data.csv};
\addlegendentry{$I(X;K \mid \bar{U}^2)$}

\end{axis}
\end{tikzpicture}
    \caption{Conditional mutual information, capacity and source entropy for a BI-AWGN channel. The conditional mutual information $I(X; K \mid \bar{U}^2)$ is a lower bound on the permuted scheme's rate.}
    \label{fig:bi_awgn}
    \vspace{-1em}
\end{figure}

\paragraph{Introducing a Random Permutation.}
We now show how introducing a random shuffling of $\bar{U}^2$ allows us to achieve a communication cost less than the entropy.
The key failure of the index randomization scheme from the previous section can be observed in \eqref{eqn:naive_posterior}.
Because $\bar{U}_1$ and $\bar{U}_2$ are sampled from fixed distributions, $p_{\bar{U}^2 \mid K, X}(u^2 \mid k, x)$ and $p_{\bar{U}^2 \mid X}(u^2 \mid x)$ are equal for any $u^2$, $x$ and $k$.
This leads to the conclusion that $K$ must be independent of $\bar{U}^2$ given $X$.
The encoder and decoder therefore cannot take advantage of the shared Gaussian samples to reduce the rate.

As discussed in \cref{sec:oneshot_permuted}, we break this this dependency using a random permutation of the samples.
The probability of $K$ given $X$ before seeing $\bar{U}^2$ then becomes uniform as, after the random permutation, $\bar{U}_1$ is equally likely to have been sampled from mode 1 or 2, and likewise for $\bar{U}_2$.
In the BI-AWGN case, $\Pi$ is a uniform permutation of $\{ 1, 2\}$, meaning:

\begin{enumerate}
    \item With probability $1/2$, $\bar{U}_1 \sim \mc{N}(-1, \sigma^2)$ and $\bar{U}_2 \sim \mc{N}(1, \sigma^2)$.
    \item With probability $1/2$, $\bar{U}_1 \sim \mc{N}(1, \sigma^2)$ and $\bar{U}_2 \sim \mc{N}(-1, \sigma^2)$.
\end{enumerate}

\begin{figure}[b]
    \centering
    \begin{tikzpicture}[font=\footnotesize, semithick, >=latex]

\node [draw, circle, fill=blue!25] (samp1) {$\bar{U}_2$};

\node [draw, circle, fill=blue!25, left=4cm of samp1] (samp2) {$\bar{U}_1$};

\node [draw, fill=green!25, below=2cm of samp1] (mode1) {$P_{Y \mid X}(\cdot \mid 2)$};

\node [draw, fill=green!25, below=2cm of samp2] (mode2) {$P_{Y \mid X}(\cdot \mid 1)$};

\coordinate (M) at ($(mode2.north)!0.5!(samp1.south)$);
\node [draw, minimum width=6.2cm, minimum height=0.7cm, fill=gray!8] (perm) at (M) {};
\node [xshift=-2.8cm] at (perm) {$\Pi$};

\draw[->] (mode1.north) -- (samp1.south) node[xshift=0.2cm, yshift=-0.25cm] {$\frac{1}{2}$};
\draw[->] (mode1.north) -- (samp2.south) node[xshift=-0.2cm, yshift=-0.25cm] {$\frac{1}{2}$};
\draw[->] (mode2.north) -- (samp1.south) node[xshift=-1.35cm, yshift=-0.25cm] {$\frac{1}{2}$};
\draw[->] (mode2.north) -- (samp2.south) node[xshift=1.35cm, yshift=-0.25cm] {$\frac{1}{2}$};

\end{tikzpicture}
    \caption{Illustration of the random permutation $\Pi$ used by our scheme on the BI-AWGN channel.}
    \label{fig:permutation}
\end{figure}

We visualize this procedure in \cref{fig:permutation}.
To sample $K$, the natural choice which we pursued in \cref{sec:oneshot_permuted} is to use the Bayesian posterior, the probability of $K = k$ being set equal to the probability given $\bar{U}^2$ and $X$ that $\bar{U}_k$ was the sample taken from the conditional distribution $P_{Y \mid X}(\cdot \mid X)$.
For our BI-AWGN simulation, this is
\begin{IEEEeqnarray}{c}
    P_{K \mid X, \bar{U}^2}(k \mid x, u^2) = \Pr[\Pi(x) = k \mid \bar{U}^2 = u^2] = \frac{p_{\bar{U}^2}(u^2 \mid \Pi(x) = k) \Pr[\Pi(x) = k]}{p_{\bar{U}^2}(u^2)}.
\end{IEEEeqnarray}
Since $\Pi$ is uniform, $\Pr[\Pi(x) = k] = 1/2$, and the unconditional distribution of $\bar{U}^2$ is the same as that induced by sampling without replacement from the two modes.
As before, assuming $X = 1$ we evaluate
\begin{IEEEeqnarray}{rCl}
    P_{K \mid X, \bar{U}^2}(1 \mid 1, u^2) & = & \frac{\mc{N}(u_1; -1, \sigma^2) \mc{N}(u_2; 1, \sigma^2)}{\mc{N}(u_1; -1, \sigma^2) \mc{N}(u_2; 1, \sigma^2) + \mc{N}(u_1; 1, \sigma^2) \mc{N}(u_2; -1, \sigma^2)} \label{eqn:biawgn_posterior_1} \\
    P_{K \mid X, \bar{U}^2}(2 \mid 1, u^2) & = & \frac{\mc{N}(u_1; 1, \sigma^2) \mc{N}(u_2; -1, \sigma^2)}{\mc{N}(u_1; -1, \sigma^2) \mc{N}(u_2; 1, \sigma^2) + \mc{N}(u_1; 1, \sigma^2) \mc{N}(u_2; -1, \sigma^2)}. \label{eqn:biawgn_posterior_2}
\end{IEEEeqnarray}
Substituting \eqref{eqn:biawgn_posterior_1}, \eqref{eqn:biawgn_posterior_2} into the expression for the algorithm's output distribution,
\begin{IEEEeqnarray}{rCl}
    p_{\tilde{Y} \mid X}(y \mid 1) & = & \iint ( \delta(y - u_1) P_{K \mid X, \bar{U}^2}(1 \mid 1, u^2) \nonumber \\
    &&\< + \delta(y - u_2) P_{K \mid X, \bar{U}^2}(2 \mid 1, u^2) ) p_{\bar{U}^2}(u^2) \, du_1 du_2
\end{IEEEeqnarray}
confirms that the permuted approach indeed generates an output following $\mc{N}(-1,\sigma^2)$ when $X = 1$, and by repeating the same steps, $\mc{N}(1,\sigma^2)$ when $X = 2$.
Moreover, as $K$ is no longer uniquely determined by $X$ and also depends on $\bar{U}^2$, the lower bound on the rate is the conditional mutual information $I(X;K \mid \bar{U}^2)$, and this may now be lower than $H(X)$.
We prove this lower bound later in \cref{prop:cost_lower_bound}.
For the BI-AWGN channel in particular, using the posterior distributions in \eqref{eqn:biawgn_posterior_1}, \eqref{eqn:biawgn_posterior_2} and computing the mutual information gives
\begin{IEEEeqnarray}{c}
    I(X; K \mid \bar{U}^2) = 1 - \int_{-\infty}^{\infty} \frac{1}{\sqrt{2\pi}} e^{-z^2 / 2} \log(1 + e^{2\sqrt{2} z / \sigma - 4 / \sigma^2}) \, dz.
\end{IEEEeqnarray}
To give an idea of the relative size of these quantities, we plot $I(X; K \mid \bar{U}^2)$, $I(X; Y)$ and $H(X)$ in \cref{fig:bi_awgn}, for signal-to-noise ratios (SNRs) between $-20$ and $15$, where $\mathrm{SNR} = -10 \log_{10}(\sigma^2)$.
At both low and high SNRs, $I(X; K \mid \bar{U}^2)$ becomes very close to the channel capacity.

\section{Proof of Results in the Main Paper}\label[appendix]{sec:proofs}
\subsection{Proof of Proposition~\ref{prop:correctness}}\label[appendix]{sec:correctness_proof}
Suppose $X = 1$ without loss of generality and let the output be denoted $\tilde{Y}$ throughout this section so as not to confuse its distribution with $P_{Y \mid X}$.
The density function of $\tilde{Y}$ conditioned on $X = 1$ and the shared samples being $\bar{U}^N = u^N$ is
\begin{IEEEeqnarray}{c}
    p_{\tilde{Y} \mid X, \bar{U}^N}(y \mid 1, u^N) = \sum_{k=1}^{N} \delta(y - u_k) \Pr[\Pi(1) = k \mid \bar{U}^N = u^N].
\end{IEEEeqnarray}
Integrating out the joint density of the samples,
\begin{IEEEeqnarray}{c}
    p_{\tilde{Y} \mid X}(y \mid 1) = \idotsint \sum_{k=1}^{N} \delta(y - u_k) \Pr[\Pi(1) = k \mid \bar{U}^N = u^N] p_{\bar{U}^N}(u^N) \, du_1 \cdots du_N. \IEEEeqnarraynumspace
\end{IEEEeqnarray}
Applying Bayes' rule,
\begin{IEEEeqnarray}{c}
    \Pr[\Pi(1) = k \mid \bar{U}^N = u^N] = \frac{p_{\bar{U}^N}(u^N \mid \Pi(1) = k) \Pr[\Pi(1) = k]}{p_{\bar{U}^N}(u^N)} = \frac{p_{\bar{U}^N}(u^N \mid \Pi(1) = k)}{N p_{\bar{U}^N}(u^N)}. \IEEEeqnarraynumspace
\end{IEEEeqnarray}
From this,
\begin{IEEEeqnarray}{rCl}
    p_{\tilde{Y} \mid X}(y \mid 1) & = & \frac{1}{N} \int \cdots \int \sum_{k=1}^{N} \delta(y - u_k) p_{\bar{U}^N}(u^N \mid \Pi(1) = k) \, du_1 \cdots du_N \\
    & = & \frac{1}{N} \sum_{k=1}^{N} \int \cdots \int \delta(y - u_k) p_{\bar{U}^N}(u^N \mid \Pi(1) = k) \, du_1 \cdots du_N.
\end{IEEEeqnarray}
Now, as the analysis of the inner nested integral will be the same for any $k$, let us fix $k = 1$ without loss of generality.
We find
\begin{IEEEeqnarray}{rCl}
    \IEEEeqnarraymulticol{3}{l}{
        \idotsint \delta(y - u_1) p_{\bar{U}^N}(u^N \mid \Pi(1) = 1) \, du_1 \cdots du_N
    }\nonumber\\* \quad
    & = & \int \delta(y - u_1) p_{\bar{U}_1}(u_1 \mid \Pi(1) = 1) \, du_1 \idotsint p_{\bar{U}_2^N}(u_2^N \mid \Pi(1) = 1) \, du_2 \cdots du_N \\
    & = & p_{\bar{U}_1}(y \mid \Pi(1) = 1) \\
    & = & p_{Y \mid X}(y \mid 1).
\end{IEEEeqnarray}
Thus, since the same simplification works for any $k$,
\begin{IEEEeqnarray}{c}
    p_{\tilde{Y} \mid X}(y \mid 1) = \frac{1}{N} \sum_{k=1}^{N} p_{Y \mid X}(y \mid 1) = p_{Y \mid X}(y \mid 1).
\end{IEEEeqnarray}
As the same argument holds for any $X \in [N]$, the output follows the required distribution. \hfill \IEEEQED{}

\subsection{Proof of Theorem~\ref{thm:cost_upper_bound}}\label[appendix]{sec:upper_bound_proof}
We begin by proving the first part of the theorem by bounding the expected size of the index $\bar{K}$ into the sorted list in \cref{alg:permuted_scheme}.
Applying EFR to communicate the sampled index with fixed target and prior distributions $P$ and $Q$ respectively yields the bound $\E[\log \bar{K}] \leq D_{\mathrm{KL}}(P \Vert Q) + 1$ \cite[Lemma~12]{li2024a}.
In our case, given $X = x$ and $\bar{U}^N = u^N$, we have $P(k) = P_{K \mid X, \bar{U}^N}(k \mid x, u^N)$ and $Q(k) = P_{K \mid \bar{U}^N}(k \mid u^N)$.
This gives
\begin{IEEEeqnarray}{c}
    \E[\log \bar{K} \mid X, \bar{U}^N] \leq D_{\mathrm{KL}}(P_{K \mid X, \bar{U}^N}(\cdot \mid X, \bar{U}^N) \Vert P_{K \mid \bar{U}^N}(\cdot \mid \bar{U}^N)) + 1. \label{eqn:li_kl_bound}
\end{IEEEeqnarray}
It then only remains to take the expectation over $X$ and $\bar{U}^N$ in order to bound the unconditional expected size.
In particular,
\begin{IEEEeqnarray}{c}
    \E_{X, \bar{U}^N}[ D_{\mathrm{KL}}(P_{K \mid X, \bar{U}^N}(\cdot \mid X, \bar{U}^N) \Vert P_{K \mid \bar{U}^N}(\cdot \mid \bar{U}^N)) ] =  I(X; K \mid \bar{U}^N).
\end{IEEEeqnarray}
Thus,
\begin{IEEEeqnarray}{c}
    \E[\log \bar{K}] = \E_{X, \bar{U}^N}[\E[\log \bar{K} \mid X, \bar{U}^N]] \leq I(X; K \mid \bar{U}^N) + 1 = I(X;Y \mid \bar{U}^N) + 1
\end{IEEEeqnarray}
where the last equality follows as given the set of samples $\bar{U}^N$, the mapping from $K$ to $Y$ is a bijection with probability one; as our setting assumes a continuous output distribution, ties within $\bar{U}^N$ occur with probability zero.

Concerning the second part of the theorem which deals with the expected message length, we make use of the following result which bounds the cross-entropy with a carefully-chosen Zipf distribution.

\begin{proposition}[{\citealp[Proposition~4]{li2018}}]
For a random variable $Z$ following the distribution $P_Z$ and satisfying $\E[\log Z] \leq \ell$, its cross-entropy with $\operatorname{Zipf}(1 + 1/\ell)$ satisfies
\begin{IEEEeqnarray}{c}
    H(Z) \leq H(P_Z, \operatorname{Zipf}(1 + 1 / \ell)) \leq \ell + \log(\ell + 1) + 1.
\end{IEEEeqnarray}
\end{proposition}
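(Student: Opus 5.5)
The plan is to compute the cross-entropy directly from the definition of the Zipf law and then bound the normalising constant by an integral. Here $Z$ takes values in the positive integers. For $s>1$ let $\operatorname{Zipf}(s)$ be the law $Q_s(z) = z^{-s}/\zeta(s)$ on $\{1,2,\ldots\}$, where $\zeta(s)=\sum_{z\ge 1} z^{-s}$. We set $s = 1+1/\ell$, assuming first that $\ell>0$.

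\textbf{First inequality.} The bound $H(Z) \le H(P_Z, Q_s)$ is Gibbs' inequality. The gap is $H(P_Z,Q_s) - H(Z) = D_{\mathrm{KL}}(P_Z \Vert Q_s) \ge 0$, which holds because $Q_s$ has full support on the positive integers.

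\textbf{Second inequality.} Taking logarithms base $2$,
\begin{IEEEeqnarray}{c}
    H(P_Z, Q_s) = \E[-\log Q_s(Z)] = s\,\E[\log Z] + \log \zeta(s).
\end{IEEEeqnarray}
Since $s>0$ and $\E[\log Z]\le \ell$, the first term is at most $(1+1/\ell)\ell = \ell + 1$. For the second term, $x \mapsto x^{-s}$ is decreasing, so for each $z\ge 2$ we have $z^{-s} \le \int_{z-1}^{z} x^{-s}\,dx$. Summing over $z\ge 2$ gives
\begin{IEEEeqnarray}{c}
    \zeta(s) \le 1 + \int_1^\infty x^{-s}\,dx = 1 + \frac{1}{s-1} = 1+\ell.
\end{IEEEeqnarray}
Combining the two terms yields $H(P_Z,Q_s) \le \ell + 1 + \log(\ell+1)$, which is the claimed bound.

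\textbf{Main obstacle: the boundary case $\ell = 0$.} Here $\operatorname{Zipf}(1+1/\ell)$ is not defined literally. The hypothesis $\E[\log Z]\le 0$ with $Z\ge 1$ forces $Z=1$ almost surely. We interpret $\operatorname{Zipf}(\infty)$ as the point mass at $1$, which is the limit of $Q_s$ as $s\to\infty$. Then both sides equal $0 \le 0+\log 1+1$, so the statement still holds. Apart from this convention, the only care needed is that the logarithm bases agree: the factor $s$ multiplying $\E[\log Z]$ is base-independent, so no conversion constant appears.
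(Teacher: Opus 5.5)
Your proof is correct and follows the standard argument behind the cited result; the paper gives no proof of its own and only cites \citet{li2018}. The argument runs: Gibbs' inequality gives the first bound, writing the cross-entropy as $(1+1/\ell)\E[\log Z]+\log\zeta(1+1/\ell)$ and using the integral comparison $\zeta(s)\le 1+1/(s-1)=1+\ell$ gives the second, and your treatment of the degenerate case $\ell=0$ is a harmless extra since the paper only applies the result with $\ell=I(X;Y\mid\bar{U}^N)+1\ge 1$.
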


As a result, if we use a Shannon code designed for $\operatorname{Zipf}(1 + 1 / (I(X;Y \mid \bar{U}^N) + 1))$ as the $\mathtt{enc}$, $\mathtt{dec}$ pair in \cref{alg:permuted_scheme}, it satisfies the upper bound on the encoding length in \eqref{eqn:coding_length_bound}. \hfill \IEEEQED{}

\subsection{Proof of Proposition~\ref{prop:multi_sample_correctness}}\label[appendix]{sec:multi_sample_correctness_proof}
In this section, we prove that the multi-sample permuted scheme performs exact channel simulation.
The approach is essentially the same as that used in \cref{sec:correctness_proof} apart from the slight modification to the posterior distribution over $K_m$, which accounts for the presence of several samples per mode.
Supposing $X = 1$ without loss of generality and letting the sample output by the algorithm be $\tilde{Y}$, the conditional density function of $\tilde{Y}$ is
\begin{IEEEeqnarray}{rCl}
    p_{\tilde{Y} \mid X, \bar{U}^{Nm}}(y \mid 1, u^{Nm}) & = & \sum_{k=1}^{Nm} \delta(y - u_k) P_{K_m \mid X, \bar{U}^{Nm}}(k \mid 1, u^{Nm}) \\
    & = & \frac{1}{m} \sum_{k=1}^{Nm} \delta(y - u_k) \Pr[k \in \Pi(1) \mid \bar{U}^{Nm} = u^{Nm}].
\end{IEEEeqnarray}
Then, as in the proof of \cref{prop:correctness}, we integrate out the joint density of the $Nm$ samples to get
\begin{IEEEeqnarray}{c}
    p_{\tilde{Y} \mid X}(y \mid 1) = \idotsint \frac{1}{m} \sum_{k=1}^{Nm} \delta(y - u_k) \Pr[k \in \Pi(1) \mid \bar{U}^{Nm} = u^{Nm}] \, du_1 \cdots du_{Nm} \IEEEeqnarraynumspace
\end{IEEEeqnarray}
and apply Bayes' rule to write
\begin{IEEEeqnarray}{c}
    \Pr[k \in \Pi(1) \mid \bar{U}^{Nm} = u^{Nm}] = \frac{p_{\bar{U}^{Nm}}(u^{Nm} \mid k \in \Pi(1))}{N p_{\bar{U}^{Nm}}(u^{Nm})}
\end{IEEEeqnarray}
in order to make the substitution
\begin{IEEEeqnarray}{c}
    p_{\tilde{Y} \mid X}(y \mid 1) = \frac{1}{Nm} \sum_{k=1}^{Nm} \idotsint \delta(y - u_k) p_{\bar{U}^{Nm}}(u^{Nm} \mid k \in \Pi(1)) \, du_1 \cdots du_{Nm}.
\end{IEEEeqnarray}
Analyzing the inner integral when $k = 1$ without loss of generality,
\begin{IEEEeqnarray}{rCl}
    \IEEEeqnarraymulticol{3}{l}{
        \idotsint \delta(y - u_1) p_{\bar{U}^{Nm}}(u^{Nm} \mid 1 \in \Pi(1)) \, du_1 \cdots du_{Nm}
    }\nonumber\\* \quad
    & = & \int \delta(y - u_1) p_{\bar{U}_1}(u_1 \mid 1 \in \Pi(1)) \, du_1 \nonumber \\
    &&\< \times \idotsint p_{\bar{U}^{Nm \setminus \{ 1 \}}}(u^{Nm \setminus \{ 1 \}} \mid 1 \in \Pi(1)) \, du_2 \cdots du_{Nm} \\
    & = & p_{\bar{U}_1}(y \mid 1 \in \Pi(1)) \\
    & = & p_{Y \mid X}(y \mid 1)
\end{IEEEeqnarray}
where $\bar{U}^{Nm \setminus \{ 1 \}}$ denotes the set of all samples excluding $\bar{U}_1$.
As the same simplification holds for any $k$, we find
\begin{IEEEeqnarray}{c}
    p_{\tilde{Y} \mid X}(y \mid 1) = \frac{1}{Nm} \sum_{k=1}^{Nm} p_{Y \mid X}(y \mid 1) = p_{Y \mid X}(y \mid 1).
\end{IEEEeqnarray}
The same argument holds for any input $X \in [N]$. \hfill \IEEEQED{}

\subsection{Proof of Theorem~\ref{thm:asymptotic}}\label[appendix]{sec:asymptotic_proof}
As mentioned in the theorem statement, we limit our attention to the case of $N = 2$.
Following the steps used to prove \cref{thm:cost_upper_bound} in \cref{sec:upper_bound_proof}, with the target distribution given $X = x$ and $\bar{U}^{2m} = u^{2m}$ being $P(k) = P_{K_m \mid X, \bar{U}^{2m}}(k \mid x, u^{2m})$ as in \eqref{eqn:multisample_target_distribution}, and the prior being $Q(k) = P_{K_m \mid \bar{U}^{2m}}(k \mid u^{2m})$, the expected message length when there are $m$ samples per mode satisfies
\begin{IEEEeqnarray}{c}
    \E[\ell_m(M)] \leq I(X; K_m \mid \bar{U}^{2m}) + \log(I(X; K_m \mid \bar{U}^{2m}) + 2) + 3. \label{eqn:M_length_ineq}
\end{IEEEeqnarray}
Expanding, we note that
\begin{IEEEeqnarray}{c}
    I(X; K_m \mid \bar{U}^{2m}) = H(X) - H(X \mid K_m, \bar{U}^{2m}) \label{eqn:mut_inf_expansion}
\end{IEEEeqnarray}
as $X$ is independent of $\bar{U}^{2m}$.
Then, the term depending on $m$ is
\begin{IEEEeqnarray}{c}
    H(X \mid K_m, \bar{U}^{2m}) = \E[ h_B(p_m(K_m, \bar{U}^{2m})) ]
\end{IEEEeqnarray}
where $h_B(p) = -p \log p - (1 - p) \log(1 - p)$ is the binary entropy function on $p \in [0,1]$ and $p_m(k, u^{2m}) = P_{X \mid K_m, \bar{U}^{2m}}(1 \mid k, u^{2m})$.
Also define $q_m(k, u^{2m}) = P_{X \mid Y}(1 \mid u_k)$.
To continue, we make use of the following lemma, the proof of which is deferred to \cref{sec:posterior_limit_proof}.

\begin{lemma}\label{lem:posterior_limit}
We have that
\begin{IEEEeqnarray}{c}
    \lim_{m \to \infty} \abs{p_m(K_m, \bar{U}^{2m}) - q_m(K_m, \bar{U}^{2m})} = 0 \text{ a.s.}
\end{IEEEeqnarray}
\end{lemma}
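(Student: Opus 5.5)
The plan is to compute both posteriors in closed form and reduce the claim to a ratio of elementary symmetric polynomials. That ratio is then handled by exponential tilting plus a local limit theorem for Poisson binomial distributions.

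\textbf{Reduction to posterior odds.} Write $f_x = p_{Y \mid X}(\cdot \mid x)$ and $\pi_x = P_X(x)$, and assume $f_1 \neq f_2$ on a set of positive measure; otherwise $p_m = q_m = \pi_1$ trivially. By Bayes' rule and \eqref{eqn:multisample_target_distribution},
\begin{IEEEeqnarray}{c}
p_m(k,u^{2m}) = \frac{\pi_1 r_k}{\pi_1 r_k + \pi_2 (1 - r_k)}, \qquad r_k = \Pr[k \in \Pi(1) \mid \bar{U}^{2m} = u^{2m}],
\end{IEEEeqnarray}
while $q_m(k, u^{2m}) = \pi_1 f_1(u_k)/(\pi_1 f_1(u_k) + \pi_2 f_2(u_k))$. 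Since the map from odds to probability is $1$-Lipschitz in the log-odds, it suffices to show
\begin{IEEEeqnarray}{c}
\max_k \abs*{\log\frac{r_k}{1-r_k} - \log\frac{f_1(u_k)}{f_2(u_k)}} \to 0 \quad \text{a.s.}
\end{IEEEeqnarray}
Taking the maximum over all $k$ removes any difficulty from $K_m$ being data dependent.

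\textbf{Symmetric polynomial form.} Summing over the $\binom{2m}{m}$ equally likely assignments, put $\lambda_j = f_1(u_j)/f_2(u_j)$. Then
\begin{IEEEeqnarray}{c}
\frac{r_k}{1-r_k} = \lambda_k \frac{e_{m-1}(\lambda_{-k})}{e_m(\lambda_{-k})},
\end{IEEEeqnarray}
where $e_j$ is the $j$-th elementary symmetric polynomial in the $2m-1$ variables $\lambda_{-k} = (\lambda_j)_{j \neq k}$. So we need $\max_k \abs{\log (e_{m-1}(\lambda_{-k})/e_m(\lambda_{-k}))} \to 0$.

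\textbf{Tilting.} For $\theta > 0$, $e_j(\lambda_{-k})\theta^j / \prod_{i \neq k}(1+\lambda_i\theta) = \Pr[S_\theta = j]$, where $S_\theta$ is a sum of independent Bernoulli$(\lambda_i\theta/(1+\lambda_i\theta))$ variables. Choose $\theta = \theta_{m,k}$ so that $\E[S_\theta] = m - 1/2$; this is possible because the mean is increasing in $\theta$. Then
\begin{IEEEeqnarray}{c}
\frac{e_{m-1}}{e_m} = \theta \, \frac{\Pr[S_\theta = m-1]}{\Pr[S_\theta = m]}.
\end{IEEEeqnarray}

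\textbf{Step 1: $\theta_{m,k} \to 1$ uniformly in $k$.} At $\theta = 1$, the summand $\lambda_j/(1+\lambda_j) = f_1/(f_1+f_2)(u_j)$ is bounded. The identity
\begin{IEEEeqnarray}{c}
\E_{f_1}\bigl[f_1/(f_1+f_2)\bigr] + \E_{f_2}\bigl[f_1/(f_1+f_2)\bigr] = \int f_1 = 1
\end{IEEEeqnarray}
combines with the SLLN, applied separately to the $m$ samples from each mode (the permutation does not affect unordered sums). This gives $\frac{1}{m}\sum_j \lambda_j/(1+\lambda_j) \to 1$ a.s. Dropping index $k$ changes the sum by at most $1$. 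The derivative of $\E[S_\theta]$ with respect to $\log\theta$ is $\Var(S_\theta)$. This is at least $c m$ a.s. for large $m$ and all $\theta$ in a neighbourhood of $1$, by the SLLN applied to $\lambda\theta/(1+\lambda\theta)^2$, which is nonzero because $f_1 \neq f_2$. Hence $\abs{\log\theta_{m,k}} = O(m^{-1}) \cdot o(m) \to 0$.

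\textbf{Step 2: the probability ratio tends to $1$.} This is the main obstacle. We need $\Pr[S_\theta = m-1]/\Pr[S_\theta = m] \to 1$ uniformly in $k$. Both points lie within $1/2$ of the mean, and the variance $\sigma^2 \geq cm \to \infty$ with bounded summands. I would invoke a local limit theorem with remainder for Poisson binomial sums (e.g.\ \citet{mamatov1965}; see also \citet{tang2023}). It gives
\begin{IEEEeqnarray}{c}
\Pr[S_\theta = j] = (2\pi\sigma^2)^{-1/2}\bigl(e^{-(j-\mu)^2/2\sigma^2} + O(\sigma^{-1})\bigr).
\end{IEEEeqnarray}
The main terms at $j = m-1, m$ are both $1 + O(1/m)$, so the ratio is $1 + O(m^{-1/2})$. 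The care needed is to check that the remainder constant depends only on the summands being bounded and on $\sigma^2/m$ being bounded below. Both hold a.s.\ eventually and uniformly over $k$ and over $\theta$ near $1$, since removing one index perturbs the variance by at most $1/4$.

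Combining Steps 1 and 2 gives $\max_k \abs{\log(e_{m-1}/e_m)} \to 0$ a.s. By the reduction above, $\abs{p_m(K_m,\bar U^{2m}) - q_m(K_m,\bar U^{2m})} \to 0$ a.s.
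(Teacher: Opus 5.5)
Your proof is correct in substance. It differs from the paper's in how the local limit theorem is used.

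\textbf{Comparison with the paper.} The paper works with the untilted Poisson binomial. That is $\theta=1$, with success probabilities $p_i = z_i^{(1)}/(z_i^{(1)}+z_i^{(2)})$, so its mean $\mu_m^{(k)}$ can differ from $m$ by order $\sqrt{m\ln\ln m}$. The Gaussian main term $e^{-(\mu_m^{(k)}-m)^2/2B_m^{(k)}}$ can then be as small as $(\ln m)^{-E}$. The paper needs the law of the iterated logarithm precisely to show that Mamatov's remainder $C/\sqrt{B_m^{(k)}}$ is still negligible against it, i.e.\ $(\ln m)^{E}/\sqrt{m}\to 0$. It then separately shows that the ratio of Gaussian terms, $\exp\bigl((m-\mu_m^{(k)})/B_m^{(k)} - 1/(2B_m^{(k)})\bigr)$, tends to $1$.

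Your tilt recentres the sum at $m-\tfrac12$, so the two main terms coincide. The ratio is then $1+O(m^{-1/2})$ with no information about the mean deviation needed. That deviation is pushed entirely into $\theta_{m,k}$, for which the SLLN suffices ($o(m)$ deviation against $\Theta(m)$ variance). Your route is therefore more robust and dispenses with the LIL, at the cost of an implicit-equation step.

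That step is easy to make rigorous uniformly in $k$ and in $\theta$. Use $\lambda\theta/(1+\lambda\theta)^2 \ge e^{-\abs{\log\theta}}\lambda/(1+\lambda)^2$, which gives $\Var(S_\theta)\ge e^{-\abs{\log\theta}}B_m^{(k)}$. Your concern about the remainder constant in Step 2 is moot: Mamatov's constant is absolute and $L_n\le b_n^{-1/2}$.

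\textbf{A correction to your case split.} The condition $\E[\lambda/(1+\lambda)^2]>0$ holds iff the supports of $f_1$ and $f_2$ overlap on a set of positive measure. This is the paper's condition $\Pr[0<p_i<1]>0$, not $f_1\neq f_2$.
\begin{itemize}
\item With disjoint supports, every Bernoulli term is degenerate and $\E[S_\theta]\in\{m-1,m\}$ for all $\theta$, so $\theta_{m,k}$ does not exist. This case must be split off; it is trivial, since then $p_m=q_m\in\{0,1\}$.
\item The case $f_1=f_2$, which you excluded, needs no separate treatment.
\item In the overlapping case, some $\lambda_j$ can still be $0$ or $\infty$. Write the identity in terms of $z_j^{(1)},z_j^{(2)}$ so that these become deterministic Bernoulli terms that tilting leaves unchanged (the paper's $G_m^{(k)}$).
\item Handle $\lambda_k\in\{0,\infty\}$ directly; there $p_m=q_m$ exactly.
\item Note that eventually a.s.\ at most $m-1$ of the remaining terms are deterministic in each direction. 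This is what guarantees that $\theta_{m,k}$ exists.
\end{itemize}
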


As $h_B(p)$ is uniformly continuous on $[0, 1]$, \cref{lem:posterior_limit} implies that
\begin{IEEEeqnarray}{c}
    \lim_{m \to \infty} \abs{h_B(p_m(K_m, \bar{U}^{2m})) - h_B(q_m(K_m, \bar{U}^{2m}))} = 0 \text{ a.s.}
\end{IEEEeqnarray}
Furthermore, as $h_B(p)$ is bounded between 0 and 1, the difference is also bounded.
Applying the bounded convergence theorem then gives
\begin{IEEEeqnarray}{rCl}
    \IEEEeqnarraymulticol{3}{l}{
        \lim_{m \to \infty} \E[h_B(p_m(K_m, \bar{U}^{2m})) - h_B(q_m(K_m, \bar{U}^{2m}))]
    }\nonumber\\* \quad
    & = & \E\Bigl[ \lim_{m \to \infty} ( h_B(p_m(K_m, \bar{U}^{2m})) - h_B(q_m(K_m, \bar{U}^{2m})) ) \Bigr] \\
    & = & 0.
\end{IEEEeqnarray}
But, for arbitrary $m$,
\begin{IEEEeqnarray}{rCl}
    \E[h_B(q_m(K_m, \bar{U}^{2m}))] & = & \E_{K_m, \bar{U}^{2m}}[h_B(P_{X \mid Y}(1 \mid \bar{U}_{K_m}))] \\
    & = & \E_{\bar{U}_{K_m}}[h_B(P_{X \mid Y}(1 \mid \bar{U}_{K_m}))] \\
    & = & \E_{Y \sim P_Y}[h_B(P_{X \mid Y}(1 \mid Y))] \label{eqn:uk_to_y_swap} \\ 
    & = & H(X \mid Y)
\end{IEEEeqnarray}
where in \eqref{eqn:uk_to_y_swap} we have used the fact that, as $\bar{U}_{K_m}$ is the output selected by the multi-sample permuted scheme when there are $m$ samples per mode, the guarantee of exact channel simulation from \cref{prop:multi_sample_correctness} ensures that $\bar{U}_{K_m} \sim P_Y$ for any $m$.
It follows that
\begin{IEEEeqnarray}{c}
    \lim_{m \to \infty} \E[h_B(p_m(K_m, \bar{U}^{2m}))] = H(X \mid Y).
\end{IEEEeqnarray}
Finally, substituting into \eqref{eqn:mut_inf_expansion}, then \eqref{eqn:M_length_ineq}, and using continuity, we arrive at the desired result. \hfill \IEEEQED{}

\subsection{Proof of Lemma~\ref{lem:posterior_limit}}\label[appendix]{sec:posterior_limit_proof}
We start by dispensing with a trivial case.
If the intersection of the supports of the conditional densities at the two modes, $p_{Y \mid X}(\cdot \mid 1)$ and $p_{Y \mid X}(\cdot \mid 2)$, has zero probability, then the output $Y = \bar{U}_{K_m}$ immediately determines the input $X$.
In this case,
\begin{IEEEeqnarray}{c}
    P_{X \mid K_m, \bar{U}^{2m}}(1 \mid K_m, \bar{U}^{2m}) = P_{X \mid Y}(1 \mid \bar{U}_{K_m}) = \1\{ \bar{U}_{K_m} \in \operatorname{supp} p_{Y \mid X}(\cdot \mid 1) \}
\end{IEEEeqnarray}
and the required equality holds trivially for all $m$.

Otherwise, we start by evaluating the pointwise posterior probability for a given $k \in [2m]$, and shared randomness realization $\bar{U}^{2m} = u^{2m}$.
Using Bayes' rule,
\begin{IEEEeqnarray}{c}
    P_{X \mid K_m, \bar{U}^{2m}}(1 \mid k, u^{2m}) = \frac{P_{K_m \mid X, \bar{U}^{2m}}(k \mid 1, u^{2m}) P_X(1)}{\sum_{x=1}^{2} P_{K_m \mid X, \bar{U}^{2m}}(k \mid x, u^{2m}) P_X(x)}.
\end{IEEEeqnarray}
Substituting the posterior distribution as defined in \eqref{eqn:multisample_target_distribution}, we get
\begin{IEEEeqnarray}{c}
    P_{X \mid K_m, \bar{U}^{2m}}(1 \mid k, u^{2m}) = \frac{\Pr[k \in \Pi(1) \mid \bar{U}^{2m} = u^{2m}] P_X(1)}{\sum_{x=1}^{2} \Pr[k \in \Pi(x) \mid \bar{U}^{2m} = u^{2m}] P_X(x)}. \label{eqn:expanded_posterior_1}
\end{IEEEeqnarray}
We next turn our attention to computing $\Pr[k \in \Pi(x) \mid \bar{U}^{2m} = u^{2m}]$ which, as detailed in \cref{sec:permuted_scheme_impl} for the single-sample permuted scheme, involves the computation of likelihood matrix permanents.
To make this precise, by applying Bayes' rule a second time after noting that $\Pr[k \in \Pi(x)] = 1/2$ for any $k$ and $x$, we see that
\begin{IEEEeqnarray}{c}
    \Pr[k \in \Pi(x) \mid \bar{U}^{2m} = u^{2m}] = \frac{p_{\bar{U}^{2m}}(u^{2m} \mid k \in \Pi(x))}{\sum_{x'=1}^{2} p_{\bar{U}^{2m}}(u^{2m} \mid k \in \Pi(x'))}.
\end{IEEEeqnarray}
As in \cref{sec:multi_sample_correctness_proof}, let $\bar{U}^{2m \setminus \{ k \}}$ denote the set of all samples excluding $\bar{U}_k$.
We can then express the conditional density function as
\begin{IEEEeqnarray}{c}
    p_{\bar{U}^{2m}}(u^{2m} \mid k \in \Pi(x)) = p_{Y \mid X}(u_k \mid x) p_{\bar{U}^{2m \setminus \{ k \}}}(u^{2m \setminus \{ k \}} \mid k \in \Pi(x)).
\end{IEEEeqnarray}
Going back to \eqref{eqn:expanded_posterior_1}, we get
\begin{IEEEeqnarray}{rCl}
    P_{X \mid K_m, \bar{U}^{2m}}(1 \mid k, u^{2m}) & = & \frac{p_{\bar{U}^{2m}}(u^{2m} \mid k \in \Pi(1)) P_X(1)}{\sum_{x=1}^{2} p_{\bar{U}^{2m}}(u^{2m} \mid k \in \Pi(x)) P_X(x)} \\
    & = & \frac{p_{Y \mid X}(u_k \mid 1) p_{\bar{U}^{2m \setminus \{ k \}}}(u^{2m \setminus \{ k \}} \mid k \in \Pi(1)) P_X(1)}{\sum_{x=1}^{2} p_{Y \mid X}(u_k \mid x) p_{\bar{U}^{2m \setminus \{ k \}}}(u^{2m \setminus \{ k \}} \mid k \in \Pi(x)) P_X(x)}. \IEEEeqnarraynumspace \label{eqn:expanded_posterior}
\end{IEEEeqnarray}
Next define the likelihood ratio of the rejected samples as
\begin{IEEEeqnarray}{c}
    \Lambda_m(k, u^{2m}) = \frac{p_{\bar{U}^{2m \setminus \{ k \}}}(u^{2m \setminus \{ k \}} \mid k \in \Pi(1))}{p_{\bar{U}^{2m \setminus \{ k \}}}(u^{2m \setminus \{ k \}} \mid k \in \Pi(2))}.
\end{IEEEeqnarray}
To analyze this ratio, we introduce the $2m \times 2m$ likelihood matrix $L(u^{2m})$ where $L_{ij} = p_{Y \mid X}(u_i \mid 1)$ for $1 \leq j \leq m$, and $p_{Y \mid X}(u_i \mid 2)$ for $m+1 \leq j \leq 2m$.
This matrix contains only two unique columns, let these be $z^{(1)}$ and $z^{(2)}$, with 
\begin{IEEEeqnarray}{c}
    z^{(1)} = \begin{bmatrix} p_{Y \mid X}(u_1 \mid 1) \\ \vdots \\ p_{Y \mid X}(u_{2m} \mid 1) \end{bmatrix} \text{ and } z^{(2)} = \begin{bmatrix} p_{Y \mid X}(u_1 \mid 2) \\ \vdots \\ p_{Y \mid X}(u_{2m} \mid 2) \end{bmatrix}. \label{eqn:prob_matrices}
\end{IEEEeqnarray}
With this established, note that
\begin{enumerate}
    \item $p_{\bar{U}^{2m \setminus \{ k \}}}(u^{2m \setminus \{ k \}} \mid k \in \Pi(1))$ is proportional to the permanent of the sub-matrix with row $k$ and one of the columns equal to $z^{(1)}$ removed.
    \item $p_{\bar{U}^{2m \setminus \{ k \}}}(u^{2m \setminus \{ k \}} \mid k \in \Pi(2))$ is proportional to the permanent of the sub-matrix with row $k$ and one of the columns equal to $z^{(2)}$ removed.
\end{enumerate}
Since the matrix permanent is a polynomial of the elements of the matrix, we can also express the desired probabilities in terms of a polynomial.
Define
\begin{IEEEeqnarray}{c}
    P_k(w) = \prod_{i \in [2m] \setminus \{ k \}} (w z_i^{(1)} + z_i^{(2)}) = \sum_{j=0}^{2m-1} C_j^{(k)} w^j.
\end{IEEEeqnarray}
Each term in $P_k(w)$ represents the joint likelihood of a given assignment of all $2m-1$ remaining samples between the two modes; $p_{\bar{U}^{2m \setminus \{ k \}}}(u^{2m \setminus \{ k \}} \mid k \in \Pi(1))$ is proportional to the sum of all those which correspond to selecting $m-1$ likelihoods from $z^{(1)}$ and $m$ from $z^{(2)}$.
This sum is the coefficient $C_{m-1}^{(k)}$ in the expanded polynomial.
Similarly, $p_{\bar{U}^{2m \setminus \{ k \}}}(u^{2m \setminus \{ k \}} \mid k \in \Pi(2))$ is proportional to $C_m^{(k)}$, with the same constant of proportionality.
Thus, the probability ratio becomes $\Lambda_m(k, u^{2m}) = C_{m-1}^{(k)} / C_{m}^{(k)}$.

To continue, we normalize each term of the polynomial to get
\begin{IEEEeqnarray}{c}
    \hat{P}_k(w) = \prod_{i \in [2m] \setminus \{ k \}} (w p_i + (1 - p_i)) = \sum_{j=0}^{2m-1} \hat{C}_j^{(k)} w^j
\end{IEEEeqnarray}
where
\begin{IEEEeqnarray}{c}
    p_i = \frac{z_i^{(1)}}{z_i^{(1)} + z_i^{(2)}} \text{ and } 1 - p_i = \frac{z_i^{(2)}}{z_i^{(1)} + z_i^{(2)}}. \label{eqn:p_def}
\end{IEEEeqnarray}
We note that when the $p_i$'s are derived from our random sample pool $\bar{U}^{2m}$, the event $\{ z_i^{(1)} = z_i^{(2)} = 0 \}$ has probability zero, since each sample is sourced from one of the modes and has a strictly positive density evaluated under the mode that produced it.
Furthermore, as the intersection of the supports of $p_{Y \mid X}(\cdot \mid 1)$ and $p_{Y \mid X}(\cdot \mid 2)$ has nonzero probability outside of the trivial case already dealt with, we will have $\Pr[0 < p_i < 1] > 0$.

Now, we can analyze $\hat{P}_k(w)$ by recognizing that it is the probability generating function of a Poisson binomial random variable with $n = 2m-1$ trials \citep{tang2023}.
In particular, for $1 \leq i \leq 2m$, let $X_i$ be a Bernoulli random variable with success probability $p_i$, and let
\begin{IEEEeqnarray}{c}
    S_k(u^{2m}) = \sum_{i \in [2m] \setminus \{ k \}} X_i. \label{eqn:sk_def}
\end{IEEEeqnarray}
Then $S_k(u^{2m})$ is a Poisson binomial random variable for each $k$, and its probability generating function is $\hat{P}_k(w)$.
For $0 \leq j \leq n$, we have that $\hat{C}_j^{(k)} = \Pr[S_k(u^{2m}) = j]$ and so
\begin{IEEEeqnarray}{c}
    \Lambda_m(k, u^{2m}) = \frac{\Pr[S_k(u^{2m}) = m-1]}{\Pr[S_k(u^{2m}) = m]}. \label{eqn:R_2M_indiv}
\end{IEEEeqnarray}

While \eqref{eqn:R_2M_indiv} holds for a particular realization of $(K_m, \bar{U}^{2m})$, we are interested in the ensemble behavior when the full pool of $2m$ shared random samples is generated following $P_{\bar{U}^{2m}}$ and the index $K_m$ is also random.
The following lemma establishes the asymptotic properties that will allow us to adapt our results to this context.

\begin{lemma}\label{lem:prob_ratio_convergence}
Assume $\Pr[0 < p_i < 1] > 0$. Then,
\begin{IEEEeqnarray}{c}
    \adjustlimits \lim_{m \to \infty} \sup_{1 \leq k \leq 2m} \abs*{\frac{\Pr[S_k(\bar{U}^{2m}) = m - 1]}{\Pr[S_k(\bar{U}^{2m}) = m]} - 1} = 0 \text{ a.s.}
\end{IEEEeqnarray}
\end{lemma}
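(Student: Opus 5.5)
The plan is to treat $S_k(\bar{U}^{2m})$, conditionally on the sample pool, as a Poisson binomial variable. I would control its mean and variance almost surely using the strong law of large numbers, and then apply a uniform local limit theorem for Poisson binomial sums (the Mamatov-type bound \citep{mamatov1965}) to compare the two point masses at $m-1$ and $m$.

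\textbf{Step 1 (mean and variance).} Write $f_x = p_{Y \mid X}(\cdot \mid x)$ and $p(u) = f_1(u)/(f_1(u)+f_2(u))$. Under $P_{\bar{U}^{2m}}$, exactly $m$ samples are iid from $f_1$ and $m$ are iid from $f_2$; the permutation only relabels indices, and both quantities below are permutation-invariant. Set $\mu_k = \sum_{i \neq k} p_i$ and $\sigma_k^2 = \sum_{i \neq k} p_i(1-p_i)$. The key identity is
\begin{IEEEeqnarray}{c}
    \E_{f_1}[p] + \E_{f_2}[p] = \int \frac{f_1^2 + f_1 f_2}{f_1+f_2} \, du = 1,
\end{IEEEeqnarray}
so $\E[\sum_{i} p_i] = m$ exactly. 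By the SLLN applied separately to the two iid halves (a single sequence of samples per mode, nested in $m$), $\sum_i p_i = m + o(m)$ a.s. Similarly, $\sigma_k^2/m \to c := \E_{f_1}[p(1-p)] + \E_{f_2}[p(1-p)]$ a.s., and $c > 0$ because $\Pr[0<p_i<1]>0$. Removing a single index $k$ changes $\mu_k$ by at most $1$ and $\sigma_k^2$ by at most $1/4$, so these statements hold uniformly in $k$. Hence $\sup_k \abs{m-\mu_k} = o(m)$ and $\inf_k \sigma_k^2 \geq cm/2$ eventually, a.s.

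\textbf{Step 2 (local limit comparison).} The Poisson binomial local limit theorem gives, with an absolute constant $C$,
\begin{IEEEeqnarray}{c}
    \sup_j \abs*{\Pr[S_k = j] - \frac{1}{\sigma_k}\phi\Bigl(\frac{j-\mu_k}{\sigma_k}\Bigr)} \leq \frac{C}{\sigma_k^2}.
\end{IEEEeqnarray}
The Gaussian main terms at $j=m-1$ and $j=m$ have ratio
\begin{IEEEeqnarray}{c}
    \exp\bigl((2(m-\mu_k)-1)/(2\sigma_k^2)\bigr),
\end{IEEEeqnarray}
which tends to $1$ uniformly in $k$ because $(m-\mu_k)/\sigma_k^2 \to 0$.

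\textbf{Main obstacle.} The local limit error $C/\sigma_k^2 \asymp 1/m$ must be negligible relative to the main term $\phi(t_k)/\sigma_k$, where $t_k = (m-\mu_k)/\sigma_k$. The SLLN alone only gives $t_k = o(\sqrt{m})$, which does not suffice. I would sharpen Step 1 with the law of the iterated logarithm: $p$ is bounded, so $\sum_i p_i - m = O(\sqrt{m \log\log m})$ a.s., and therefore $t_k = O(\sqrt{\log\log m})$ uniformly in $k$. Then $\phi(t_k)/\sigma_k \geq (\log m)^{-a}/\sqrt{m}$ for some fixed $a$, which dominates $1/m$. Consequently both point masses equal their Gaussian approximations up to a factor $1+o(1)$, and the ratio tends to $1$ uniformly in $k$, a.s.

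As a fallback in case the local limit theorem constants prove awkward, I would use log-concavity of the Poisson binomial law: the ratio $\Pr[S_k=j-1]/\Pr[S_k=j]$ is monotone in $j$. Combined with the integral CLT over windows of width $\epsilon\sigma_k$ around $m$, this pins the ratio at $j=m$ between $1\pm O(\epsilon)$.
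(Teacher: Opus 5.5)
Your proposal is correct and follows essentially the same route as the paper's proof. You use the exact-mean identity, control the Poisson-binomial mean and variance uniformly in $k$ via the law of the iterated logarithm and the SLLN, apply Mamatov's local limit theorem with error $C/\sigma_k^2$, and use the LIL so that the relative error $(\ln m)^{O(1)}/\sqrt{m}$ vanishes. The only differences are cosmetic: you apply the limit theorems to each mode's iid sequence rather than to the paper's iid pairs $\bar{U}^2_j$, and you leave implicit the paper's step of discarding deterministic trials ($p_i \in \{0,1\}$) before invoking Mamatov. That omission is harmless, since such trials shift the sum and its mean by the same constant.
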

The uniform convergence result from \cref{lem:prob_ratio_convergence} implies that $\lim_{m \to \infty} \Lambda_m(K_m, \bar{U}^{2m}) = 1$ almost surely, and we are interested in computing $P_{X \mid K_m, \bar{U}^{2m}}(1 \mid K_m, \bar{U}^{2m})$ in the limit.
Consider an arbitrary sample path $\{ u^{2m}, k_m \}_{m \geq 1}$ for which $\lim_{m \to \infty} \Lambda_m(k_m, u^{2m}) = 1$, and define
\begin{IEEEeqnarray}{rCl}
    p_m & = & p_m(k_m, u^{2m}) = P_{X \mid K_m, \bar{U}^{2m}}(1 \mid k_m, u^{2m}) \\
    q_m & = & q_m(k_m, u^{2m}) = P_{X \mid Y}(1 \mid u_{k_m}) \\
    a_m & = & p_{Y \mid X}(u_{k_m} \mid 1) P_X(1) \\
    b_m & = & p_{Y \mid X}(u_{k_m} \mid 2) P_X(2).
\end{IEEEeqnarray}
We also drop the arguments of $\Lambda_m(k_m, u^{2m})$.
Using \eqref{eqn:expanded_posterior} and applying Bayes' rule to $q_m$, we get
\begin{IEEEeqnarray}{rCl}
    0 & \leq & \lim_{m \to \infty} \abs{p_m - q_m} \\
    & = & \lim_{m \to \infty} \abs*{ \frac{a_m \Lambda_m}{a_m \Lambda_m + b_m} - \frac{a_m}{a_m + b_m} } \\
    & = & \lim_{m \to \infty} \abs*{ \frac{q_m \Lambda_m}{q_m \Lambda_m + 1 - q_m} - q_m } \\
    & = & \lim_{m \to \infty} \frac{q_m (1 - q_m) \abs{\Lambda_m - 1}}{1 + q_m(\Lambda_m - 1)} \\
    & \leq & \lim_{m \to \infty} \frac{1}{2} \abs{\Lambda_m - 1} \label{eqn:final_limit_simplification}
\end{IEEEeqnarray}
where in \eqref{eqn:final_limit_simplification} we have used the fact that as $\Lambda_m \to 1$, there exists some $M_0$ for which $\Lambda_m \geq 1/2$ for all $m > M_0$, and therefore for which the denominator $1 + q_m(\Lambda_m - 1) \geq 1/2$.
Also, $q_m (1 - q_m) \leq 1/4$.
Taking the limit of $\Lambda_m$ and noting that the argument holds, by \cref{lem:prob_ratio_convergence}, on a set of sample paths of probability one, we complete the proof. \hfill \IEEEQED{}

\subsection{Proof of Lemma~\ref{lem:prob_ratio_convergence}}\label[appendix]{sec:prob_ratio_convergence_proof}
The main difficulty in proving \cref{lem:prob_ratio_convergence} is that the success probabilities $p_i$ making up the Poisson binomial distribution in question are not fixed, but are instead functions of the random variables $\bar{U}^{2m}$, which themselves are not independent.
Consequently, there are two distinct layers of randomness to deal with. 
Our first step is to recognize that, from \eqref{eqn:p_def}, $p_i$ is a function of $\bar{U}_i$ only, and we can group $\bar{U}^{2m}$ into $m$ blocks $\bar{U}^2_1,\ldots,\bar{U}_{m}^2$, each of which is indeed iid from $P_{\bar{U}^2}$.
Here, $P_{\bar{U}^2}$ is the distribution of two samples where exactly one is from $P_{Y \mid X}(\cdot \mid 1)$ and the other is from $P_{Y \mid X}(\cdot \mid 2)$.
Let $\mc{S}_j$ be the set of two indices corresponding to locations of the samples forming block $\bar{U}^2_j$, where $j \in [m]$.
With this structure established, we present some results downstream of the $p_i$'s which will be useful later.

\begin{lemma}\label{lem:outer_randomness}
Define
\begin{IEEEeqnarray}{c}
    Y_j = \sum_{i \in \mc{S}_j} \E[X_i \mid \bar{U}_i] \text{ and } V_j = \sum_{i \in \mc{S}_j} \Var(X_i \mid \bar{U}_i) \label{eqn:Y_V_def}
\end{IEEEeqnarray}
and note that $\E[X_i \mid \bar{U}_i] = p_i$ and $\Var(X_i \mid \bar{U}_i) = p_i (1 - p_i)$.
Then, the following hold almost surely:
\begin{enumerate}
    \item $\displaystyle \limsup_{m \to \infty} \frac{\bigl| \sum_{j=1}^{m} Y_j - m \bigr|}{\sqrt{2m \ln\ln m}} = \sigma_Y$, where $\sigma_Y^2 = \Var(Y_j)$.
    \item $\displaystyle \lim_{m \to \infty} \frac{\sum_{j=1}^{m} V_j}{m} = \mu_V$, where $\mu_V = \E[V_j]$.
\end{enumerate}
\end{lemma}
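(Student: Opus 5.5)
The plan is to reduce both statements to classical limit theorems for iid bounded sequences. The work lies in checking that $\{Y_j\}$ and $\{V_j\}$ really are iid sequences with the right moments.

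\textbf{Step 1: iid structure.} Both $\sum_{j=1}^m Y_j=\sum_{i=1}^{2m}p_i$ and $\sum_{j=1}^m V_j=\sum_{i=1}^{2m}p_i(1-p_i)$ are symmetric functions of the pool $\bar{U}^{2m}$. They therefore depend only on the multiset of samples, not on how the samples are indexed. Conditional on $\Pi$, the pool consists of $m$ independent samples from $P_{Y\mid X}(\cdot\mid 1)$ and $m$ independent samples from $P_{Y\mid X}(\cdot\mid 2)$, all mutually independent, and the multiset has the same law for every $\Pi$. I would therefore pair the $j$-th sample of mode $1$ with the $j$-th sample of mode $2$ to form $\mc{S}_j$. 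This realises the blocks $\bar{U}^2_j$ as iid draws from $P_{\bar{U}^2}$, without changing either sum. Hence $Y_1,Y_2,\ldots$ are iid, and so are $V_1,V_2,\ldots$. Since the sequences are infinite, I would build the whole construction on one probability space by taking an infinite iid sequence of pairs (an $m$-independent coupling). The a.s. statements are then meaningful, and the laws of $\sum_{j\le m} Y_j$ and $\sum_{j\le m} V_j$ for each fixed $m$ are unchanged.

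\textbf{Step 2: moments.} Write $f_x=p_{Y\mid X}(\cdot\mid x)$, so that $p_i=f_1(\bar U_i)/(f_1(\bar U_i)+f_2(\bar U_i))$, which is well defined a.s. Since $p_i\in[0,1]$, we have $0\le Y_j\le 2$ and $0\le V_j\le 1/2$, so all moments are finite. The key computation is the mean of $Y_j$:
\begin{IEEEeqnarray}{c}
\E[Y_j]=\int f_1\frac{f_1}{f_1+f_2}\,dy+\int f_2\frac{f_1}{f_1+f_2}\,dy=\int f_1\,dy=1 .
\end{IEEEeqnarray}
This explains the centring by $m$ in part 1.

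\textbf{Step 3: limit theorems.} Part 1 follows from the Hartman--Wintner law of the iterated logarithm applied to the iid, finite-variance sequence $Y_j-1$:
\begin{IEEEeqnarray}{c}
\limsup_{m\to\infty}\frac{\sum_{j=1}^m Y_j-m}{\sqrt{2m\ln\ln m}}=\sigma_Y .
\end{IEEEeqnarray}
Applying the same theorem to $-(Y_j-1)$ gives the corresponding $\liminf$ statement, and together these give the statement with absolute values. If $\sigma_Y=0$, then $Y_j=1$ a.s. and the result is trivial. Part 2 follows from Kolmogorov's strong law of large numbers applied to the bounded iid sequence $V_j$.

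The main obstacle is Step 1. The paper's $\bar{U}^{2m}$ is generated afresh for each $m$, and its coordinates are not independent because of the permutation. One must therefore argue carefully that the relevant sums have the same law as partial sums of a single iid sequence of blocks. Because the sums are permutation invariant, this coupling is legitimate, and the a.s. statements are understood with respect to that coupling.
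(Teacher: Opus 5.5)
Your proposal is correct and follows the same route as the paper. Like the paper, you group the pool into iid bounded blocks, show $\E[Y_j]=1$ via the same integral $\int f_1 \frac{f_1}{f_1+f_2} + \int f_2 \frac{f_1}{f_1+f_2} = \int f_1 = 1$, and then apply the law of the iterated logarithm and the strong law of large numbers. Your additions make rigorous what the paper leaves implicit when it asserts the blocks are iid and applies the LIL directly: the explicit nested coupling across $m$ (pairing the $j$-th mode-1 sample with the $j$-th mode-2 sample, which the a.s. statement genuinely needs) and the two-sided LIL argument for the absolute value.
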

\begin{IEEEproof}
Note that $Y_j$ and $V_j$ are both functions of $\bar{U}_j^2$ only, and as mentioned previously, $\bar{U}_1^2, \ldots, \bar{U}_m^2$ are iid draws from $P_{\bar{U}^2}$.
Moreover, the $X_i$'s are Bernoulli random variables which can only take values in $\{ 0, 1 \}$. 
Consequently, from the definitions in \eqref{eqn:Y_V_def}, $Y^m$ and $V^m$ are iid sequences of bounded random variables.

Expanding $\mc{S}_j = \{ j_1, j_2\}$, we find the mean of $Y_j$ to be
\begin{IEEEeqnarray}{c}
    \E[Y_j] = \E \biggl[ \frac{p_{Y \mid X}(\bar{U}_{j_1} \mid 1)}{p_{Y \mid X}(\bar{U}_{j_1} \mid 1) + p_{Y \mid X}(\bar{U}_{j_1} \mid 2)} + \frac{p_{Y \mid X}(\bar{U}_{j_2} \mid 1)}{p_{Y \mid X}(\bar{U}_{j_2} \mid 1) + p_{Y \mid X}(\bar{U}_{j_2} \mid 2)} \biggr].
\end{IEEEeqnarray}
As dictated by the common randomness structure, exactly one of $\bar{U}_{j_1}$ and $\bar{U}_{j_2}$ is sampled from $P_{Y \mid X}(\cdot \mid 1)$ and the other is sampled from $P_{Y \mid X}(\cdot \mid 2)$.
Assume without loss of generality that $\bar{U}_{j_1} \sim P_{Y \mid X}(\cdot \mid 1)$ and $\bar{U}_{j_2} \sim P_{Y \mid X}(\cdot \mid 2)$.
We have
\begin{IEEEeqnarray}{rCl}
    \E[Y_j] & = & \int \frac{p_{Y \mid X}(u \mid 1)^2}{p_{Y \mid X}(u \mid 1) + p_{Y \mid X}(u \mid 2)} \, du + \int \frac{p_{Y \mid X}(u \mid 1) p_{Y \mid X}(u \mid 2)}{p_{Y \mid X}(u \mid 1) + p_{Y \mid X}(u \mid 2)} \, du \\
    & = & \int \frac{p_{Y \mid X}(u \mid 1) (p_{Y \mid X}(u \mid 1) + p_{Y \mid X}(u \mid 2))}{p_{Y \mid X}(u \mid 1) + p_{Y \mid X}(u \mid 2)} \, du \\
    & = & \int p_{Y \mid X}(u \mid 1) \, du \\
    & = & 1.
\end{IEEEeqnarray}
Thus $\E[\sum_{j=1}^{m} Y_j] = m$, and applying the law of the iterated logarithm and the strong law of large numbers yields the desired results.
\end{IEEEproof}

Now we turn to analyzing the Poisson binomial random variable $S_k(\bar{U}^{2m})$ itself.
Define
\begin{IEEEeqnarray}{rCl}
    \mu_m^{(k)} & = & \sum_{i \in [2m] \setminus \{ k \}} \E[X_i \mid \bar{U}_i] = \sum_{j=1}^{m} Y_j - p_k \label{eqn:poisson_binom_def_1} \\
    B_m^{(k)} & = & \sum_{i \in [2m] \setminus \{ k \}} \Var(X_i \mid \bar{U}_i) = \sum_{j=1}^{m} V_j - p_k (1 - p_k). \label{eqn:poisson_binom_def_2}
\end{IEEEeqnarray}
As a corollary to \cref{lem:outer_randomness}, we have the following.

\begin{corollary}\label{cor:outer_randomness_extra}
Define
\begin{IEEEeqnarray}{c}
    R_m^\ast = \sup_{1 \leq k \leq 2m} \abs{\mu_m^{(k)} - m} \text{ and } B_m^\ast = \inf_{1 \leq k \leq 2m} B_m^{(k)}. \label{eqn:inf_sup_def}
\end{IEEEeqnarray}
Then, the following hold almost surely:
\begin{enumerate}
    \item $\displaystyle \limsup_{m \to \infty} \frac{R_m^\ast}{\sqrt{2m \ln\ln m}} = \sigma_Y$.
    \item $\displaystyle \lim_{m \to \infty} \frac{B_m^\ast}{m} = \mu_V$.
\end{enumerate}
\end{corollary}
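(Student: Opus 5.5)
The plan is to reduce both statements to \cref{lem:outer_randomness} by showing that removing the single index $k$ perturbs the relevant sums by a uniformly bounded amount. From \eqref{eqn:poisson_binom_def_1} and \eqref{eqn:poisson_binom_def_2} we have, for every $k \in [2m]$,
\begin{IEEEeqnarray*}{c}
    \mu_m^{(k)} - m = \Bigl(\sum_{j=1}^{m} Y_j - m\Bigr) - p_k, \qquad B_m^{(k)} = \sum_{j=1}^{m} V_j - p_k(1-p_k).
\end{IEEEeqnarray*}
Since $p_k \in [0,1]$ and $p_k(1-p_k) \in [0,1/4]$ deterministically, with bounds independent of $k$ and $m$, we obtain the sandwich bounds
\begin{IEEEeqnarray*}{c}
    \Bigl|\sum_{j=1}^{m} Y_j - m\Bigr| - 1 \leq R_m^\ast \leq \Bigl|\sum_{j=1}^{m} Y_j - m\Bigr| + 1
\end{IEEEeqnarray*}
and
\begin{IEEEeqnarray*}{c}
    \sum_{j=1}^{m} V_j - \tfrac14 \leq B_m^\ast \leq \sum_{j=1}^{m} V_j .
\end{IEEEeqnarray*}
For the lower bound on $R_m^\ast$ it suffices to take any single $k$, since the supremum dominates each term.

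For the first claim, we divide the sandwich for $R_m^\ast$ by $\sqrt{2m\ln\ln m}$. The additive constants $\pm 1$ then vanish as $m \to \infty$, so the $\limsup$ of $R_m^\ast/\sqrt{2m\ln\ln m}$ equals that of $|\sum_j Y_j - m|/\sqrt{2m\ln\ln m}$. By part 1 of \cref{lem:outer_randomness}, the latter equals $\sigma_Y$ almost surely. Concretely, we use $\limsup(a_m + c_m) = \limsup a_m$ whenever $c_m \to 0$, applied on both sides of the sandwich.

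For the second claim, we divide the sandwich for $B_m^\ast$ by $m$. The $1/(4m)$ term vanishes, so part 2 of \cref{lem:outer_randomness} gives $B_m^\ast/m \to \mu_V$ almost surely. Both claims hold on the intersection of the two probability-one events from \cref{lem:outer_randomness}, which itself has probability one.

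There is no real obstacle here. The only point needing care is that the perturbation from excluding index $k$ is bounded uniformly in $k$, which is what allows the supremum and infimum over $k \in [2m]$ to be exchanged with the limits at no cost. We also need to be explicit that the almost-sure events do not depend on $k$, since they are statements about the block sums alone.
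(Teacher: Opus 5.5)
Your proposal is correct and follows essentially the same route as the paper. The bounds $0 \leq p_k \leq 1$ and $0 \leq p_k(1-p_k) \leq 1/4$ give the $k$-uniform sandwiches $\bigl|\sum_{j} Y_j - m\bigr| - 1 \leq R_m^\ast \leq \bigl|\sum_{j} Y_j - m\bigr| + 1$ and $\sum_j V_j - \tfrac14 \leq B_m^\ast \leq \sum_j V_j$, and dividing and passing to the limit via \cref{lem:outer_randomness} finishes the argument, exactly as in the paper. You also make explicit a point the paper leaves implicit: the probability-one events depend only on the block sums and so do not depend on $k$.
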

\begin{IEEEproof}
For the first part, using \eqref{eqn:poisson_binom_def_1} and the triangle inequality, for all $k$,
\begin{IEEEeqnarray}{rCl}
    \IEEEeqnarraymulticol{3}{l}{
        \biggl| \sum_{j=1}^{m} Y_j - m \biggr| - p_k \leq \abs{\mu_m^{(k)} - m} \leq \biggl| \sum_{j=1}^{m} Y_j - m \biggr| + p_k
    }\nonumber\\* \quad
    & \implies & \biggl| \sum_{j=1}^{m} Y_j - m \biggr| - 1 \leq R_m^\ast \leq \biggl| \sum_{j=1}^{m} Y_j - m \biggr| + 1
\end{IEEEeqnarray}
as $0 \leq p_k \leq 1$.
Therefore,
\begin{IEEEeqnarray}{c}
    \limsup_{m \to \infty} \frac{\bigl| \sum_{j=1}^{m} Y_j - m \bigr| - 1}{\sqrt{2m \ln\ln m}} \leq \limsup_{m \to \infty} \frac{R_m^\ast}{\sqrt{2m \ln\ln m}} \leq \limsup_{m \to \infty} \frac{\bigl| \sum_{j=1}^{m} Y_j - m \bigr| + 1}{\sqrt{2m \ln\ln m}} \IEEEeqnarraynumspace
\end{IEEEeqnarray}
after which the desired result follows from \cref{lem:outer_randomness}.

Similarly for the second part, using \eqref{eqn:poisson_binom_def_2} and as $0 \leq p_k(1 - p_k) \leq 1/4$ for all $k$,
\begin{IEEEeqnarray}{c}
    \sum_{j=1}^{m} V_j - \frac{1}{4} \leq B_m^\ast \leq \sum_{j=1}^{m} V_j.
\end{IEEEeqnarray}
From this,
\begin{IEEEeqnarray}{c}
    \lim_{m \to \infty} \frac{\sum_{j=1}^{m} V_j - 1/4}{m} \leq \lim_{m \to \infty} \frac{B_m^\ast}{m} \leq \lim_{m \to \infty} \frac{\sum_{j=1}^{m} V_j}{m}.
\end{IEEEeqnarray}
As before, the result is then obtained by applying \cref{lem:outer_randomness}.
\end{IEEEproof}

\begin{remark}
The condition $\Pr[0 < p_i < 1] > 0$ ensures that the probability of each Bernoulli trial being deterministic is strictly less than one.
Concretely, this gives us $\mu_V > 0$, and \cref{cor:outer_randomness_extra} then implies that $B_m^\ast \to \infty$ almost surely.
\end{remark}

In what follows, we focus on analyzing the convergence of the Poisson binomial probability ratio for a generic sample path of the outer randomness $\{ u^{2m} \}_{m \geq 1}$ on which the properties claimed in \cref{lem:outer_randomness,cor:outer_randomness_extra} hold.
As stated there, the set of such sample paths has probability one under the law of $\{ \bar{U}^{2m} \}_{m \geq 1}$.
Establishing the desired result for $\{ u^{2m} \}_{m \geq 1}$ is then enough to show almost sure convergence.
Continuing requires the following local limit theorem for sums of independent binary random variables.

\begin{theorem}[{\citealp{mamatov1965}}]\label{thm:mamatov}
Let $W^n$ be a sequence of independent random variables taking on values 0 and 1.
Define
\begin{IEEEeqnarray}{c}
    q_i = \Pr[W_i = 1], \ T_n = \sum_{i=1}^{n} W_i, \  a_n = \sum_{i=1}^{n} \E[W_i], \ b_n = \sum_{i=1}^{n} \Var(W_i), \IEEEeqnarraynumspace \\
    L_n = b_n^{-3/2} \sum_{i=1}^{n} q_i (1 - q_i) (q_i^2 + (1 - q_i)^2).
\end{IEEEeqnarray}
Suppose $0 < q_i < 1$, then there exists an absolute constant $C$ such that
\begin{IEEEeqnarray}{c}
    \abs*{\sqrt{b_n} \Pr[T_n = j] - \frac{1}{\sqrt{2 \pi}} e^{-(j - a_n)^2 / 2 b_n}} < CL_n.
\end{IEEEeqnarray}
\end{theorem}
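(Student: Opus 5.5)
The plan is to prove the estimate by Fourier inversion on the lattice $\mathbb{Z}$, comparing the characteristic function of $T_n$ with a Gaussian near the origin and bounding both contributions away from it. Write $\varphi_n(t) = \E[e^{it T_n}] = \prod_{i=1}^n (1 - q_i + q_i e^{it})$. Since $T_n$ is integer valued,
\begin{IEEEeqnarray*}{c}
    \Pr[T_n = j] = \frac{1}{2\pi} \int_{-\pi}^{\pi} \varphi_n(t) e^{-itj} \, dt,
\end{IEEEeqnarray*}
while the Gaussian term satisfies $\frac{1}{\sqrt{2\pi b_n}} e^{-(j-a_n)^2/2b_n} = \frac{1}{2\pi} \int_{\mathbb{R}} e^{-b_n t^2/2} e^{it(a_n - j)} \, dt$. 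Put $\Lambda_n = \sum_i q_i(1-q_i)(q_i^2 + (1-q_i)^2)$, so that $L_n = \Lambda_n b_n^{-3/2}$. Since $\tfrac12 \leq q^2 + (1-q)^2 \leq 1$, we have $b_n/2 \leq \Lambda_n \leq b_n$, and hence $L_n \geq \tfrac12 b_n^{-1/2}$. Multiplying the difference of the two integrals by $\sqrt{b_n}$, it suffices to bound
\begin{IEEEeqnarray*}{c}
    \sqrt{b_n} \int_{-\pi}^{\pi} \abs{\psi_n(t) - e^{-b_n t^2/2}} \, dt + \sqrt{b_n} \int_{\abs{t} > \pi} e^{-b_n t^2/2} \, dt \quad \text{by } C L_n,
\end{IEEEeqnarray*}
where $\psi_n(t) = e^{-ita_n} \varphi_n(t)$ is the centred characteristic function.

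\emph{Global decay.} For $\abs{t} \leq \pi$,
\begin{IEEEeqnarray*}{c}
    \abs{1 - q + q e^{it}}^2 = 1 - 2q(1-q)(1 - \cos t) \leq \exp(-4 q(1-q) t^2/\pi^2),
\end{IEEEeqnarray*}
using $1 - \cos t \geq 2t^2/\pi^2$. Hence $\abs{\varphi_n(t)} \leq e^{-2 b_n t^2/\pi^2}$ on $[-\pi, \pi]$. A first consequence is that $\sqrt{b_n}\Pr[T_n = j] = O(1)$ uniformly in $j$, and likewise for the Gaussian term. So when $b_n$ is bounded, the claim follows from $L_n \geq \tfrac12 b_n^{-1/2}$, and we may assume $b_n$ is large.

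\emph{Local expansion.} For one factor, $\log\bigl(e^{-iqt}(1 - q + q e^{it})\bigr)$ is the cumulant generating function of the centred Bernoulli variable $W - q$. Its absolute third moment is $q(1-q)(q^2 + (1-q)^2)$. For $\abs{t} \leq c_0$ with an absolute $c_0$, the log is well defined and equals $-q(1-q)t^2/2 + r(t)$ with $\abs{r(t)} \leq C q(1-q)(q^2 + (1-q)^2)\abs{t}^3$, by a Taylor expansion with remainder controlled by that third moment. Summing over $i$ gives $\log \psi_n(t) = -b_n t^2/2 + R_n(t)$ with $\abs{R_n(t)} \leq C \Lambda_n \abs{t}^3$.

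Now restrict to $\abs{t} \leq T_n := c\, b_n/\Lambda_n$. Here $\abs{R_n(t)} \leq b_n t^2/4$, and the inequality $\abs{e^z - 1} \leq \abs{z} e^{\abs{z}}$ yields
\begin{IEEEeqnarray*}{c}
    \abs{\psi_n(t) - e^{-b_n t^2/2}} \leq C \Lambda_n \abs{t}^3 e^{-b_n t^2/4}.
\end{IEEEeqnarray*}
Integrating gives $C \Lambda_n / b_n^2$, and multiplying by $\sqrt{b_n}$ gives exactly $C L_n$. Since $\Lambda_n \leq b_n$, we have $T_n \geq c$. To keep both the expansion and the global decay bound valid, I take the cutoff $\min(T_n, c_0)$, which is still bounded below by an absolute constant.

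\emph{Tails.} On $\min(T_n, c_0) < \abs{t} \leq \pi$, and for the Gaussian on $\abs{t} > \min(T_n, c_0)$, both integrands are at most $e^{-c' b_n t^2}$. This gives a contribution of order $\sqrt{b_n}\int_{\abs{t} > \tau} e^{-c' b_n t^2}\,dt \lesssim e^{-c' b_n \tau^2}$, where $\tau = \min(T_n, c_0)$.
\begin{enumerate}
    \item If $\tau = T_n$, then $b_n T_n^2 = c^2 b_n^3/\Lambda_n^2 = c^2/L_n^2$, and $e^{-c''/L_n^2} \leq C L_n$.
    \item If $\tau = c_0$, the bound is $e^{-c'' b_n} \leq C b_n^{-1/2} \leq 2C L_n$.
\end{enumerate}

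The main obstacle is the local expansion. The cumulant remainder has to be bounded by the \emph{absolute} third moment $q(1-q)(q^2 + (1-q)^2)$ uniformly in $q \in (0,1)$, which is the quantity appearing in $L_n$, rather than by the third cumulant $q(1-q)(1-2q)$. The logarithm also has to stay on the principal branch over a $t$-range that does not depend on $q$. I would handle both by writing the centred Bernoulli characteristic function as $1 + z$ with $\abs{z} \leq q(1-q) t^2/2 \leq t^2/8$. Then, for $\abs{t} \leq 1$, $\log(1+z) = z + O(\abs{z}^2)$. The third-order term of $z$ is bounded via $\E\abs{W - q}^3$, and $\abs{z}^2 \lesssim q^2(1-q)^2 t^4$ is absorbed into the $\abs{t}^3$ remainder because $\abs{t} \leq 1$ and $q(1-q) \leq 2q(1-q)(q^2 + (1-q)^2)$.
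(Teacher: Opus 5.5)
The paper gives no proof of this statement. It is quoted from Mamatov (1965) and used only through the weaker consequence $|\varepsilon_m^{(k)}(j)| \le C/\sqrt{B_m^{(k)}}$.

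Your Fourier-inversion argument is a correct, self-contained proof, and all four of its main steps check out:
\begin{itemize}
\item The identity $|1-q+qe^{it}|^2 = 1-2q(1-q)(1-\cos t)$ gives $|\varphi_n(t)| \le e^{-2b_n t^2/\pi^2}$ on $[-\pi,\pi]$.
\item The cumulant remainder is correctly controlled by the absolute third moment. The $|z|^2$ term is absorbed because $|t|\le 1$ and $q(1-q) \le 2\,\E|W_i-q_i|^3$.
\item The cutoff $c\,b_n/\Lambda_n$ makes the central contribution exactly $C\Lambda_n b_n^{-3/2} = CL_n$.
\item Both tail cases are disposed of properly.
\end{itemize}

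Compared with the bare citation, your route gives two things. First, it makes explicit that for Bernoulli summands $L_n$ lies within a factor $2$ of $b_n^{-1/2}$. So the theorem is really an $O(b_n^{-1/2})$ local limit theorem, which is exactly the form the paper uses downstream. Second, it goes through verbatim when some $q_i \in \{0,1\}$: the centred factor $e^{-iq_i t}(1-q_i+q_i e^{it})$ is then identically $1$ and contributes nothing to $b_n$ or $\Lambda_n$. Hence only $b_n > 0$ is needed, and the paper's passage to the restricted sum $\tilde{S}_k$ over $\mathcal{I}_m$ (with its accompanying support check) is not required to invoke the bound. What the citation gives is brevity.

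Three cosmetic points:
\begin{itemize}
\item You use $T_n$ both for the sum and for the cutoff $c\,b_n/\Lambda_n$; rename one of them.
\item The preliminary reduction to large $b_n$ is never used, since every later estimate holds for all $b_n > 0$.
\item Your inversion formula implicitly takes $j \in \mathbb{Z}$, which is also the only reading under which the statement is true.
\end{itemize}
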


Given our realization $\bar{U}^{2m} = u^{2m}$, let $\mc{I}_m \subseteq [2m]$ be the set of indices for which $0 < p_i < 1$.
The zero-variance trials at a given $k$ and $m$, which are $\{ X_i \}_{i \in [2m] \setminus (\mc{I}_m \cup \{ k \})}$, are all deterministic and contribute a constant total of
\begin{IEEEeqnarray}{c}
    G_m^{(k)} = \sum_{i \in [2m] \setminus (\mc{I}_m \cup \{ k \})} p_i
\end{IEEEeqnarray}
to $S_k(u^{2m})$.
To use \cref{thm:mamatov}, we therefore define the restricted sum
\begin{IEEEeqnarray}{c}{}
    \tilde{S}_k(u^{2m}) = \sum_{i \in \mc{I}_m \setminus \{ k \}} X_i = S_k(u^{2m}) - G_m^{(k)}.
\end{IEEEeqnarray}
The sums of the means and variances for the new sequence, with reference to \eqref{eqn:poisson_binom_def_1} and \eqref{eqn:poisson_binom_def_2}, are respectively
\begin{IEEEeqnarray}{rCl}
    \tilde{\mu}_m^{(k)} & = & \sum_{i \in \mc{I}_m \setminus \{ k \}} \E[X_i \mid \bar{U}_i = u_i] = \mu_m^{(k)} - G_m^{(k)} \\
    \tilde{B}_m^{(k)} & = & \sum_{i \in \mc{I}_m \setminus \{ k \}} \Var(X_i \mid \bar{U}_i = u_i) = B_m^{(k)}.
\end{IEEEeqnarray}
The support of $\tilde{S}_k(u^{2m})$ is $\{ 0, 1, \ldots, r_m^{(k)} \}$ where $r_m^{(k)} = \abs{\mc{I}_m \setminus \{ k \}}$.
As we wish to evaluate the \cref{thm:mamatov} bound for $\tilde{S}_k(u^{2m})$ at the points $j_1 = m - G_m^{(k)}$ and $j_2 = m - 1 - G_m^{(k)}$, we first check that for large enough $m$, these points fall inside the support, clearing the way for a straightforward application of the theorem.
First, since $B_m^{(k)} = \sum_{i \in \mc{I}_m \setminus \{ k \}} p_i(1 - p_i)$ we have
\begin{IEEEeqnarray}{rCl}
    B_m^{(k)} & \leq & \sum_{i \in \mc{I}_m \setminus \{ k \}} p_i = \tilde{\mu}_m^{(k)} \\
    B_m^{(k)} & \leq & \sum_{i \in \mc{I}_m \setminus \{ k \}} (1 - p_i) = r_m^{(k)} - \tilde{\mu}_m^{(k)}.
\end{IEEEeqnarray}
Therefore $B_m^{(k)} \leq \min\{ \tilde{\mu}_m^{(k)}, r_m^{(k)} - \tilde{\mu}_m^{(k)} \}$.
In other words, the distance of the restricted sum's mean $\tilde{\mu}_m^{(k)}$ from the endpoints of its support is at least $B_m^{(k)}$, and this holds for any $m$.
Concerning our specific evaluation points,
\begin{IEEEeqnarray}{c}
    j_1 - \tilde{\mu}_m^{(k)} = m - \mu_m^{(k)} \text{ and } j_2 - \tilde{\mu}_m^{(k)} = m - 1 - \mu_m^{(k)}
\end{IEEEeqnarray}
we have that
\begin{IEEEeqnarray}{c}
    \frac{1}{B_m^\ast} \max\{ \abs{j_1 - \tilde{\mu}_m^{(k)}}, \abs{j_2 - \tilde{\mu}_m^{(k)}} \} \leq \frac{R_m^\ast + 1}{B_m^\ast}.
\end{IEEEeqnarray}
From \cref{cor:outer_randomness_extra}, for any $\varepsilon_1 > 0$, there exists some $M_1$ such that for all $m > M_1$,
\begin{IEEEeqnarray}{c}
    \frac{R_m^\ast}{\sqrt{2 m \ln\ln m}} \leq \sigma_Y + \varepsilon_1 \label{eqn:epsilon-delta_1}
\end{IEEEeqnarray}
and also for any $\varepsilon_2 > 0$ there exists some $M_2$ such that for all $m > M_2$,
\begin{IEEEeqnarray}{c}
    \frac{m}{B_m^\ast} \leq \frac{1}{\mu_V} + \varepsilon_2. \label{eqn:epsilon-delta_2}
\end{IEEEeqnarray}
Thus for any $\varepsilon_1, \varepsilon_2 > 0$ there exists some $M_3$ such that for all $m > M_3$,
\begin{IEEEeqnarray}{rCl}
    \frac{R_m^\ast}{B_m^\ast} = \frac{R_m^\ast}{\sqrt{2m \ln\ln m}} \frac{\sqrt{2 \ln\ln m / m}}{B_m^\ast / m} \leq D(\varepsilon_1, \varepsilon_2) \sqrt{\frac{2 \ln\ln m}{m}} \label{eqn:Rm_over_Bm}
\end{IEEEeqnarray}
where $D(\varepsilon_1, \varepsilon_2) = (\sigma_Y + \varepsilon_1)(\mu_V^{-1} + \varepsilon_2) > 0$.
Thus, fixing $\varepsilon_1$ and $\varepsilon_2$,
\begin{IEEEeqnarray}{c}
    0 \leq \lim_{m \to \infty} \frac{R_m^\ast}{B_m^\ast} \leq \lim_{m \to \infty} D(\varepsilon_1, \varepsilon_2) \sqrt{\frac{2 \ln\ln m}{m}} = 0. \label{eqn:Rm_over_Bm_limit}
\end{IEEEeqnarray}
Combining, and as $B_m^\ast \to \infty$ by \cref{cor:outer_randomness_extra}, we get
\begin{IEEEeqnarray}{c}
    \lim_{m \to \infty} \frac{1}{B_m^\ast} \sup_{1 \leq k \leq 2m} \max\{ \abs{j_1 - \tilde{\mu}_m^{(k)}}, \abs{j_2 - \tilde{\mu}_m^{(k)}} \} = 0.
\end{IEEEeqnarray}
Therefore there exists some $M_4$ such that for all $m > M_4$, the distance of both evaluation points $j_1$ and $j_2$ from the restricted sum's mean is less than $B_m^{(k)}$, and consequently within its support.
In what follows, we assume $m > M_4$ throughout so that \cref{thm:mamatov} can safely be used to analyze the restricted sequence of Bernoulli trials, which is $\{ X_i \}_{i \in \mc{I}_m \setminus \{ k \}}$.

Applying \cref{thm:mamatov} to this restricted sequence gives us for each $m$, $k$ and at our evaluation points $j \in \{ j_1, j_2 \}$,
\begin{IEEEeqnarray}{c}
    \sqrt{B_m^{(k)}} \Pr[\tilde{S}_k(u^{2m}) = j] = A_m^{(k)}(j) + \varepsilon_m^{(k)}(j)
\end{IEEEeqnarray}
where $A_m^{(k)}(j) = (2\pi)^{-1/2} e^{-(j - \mu_m^{(k)} + G_m^{(k)})^2 / 2 B_m^{(k)}}$ and $\abs{\varepsilon_m^{(k)}(j)} \leq C / \sqrt{B_m^{(k)}}$, using the fact that, as $p_i^2 + (1 - p_i)^2 \leq 1$,
\begin{IEEEeqnarray}{c}
    (B_m^{(k)})^{-3/2} \sum_{i \in \mc{I}_m \setminus \{ k \}} p_i (1 - p_i) (p_i^2 + (1 - p_i)^2) \leq (B_m^{(k)})^{-1/2}.
\end{IEEEeqnarray}
The probability ratio of interest can be written as
\begin{IEEEeqnarray}{rCl}
    \frac{\Pr[S_k(u^{2m}) = m-1]}{\Pr[S_k(u^{2m}) = m]} & = & \frac{\Pr[\tilde{S}_k(u^{2m}) = m - 1 - G_m^{(k)}]}{\Pr[\tilde{S}_k(u^{2m}) = m - G_m^{(k)}]} \\
    & = & \frac{A_m^{(k)}(m - 1 - G_m^{(k)}) + \varepsilon_m^{(k)}(m - 1 - G)}{A_M^{(k)}(m - G_m^{(k)}) + \varepsilon_m^{(k)}(m - G_m^{(k)})} \\
    & = & \frac{A_m^{(k)}(m - 1 - G_m^{(k)})}{A_m^{(k)}(m - G_m^{(k)})} \frac{1 + \Delta_m^{(k)}(m - 1 - G_m^{(k)})}{1 + \Delta_m^{(k)}(m - G_m^{(k)})}. \label{eqn:prob_ratio}
\end{IEEEeqnarray}
where we have defined $\Delta_m^{(k)}(j) = \varepsilon_m^{(k)}(j) / A_m^{(k)}(j)$.

To analyze the probability ratio, we begin with the error sequences.
Using \cref{thm:mamatov}, for any $m$ and $k \in [2m]$,
\begin{IEEEeqnarray}{rCl}
    \abs{\Delta_m^{(k)}(m - G_m^{(k)})} & \leq & \frac{C \sqrt{2 \pi}}{(B_m^{(k)})^{1/2}} \exp\biggl( \frac{(\mu_m^{(k)} - m)^2}{2 B_m^{(k)}} \biggr) \\
    & \leq & \frac{C \sqrt{2 \pi}}{(B_m^\ast)^{1/2}} \exp\biggl( \frac{(R_m^\ast)^2}{2 B_m^\ast} \biggr) \\
    & \coloneq & d_m(m) \\
    \abs{\Delta_m^{(k)}(m - 1 - G_m^{(k)})} & \leq & \frac{C \sqrt{2\pi}}{(B_m^{(k)})^{1/2}} \exp\biggl( \frac{(\mu_m^{(k)} - m + 1)^2}{2B_m^{(k)}} \biggr) \\
    & = & \frac{C \sqrt{2\pi}}{(B_m^{(k)})^{1/2}} \exp\biggl( \frac{(\mu_m^{(k)} - m)^2}{2 B_m^{(k)}} \biggr) \exp\biggl( \frac{\mu_m^{(k)} - m}{B_m^{(k)}} \biggr) \exp\biggl( \frac{1}{2 B_m^{(k)}} \biggr) \IEEEeqnarraynumspace \\
    & \leq & \frac{C \sqrt{2\pi}}{(B_m^\ast)^{1/2}} \exp\biggl( \frac{(R_m^\ast)^2}{2 B_m^\ast} \biggr) \exp\biggl( \frac{R_m^\ast}{B_m^\ast} \biggr) \exp\biggl( \frac{1}{2 B_m^\ast} \biggr) \\
    & \coloneq & d_m(m-1). \label{eqn:d_M_minus_1}
\end{IEEEeqnarray}
As a result,
\begin{IEEEeqnarray}{c}
    \max\{ \abs{\Delta_m^{(k)}(m - G_m^{(k)})}, \abs{\Delta_m^{(k)}(m - 1 - G_m^{(k)})} \} \leq \max\{ d_m(m), d_m(m-1) \} \coloneq d_m. \IEEEeqnarraynumspace
\end{IEEEeqnarray}

We now proceed to analyze the sequence $d_m$ by looking first at $d_m(m)$ and then at $d_m(m-1)$.
First, for all $m \geq 3$,
\begin{IEEEeqnarray}{c}
    d_m(m) = \frac{C m^{-1/2} \sqrt{2 \pi}}{(B_m^\ast / m)^{1/2}} \exp\biggl( \biggl( \frac{R_m^\ast}{\sqrt{2 m \ln\ln m}} \biggr)^2 \frac{m \ln\ln m}{B_m^\ast} \biggr).
\end{IEEEeqnarray}
By \eqref{eqn:epsilon-delta_1} and \eqref{eqn:epsilon-delta_2}, for any $\varepsilon_1, \varepsilon_2 > 0$, there exists some $M_5$ such that for any $m > M_5$,
\begin{IEEEeqnarray}{c}
    \biggl( \frac{R_m^\ast}{\sqrt{2m \ln\ln m}} \biggr)^2 \frac{m}{B_m^\ast} \leq E(\varepsilon_1, \varepsilon_2) \label{eqn:D_expansion}
\end{IEEEeqnarray}
where $E(\varepsilon_1, \varepsilon_2) = (\sigma_Y + \varepsilon_1)^2(\mu_V^{-1} + \varepsilon_2) > 0$.
With the required preliminaries out of the way, and fixing $\varepsilon_1$ and $\varepsilon_2$, we evaluate the limit as follows.
\begin{IEEEeqnarray}{rCl}
    0 & \leq & \lim_{m \to \infty} d_m(m) \\
    & \leq & \lim_{m \to \infty} \frac{C m^{-1/2} \sqrt{2 \pi}}{(B_m^\ast / m)^{1/2}} (\ln m)^{E(\varepsilon_1, \varepsilon_2)} \\
    & = & \frac{C \sqrt{2\pi}}{\sqrt{\mu_V}} \lim_{m \to \infty} \frac{(\ln m)^{E(\varepsilon_1, \varepsilon_2)}}{\sqrt{m}}
\end{IEEEeqnarray}
where we have also applied \cref{cor:outer_randomness_extra} to evaluate $(B_m^\ast / m)^{1/2}$ in the limit.
Since for any positive constant $\rho$, $\lim_{m \to \infty} m^{-1/2} (\ln m)^\rho = 0$, it follows that $\lim_{m \to \infty} d_m(m) = 0$.

Moving on to $d_m(m-1)$, we reuse \eqref{eqn:Rm_over_Bm} and \eqref{eqn:D_expansion}, fix $\varepsilon_1$ and $\varepsilon_2$, and account for the final exponential term in \eqref{eqn:d_M_minus_1} as $B_m^\ast \to \infty$ by \cref{cor:outer_randomness_extra}.
We get
\begin{IEEEeqnarray}{rCl}
    0 & \leq & \lim_{m \to \infty} d_m(m-1) \nonumber \\
    & \leq & \lim_{m \to \infty} \frac{C m^{-1/2} \sqrt{2\pi}}{(B_m^\ast / m)^{1/2}} (\ln m)^{E(\varepsilon_1, \varepsilon_2)} \exp\biggl( D(\varepsilon_1, \varepsilon_2) \sqrt{\frac{2 \ln\ln m}{m}} \biggr) \IEEEeqnarraynumspace \\
    & = & \frac{C \sqrt{2 \pi}}{\sqrt{\mu_V}} \lim_{m \to \infty} \frac{(\ln m)^{E(\varepsilon_1, \varepsilon_2)}}{\sqrt{m}} \exp\biggl( D(\varepsilon_1, \varepsilon_2) \sqrt{\frac{2 \ln\ln m}{m}} \biggr). \IEEEeqnarraynumspace \label{eqn:exponent_limit}
\end{IEEEeqnarray}
As the limit of the exponential term evaluates to one, we again get $\lim_{m \to \infty} d_m(m-1) = 0$.
Consequently, $\lim_{m \to \infty} d_m = 0$.

Next, we examine the Gaussian ratio.
Expanding gives
\begin{IEEEeqnarray}{rCl}
    \IEEEeqnarraymulticol{3}{l}{
        \ln \frac{A_m^{(k)}(m - 1 - G_m^{(k)})}{A_m^{(k)}(m - G_m^{(k)})} = \frac{m - \mu_m^{(k)}}{B_m^{(k)}} - \frac{1}{2 B_m^{(k)}}
    }\\* \quad
    & \implies & -\frac{\abs{\mu_m^{(k)} - m}}{B_m^{(k)}} - \frac{1}{2 B_m^{(k)}} \leq \ln \frac{A_m^{(k)}(m - 1 - G_m^{(k)})}{A_m^{(k)}(m - G_m^{(k)})} \leq \frac{\abs{\mu_m^{(k)} - m}}{B_m^{(k)}} + \frac{1}{2 B_m^{(k)}} \\
    & \implies & -\frac{R_m^\ast}{B_m^\ast} - \frac{1}{2 B_m^\ast} \leq \ln \frac{A_m^{(k)}(m - 1 - G_m^{(k)})}{A_m^{(k)}(m - G_m^{(k)})} \leq \frac{R_m^\ast}{B_m^\ast} + \frac{1}{2 B_m^\ast}.
\end{IEEEeqnarray}
Let $c_m = (R_m^\ast / B_m^\ast) + 1 / (2 B_m^\ast)$, we then have that for any $m$ and $k \in [2m]$,
\begin{IEEEeqnarray}{c}
    e^{-c_m} \leq \frac{A_m^{(k)}(m - 1 - G_m^{(k)})}{A_m^{(k)}(m - G_m^{(k)})} \leq e^{c_m}. \label{eqn:gaussian_ratio_bound}
\end{IEEEeqnarray}
Turning our attention to the sequence $c_m$, using the result in \eqref{eqn:Rm_over_Bm_limit} and the fact that $B_m^\ast \to \infty$ by \cref{cor:outer_randomness_extra}, we obtain $\lim_{m \to \infty} c_m = 0$.

Using these results on $d_m$ and $c_m$, we can proceed to examine \eqref{eqn:prob_ratio}.
As $d_m \to 0$, there exists some $M_6$ such that for all $m > M_6$, $d_m < 1$ and hence for any $k \in [2m]$,
\begin{IEEEeqnarray}{c}
    \frac{1 - d_m}{1 + d_m} \leq \frac{1 + \Delta_m^{(k)}(m - 1 - G_m^{(k)})}{1 + \Delta_m^{(k)}(m - G_m^{(k)})} \leq \frac{1 + d_m}{1 - d_m}.
\end{IEEEeqnarray}
Then, using \eqref{eqn:gaussian_ratio_bound},
\begin{IEEEeqnarray}{c}
    e^{-c_m} \frac{1 - d_m}{1 + d_m} \leq \frac{A_m^{(k)}(m - 1 - G_m^{(k)})}{A_m^{(k)}(m - G_m^{(k)})} \frac{1 + \Delta_m^{(k)}(m - 1 - G_m^{(k)})}{1 + \Delta_m^{(k)}(m - G_m^{(k)})} \leq e^{c_m} \frac{1 + d_m}{1 - d_m}.
\end{IEEEeqnarray}
As the bounding sequences do not depend on $k$, we can move to the supremum and get
\begin{IEEEeqnarray}{rCl}
    0 & \leq & \sup_{1 \leq k \leq 2m} \biggl| \frac{A_m^{(k)}(m - 1 - G_m^{(k)})}{A_m^{(k)}(m - G_m^{(k)})} \frac{1 + \Delta_m^{(k)}(m - 1 - G_m^{(k)})}{1 + \Delta_m^{(k)}(m - G_m^{(k)})} - 1 \biggr| \\
    & \leq & \max\biggl\{ e^{c_m} \frac{1 + d_m}{1 - d_m} - 1, 1 - e^{-c_m} \frac{1 - d_m}{1 + d_m} \biggr\}.
\end{IEEEeqnarray}
As we have already established that $\lim_{m \to \infty} c_m = \lim_{m \to \infty} d_m = 0$, we obtain the desired result for the sample path $\{ u^{2m} \}_{m \geq 1}$, from which almost sure convergence follows.  \hfill \IEEEQED{}

\section{Theoretical Comparison and Lower Bound}\label[appendix]{sec:lower_bound}
Recall that from \cref{thm:cost_upper_bound}, the one-shot permuted scheme achieves
\begin{IEEEeqnarray}{c}
    \E[\ell(M)] \leq I(X; Y \mid \bar{U}^N) + \log(I(X;Y \mid \bar{U}^N) + 2) + 3.\label{eqn:achievability_bound_apx}
\end{IEEEeqnarray}
We can also establish the equality
\begin{IEEEeqnarray}{rCl}
    I(X;Y) + I(X;\bar{U}^N \mid Y) & = & H(X) - H(X \mid Y) + H(X \mid Y) - H(X \mid Y, \bar{U}^N) \\
    & \stackrel{\text{(a)}}{=} & H(X \mid \bar{U}^N) - H(X \mid Y, \bar{U}^N) \\
    & = & I(X; Y \mid \bar{U}^N) \label{eqn:mutual_info_equality} \\
    & \stackrel{\text{(b)}}{=} & I(X; K \mid \bar{U}^N) \label{eqn:mutual_info_k_to_y}
\end{IEEEeqnarray}
where in (a) we have used the fact that $\bar{U}^N$ is generated independently of $X$ and (b) comes from the bijection argument used in the proof of \cref{thm:cost_upper_bound}.
Therefore, the leading term in the achievability bound in \eqref{eqn:achievability_bound_apx} contains a penalty, namely $I(X;\bar{U}^N \mid Y)$, above the mutual information $I(X;Y)$.

This may be surprising when compared against other well-known channel simulation schemes including PFR \citep{li2018}, which achieves
\begin{IEEEeqnarray}{c}
    \E[\ell(M)] \leq I(X;Y) + \log(I(X;Y) + 2) + 3
\end{IEEEeqnarray}
or GRS \citep{harsha2010,flamich2023b}, which satisfies
\begin{IEEEeqnarray}{c}
    \E[\ell(M)] \leq I(X;Y) + \log(I(X;Y) + \log(4e)) + \log(8e).
\end{IEEEeqnarray}
Neither of these expressions contain such a penalty; extending the PFR and GRS bounds to $n$ independent uses of the channel with a source $X^n$ leaves $I(X;Y)$ as the only term which does not vanish as $n$ becomes large, when considering the number of bits per sample $\E[\ell(M)] / n$.

We now present a lower bound on the communication cost, which shows that the penalty is inherent to the structure of the finite sample pool $\bar{U}^N$ used as part of the shared randomness in the permuted scheme, rather than being an artifact of the implementation or a loose upper bound in \cref{thm:cost_upper_bound}.

\begin{proposition}\label{prop:cost_lower_bound}
The communication cost of the permuted scheme is bound below by
\begin{IEEEeqnarray}{c}
    \E[\ell(M)] \geq I(X;Y \mid \bar{U}^N). \label{eqn:coding_cost_lower_bound}
\end{IEEEeqnarray}
\end{proposition}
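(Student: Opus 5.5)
The plan is a standard converse argument in which the shared randomness is conditioned on throughout. The message $M$ is a prefix-free (or at least uniquely decodable) binary string produced by the encoder from $(X, \bar{U}^N, S^N)$, and possibly from further private randomness. The decoder's output is a deterministic function of $(M, \bar{U}^N, S^N)$, and in particular $K$ is recovered as $K = \sigma(\mathtt{dec}(M))$. Write $W = (\bar{U}^N, S^N)$ for the whole common randomness, which is independent of $X$. We first lower bound the expected length by a conditional entropy. For each fixed realisation $W = w$, the code $\mathtt{enc}$ is uniquely decodable, so Kraft's inequality gives $\E[\ell(M) \mid W = w] \geq H(M \mid W = w)$. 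Averaging over $w$ then yields $\E[\ell(M)] \geq H(M \mid W)$.

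The second step is a chain of data-processing inequalities:
\begin{IEEEeqnarray*}{c}
    H(M \mid W) \geq I(X; M \mid W) \geq I(X; Y \mid W),
\end{IEEEeqnarray*}
where the last inequality holds because $Y = \bar{U}_K$ is a function of $(M, W)$, so $X - (M, W) - Y$ is a Markov chain given $W$.

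The remaining task is to reduce conditioning on $W = (\bar{U}^N, S^N)$ to conditioning on $\bar{U}^N$ alone. We apply the chain rule twice:
\begin{IEEEeqnarray*}{c}
    I(X; Y, S^N \mid \bar{U}^N) = I(X; S^N \mid \bar{U}^N) + I(X; Y \mid \bar{U}^N, S^N) = I(X; Y \mid \bar{U}^N) + I(X; S^N \mid \bar{U}^N, Y).
\end{IEEEeqnarray*}
Since $S^N$ is independent of $(X, \bar{U}^N)$, the term $I(X; S^N \mid \bar{U}^N)$ vanishes. The second expansion has a nonnegative extra term $I(X; S^N \mid \bar{U}^N, Y)$, so $I(X; Y \mid \bar{U}^N, S^N) \geq I(X; Y \mid \bar{U}^N)$. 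Combining the steps gives \eqref{eqn:coding_cost_lower_bound}. We may also rewrite the right-hand side as $I(X; K \mid \bar{U}^N)$ using \eqref{eqn:mutual_info_k_to_y}, which matches the BI-AWGN discussion.

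The main obstacle is the continuous-valued conditioning. Conditional mutual informations given $\bar{U}^N$, which has a density, and the conditional Kraft bound have to be stated carefully, for example via regular conditional distributions. One also needs the decoder to decode $M$ uniquely for every realisation of the common randomness. We handle this by working pointwise in $w$, where all the quantities involving $X$ and $M$ are discrete, and integrating only at the end.
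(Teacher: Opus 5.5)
Your proof is correct and is essentially the paper's argument: both bound $\E[\ell(M)] \geq H(M \mid W) \geq I(X; M \mid W)$, use that $Y$ is a function of $(M, W)$, and then remove the extra common randomness using independence from $X$. Your $I(X; Y \mid W)$ equals the paper's $I(X;Y) + I(X; W \mid Y)$, and your chain-rule reduction to $I(X; Y \mid \bar{U}^N)$ is equivalent to the paper's monotonicity step followed by the identity \eqref{eqn:mutual_info_equality}; leaving $\Pi$ out of $W$ (the paper includes it) is harmless, since $Y$ is already a function of $(M, \bar{U}^N, S^N)$ and the reduction only needs $X$ to be independent of the whole common randomness.
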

\begin{IEEEproof}
The cost of transmitting any message $M$ with shared randomness $W$ is bound below by
\begin{IEEEeqnarray}{c}
    H(M \mid W) \geq H(M \mid W) - H(M \mid X, W) = I(X; M \mid W).
\end{IEEEeqnarray}
Since $W$ is generated independent of the input $X$,
\begin{IEEEeqnarray}{c}
    I(X; M \mid W) = I(X; M \mid W) + I(X; W) = I(X; M, W).
\end{IEEEeqnarray}
Since the decoder outputs $Y$ as a function of $M$ and $W$, adding $Y$ does not change the mutual information.
We get
\begin{IEEEeqnarray}{rCl}
    I(X; M, W) & = & I(X; M, W, Y) \\
    & = & I(X;Y) + I(X; W \mid Y) + I(X; M \mid Y, W) \\
    & \geq & I(X;Y) + I(X; W \mid Y).
\end{IEEEeqnarray}
Finally,
\begin{IEEEeqnarray}{c}
    \E[\ell(M)] \geq H(M \mid W) \geq I(X; Y) + I(X; W \mid Y). \label{eqn:generic_lower_bound}
\end{IEEEeqnarray}
For the permuted scheme in particular, the shared randomness comprises $\bar{U}^N$, $S^N$ and $\Pi$.
Substituting for $W$ gives 
\begin{IEEEeqnarray}{c}
    \E[\ell(M)] \geq I(X;Y) + I(X; \bar{U}^N, S^N, \Pi \mid Y) \geq I(X;Y) + I(X; \bar{U}^N \mid Y).
\end{IEEEeqnarray}
which yields the desired lower bound after using \eqref{eqn:mutual_info_equality}.
\end{IEEEproof}

\section{Non-Uniform PolarSim with Side Information}\label[appendix]{sec:polar_sim}
In this section, we present a modified version of PolarSim \citep{sriramu2024} that allows for non-uniformly distributed inputs as well as the use of side information available at both the encoder and decoder.
While the generalized PolarSim algorithm presented in \citet{ozyilkan2026} covers this setting and further allows for nonstationary channels, it relies on interleaving additional random permutations inside the polar transform, and its theoretical analysis is complicated by the fact that the number of recursion levels may not equal $\log n$ for $n$ channel uses.
In contrast, our method makes use of the standard, unmodified polar transform \citep{arikan2009,arikan2010}, considerably simplifying the theory and implementation.
We emphasize that our modified PolarSim is still limited to binary-output channels, and is paired with MLC in \cref{sec:polar} of the main paper to handle larger output alphabets.

The complete algorithm, comprising an encoder and a decoder sharing common randomness via the PRNG $\mathfrak{G}$, is presented in \cref{alg:polar_sim}, where $G_n$ is the polar transform matrix as defined in \citet{arikan2010}.
Note that $G_n^{-1} = G_n$, and the polar transform can in practice be implemented recursively in $O(n \log n)$ time.
The goal of \cref{alg:polar_sim} is to simulate $n$ iid copies of a binary-output channel $P_{Y \mid X, W}$ when provided input sequences $x^n$ and $w^n$.
Here, $x^n$ is the channel input and $w^n$ is the side information, with $(x^n, w^n)$ comprising $n$ iid draws from a joint distribution $P_{X, W}$.
As $w^n$ is known to both the encoder and the decoder, the conditional mutual information $I(X;Y \mid W)$ is a lower bound on the communication cost per channel use.

\begin{algorithm}[!b]
\caption{Modified PolarSim \citep{sriramu2024,ozyilkan2026}.}\label{alg:polar_sim}
\begin{algorithmic}\small
\Function{Encode}{$x^n, w^n, \mathfrak{G}, \mathtt{enc}$}
    \State \algorithmicrequire{} Channel input $x^n$, side information $w^n$, PRNG $\mathfrak{G}$, lossless encoder $\mathtt{enc}$.
    \State \algorithmicensure{} Message $M$.
    \State Generate $S^n \sim \operatorname{Unif}[0, 1]$ using $\mathfrak{G}$.
    \For{$i = 1, \ldots, n$}
        \State $p_i \gets \Pr[Z_i = 0 \mid Z^{i-1} = z^{i-1}, X^n = x^n, W^n = w^n]$
        \State $q_i \gets \Pr[Z_i = 0 \mid Z^{i-1} = z^{i-1}, W^n = w^n]$
        \State $z_i \gets \1\{ S_i > p_i \}$
        \State $\tilde{z}_i \gets \1\{ S_i > q_i \}$
        \State $\Delta_i \gets z_i \oplus \tilde{z}_i$
    \EndFor
    \State $M \gets \mathtt{enc}(\Delta^n)$
    \State \Return $M$
\EndFunction
\vspace{1ex}
\Function{Decode}{$M, w^n, \mathfrak{G}, \mathtt{dec}$}
    \State \algorithmicrequire{} Message $M$, side information $w^n$, PRNG $\mathfrak{G}$, lossless decoder $\mathtt{dec}$.
    \State \algorithmicensure{} Channel output $y^n \in \{ 0, 1 \}^n$.
    \State Generate $S^n \sim \operatorname{Unif}[0, 1]$ using $\mathfrak{G}$.
    \State $\Delta^n \gets \mathtt{dec}(M)$
    \For{$i = 1, \ldots, n$}
        \State $q_i \gets \Pr[Z_i = 0 \mid Z^{i-1} = z^{i-1}, W^n = w^n]$
        \State $\tilde{z}_i \gets \1\{ S_i > q_i \}$
        \State $z_i \gets \Delta_i \oplus \tilde{z}_i$
        \EndFor
    \State $y^n \gets z^n G_n$
    \State \Return $y^n$
\EndFunction
\end{algorithmic}
\end{algorithm}

We now present two results on our variant of the PolarSim algorithm, first dealing with its correctness in terms of the output distribution and second with the rate performance in the limit of large blocklengths.
Note that in practice, we implement $\mathtt{enc}$, $\mathtt{dec}$ using arithmetic coding \citep{rissanen1976} with empirically estimated probabilities, therefore leading to some rate overhead above the theoretical bounds, which imply knowledge of the exact probabilities.

\begin{theorem}\label{thm:polar_sim}
\Cref{alg:polar_sim} simulates the channel $P_{Y \mid X, W}$ exactly, in the sense that given inputs $x^n$ and $w^n$, the output $y^n \in \{ 0, 1 \}^n$ is a draw from $P_{Y^n \mid X^n, W^n}(\cdot \mid w^n, x^n) = \prod_{i=1}^{n} P_{Y \mid X, W}(\cdot \mid x_i, w_i)$.
Moreover, it is rate-optimal in the sense that there exists a choice of $\mathtt{enc}$, $\mathtt{dec}$ such that
\begin{IEEEeqnarray}{c}
    \lim_{n \to \infty} \frac{1}{n} \E[\ell(M)] = I(X; Y \mid W). 
\end{IEEEeqnarray}
\end{theorem}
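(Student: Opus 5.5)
The plan has two parts: exactness, which I will get from a coupling argument that holds for every $n$, and rate-optimality, which I will get from polarization. Throughout, let $Z^n$ denote a random vector with $Y^n = Z^n G_n$, where $Y^n$ is drawn from the target $P_{Y^n \mid X^n, W^n}$. Since $G_n$ is invertible with $G_n^{-1} = G_n$, we also have $Z^n = Y^n G_n$. Define the joint law $P_{X^n, W^n, Z^n}$ accordingly. The quantities $p_i$ and $q_i$ in \cref{alg:polar_sim} are conditionals of this law.

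\emph{Exactness.} Given $x^n, w^n$ and $z^{i-1}$, the bit $z_i = \1\{S_i > p_i\}$ with $S_i \sim \operatorname{Unif}[0,1]$ is drawn from $\operatorname{Ber}(1 - p_i)$. That is, $\Pr[z_i = 0 \mid z^{i-1}] = p_i$, and $S_i$ is independent of $S^{i-1}$ and hence of $z^{i-1}$. By the chain rule, $z^n$ is therefore distributed as $P_{Z^n \mid X^n, W^n}(\cdot \mid x^n, w^n)$. The decoder also knows $w^n$ and $S^n$. By induction on $i$ it recovers $z^{i-1}$, so it computes the same $q_i$ and $\tilde z_i$, and then $z_i = \Delta_i \oplus \tilde z_i$. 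Hence $y^n = z^n G_n$ has the law of $Y^n$ given $(x^n, w^n)$, which factorizes as claimed because the pairs $(x_i, w_i)$ are used independently per coordinate.

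\emph{Rate.} The encoder sends the error pattern $\Delta^n$, so I will bound the entropy of $\Delta^n$. The two thresholds $p_i$ and $q_i$ use the same uniform $S_i$, so
\[
\Pr[\Delta_i = 1 \mid z^{i-1}, x^n, w^n] = \abs{p_i - q_i}.
\]
Take $\mathtt{enc}$ to be an arithmetic coder that models each $\Delta_i$ given the past with this probability. I will use the coupling that both encoder and decoder can actually evaluate, namely adaptive arithmetic coding driven by $q_i$, which the decoder knows. The expected length is then at most
\[
\sum_i \E\bigl[h_B(\abs{p_i - q_i})\bigr] + 2.
\]
Split the indices into three sets: a \emph{frozen} set where $q_i \in \{0,1\}$ up to $\epsilon$, a \emph{free} set where $p_i$ is already within $\epsilon$ of $0$ or $1$, and the rest. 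Polarization for sources with side information \citep{arikan2010} gives two facts about the Bhattacharyya parameters. First, $Z(Z_i \mid Z^{i-1}, X^n, W^n)$ polarizes, and the fraction of indices where it is near $1/2$, i.e.\ where $Z_i$ is unresolved given $(X^n, W^n)$, tends to $H(Y \mid X, W)$. Second, the analogous fraction for conditioning on $W^n$ alone tends to $H(Y \mid W)$. The indices where $p_i$ is deterministic but $q_i$ is not form asymptotically a fraction $H(Y \mid W) - H(Y \mid X, W) = I(X; Y \mid W)$. Each such index costs at most one bit. On indices where both or neither are deterministic, $\abs{p_i - q_i}$ is either $O(\epsilon)$ or the bit is unpredictable given both. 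In the latter case the coupled thresholds still disagree with some probability, and this is where care is needed.

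\emph{The main obstacle.} The hard step is to show that indices with neither $p_i$ nor $q_i$ polarized contribute $o(n)$. Polarization guarantees such indices are a vanishing fraction, but I need the cost per index to be bounded, which it is (at most one bit). So their total is $o(n)$.

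I also need a cleaner handle on the total than the case split. I would try the identity
\[
\sum_i \E\bigl[D_{\mathrm{KL}}(\operatorname{Ber}(p_i) \,\Vert\, \operatorname{Ber}(q_i))\bigr] = I(X^n; Z^n \mid W^n) = n I(X; Y \mid W),
\]
which follows from the chain rule and invertibility of $G_n$. Combined with polarization, this gives the upper bound $\limsup \frac{1}{n}\E[\ell(M)] \le I(X;Y \mid W)$.

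The matching lower bound $\ge I(X;Y \mid W)$ follows from the same argument as \eqref{eqn:generic_lower_bound}, with $W^n$ included in the shared information and using memorylessness.
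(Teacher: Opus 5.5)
Your exactness argument is correct and matches the paper's. The rate argument, however, has a genuine gap in two linked places.

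First, the bound $\E[\ell(M)]\le\sum_i\E[h_B(\abs{p_i-q_i})]+2$ is not achievable by any code the decoder can undo. The quantity $\abs{p_i-q_i}$ depends on $x^n$ through $p_i$, and knowing $q_i$ alone does not give the decoder a probability for $\Delta_i$. A decodable choice, which is the one the paper makes, is to code $\Delta_i$ with the fixed marginal $\Pr[\Delta_i=1]=\E\abs{p_i-q_i}$. This gives $\E[\ell(M)]\le\sum_i h_B(\E\abs{p_i-q_i})+2$, which by concavity is at least your expression. As a consequence, your case split cannot be made on realizations of $p_i,q_i$ being near $\{0,1\}$. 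The index classes have to be defined through the averaged quantities $H(Z_i\mid Z^{i-1},W^n)$ and $H(Z_i\mid Z^{i-1},X^n,W^n)$, and you then need to control $\E\abs{p_i-q_i}$ in terms of them.

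Second, you misplace the hard step. Indices that are unpolarized for either conditioning are the easy case: there are $o(n)$ of them at one bit each. The real issue is the indices where both conditional entropies exceed $1-\delta$. These make up a fraction of roughly $H(Y\mid X,W)$ of all indices, so charging them one bit each only yields rate $H(Y\mid W)$. You note that care is needed there, but you never supply the missing ingredient: a per-index inequality such as (Pinsker, then Jensen)
\[
\Pr[\Delta_i=1]=\E\abs{p_i-q_i}\le\sqrt{\tfrac{\ln 2}{2}\,I_i},\qquad I_i=I(X^n;Z_i\mid Z^{i-1},W^n)=H(Z_i\mid Z^{i-1},W^n)-H(Z_i\mid Z^{i-1},X^n,W^n).
\]
In the paper this role is played by the Cauchy--Schwarz step combined with Lemma~C.1 of Ozyilkan et al. Your KL identity $\sum_i I_i=nI(X;Y\mid W)$ cannot replace this inequality, since $h_B(\sqrt{x})\gg x$ for small $x$.

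With the inequality, though, your identity closes the proof neatly:
\begin{itemize}
\item Indices with $I_i<\delta$ cost at most $h_B(\sqrt{\delta\ln 2/2})$ each.
\item Any index with $I_i\ge\delta$ is either unpolarized for one of the two conditionings, of which there are $o(n)$, or has $H(Z_i\mid Z^{i-1},W^n)>1-\delta$ and $H(Z_i\mid Z^{i-1},X^n,W^n)<\delta$.
\item In the latter case $I_i>1-2\delta$, so there are at most $nI(X;Y\mid W)/(1-2\delta)$ such indices.
\end{itemize}
This would be a slightly slicker alternative to the paper's route. The paper instead bounds the number of indices that are high-entropy given $W^n$ but low-entropy given $(X^n,W^n)$ by the difference of the two high-entropy counts. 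As written, though, your upper bound does not follow.
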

\begin{IEEEproof}
Beginning with the claim of correctness and following the argument in \citet{ozyilkan2026}, the polar transform $y^n = z^n G_n$ is a bijection, and by construction $z^n$ is a draw from $P_{Z^n \mid X^n, W^n}(\cdot \mid w^n, x^n)$.
As the decoder reconstructs $z^n$ exactly, it follows that $y^n$ is a draw from $P_{Y^n \mid X^n, W^n}(\cdot \mid x^n, w^n)$, which is the desired output distribution.

Moving on to consider the rate, we again follow a slight modification of the proof given in \citet{ozyilkan2026}; the full argument is presented here for completeness using the same notation as that in \cref{alg:polar_sim}.
We have
\begin{IEEEeqnarray}{rCl}
    \IEEEeqnarraymulticol{3}{l}{
        \Pr[\Delta_i = 1]
    }\nonumber\\* \
    & = & \E_{Z^{i-1}, X^n, W^n}[\Pr[\Delta_i = 1 \mid Z^{i-1}, X^n, W^n]] \\
    & = & \E_{Z^{i-1}, X^n, W^n}[\abs{\Pr[Z_i = 1 \mid Z^{i-1}, X^n, W^n] - \Pr[Z_i = 1 \mid Z^{i-1}, W^n]}] \\
    & \leq & ( \E_{Z^{i-1}, X^n, W^n}[(\Pr[Z_i = 1 \mid Z^{i-1}, X^n, W^n] - \Pr[Z_i = 1 \mid Z^{i-1}, W^n])^2] )^{1/2} \\
    & = & (\E_{Z^{i-1}, W^n}[\E[(\Pr[Z_i = 1 \mid Z^{i-1}, X^n, W^n] \nonumber \\
    &&\< - \E[\Pr[Z_i = 1 \mid Z^{i-1}, X^n, W^n] \mid Z^{i-1}, W^n])^2 \mid Z^{i-1}, W^n]])^{1/2} \\
    & = & ( \E_{Z^{i-1}, W^n}[\Var(\Pr[Z_i = 1 \mid Z^{i-1}, X^n, W^n] \mid Z^{i-1}, W^n)] )^{1/2} \\
    & = & ( \Var(\Pr[Z_i = 1 \mid Z^{i-1}, X^n, W^n ]) \nonumber \\
    &&\< - \Var(\E[\Pr[Z_i = 1 \mid Z^{i-1}, X^n, W^n] \mid Z^{i-1}, W^n]) )^{1/2} \\
    & = & ( \Var(\Pr[Z_i = 1 \mid Z^{i-1}, X^n, W^n]) - \Var(\Pr[Z_i = 1 \mid Z^{i-1}, W^n]) )^{1/2}
\end{IEEEeqnarray}
where we have followed the steps in \citet[Appendix~C]{ozyilkan2026}, while removing the permutation randomness present in their construction and inserting the shared side information sequence $W^n$.
Then, following Lemma~C.1 in the same document, we get
\begin{IEEEeqnarray}{c}
    \Pr[\Delta_i = 1] \leq \frac{1}{2} (H(Z_i \mid Z^{i-1}, W^n) - (H(Z_i \mid Z^{i-1}, X^n, W^n))^2)^{1/2} \leq \frac{1}{2}.
\end{IEEEeqnarray}
If $\mathtt{enc}$, $\mathtt{dec}$ are chosen to be an arithmetic encoder-decoder pair furnished with the appropriate probabilities, the expected message length satisfies \citep{rissanen1976}
\begin{IEEEeqnarray}{rCl}
    \frac{\E[\ell(M)]}{n} & \leq & \frac{1}{n} \sum_{i=1}^{n} h_B(\Pr[\Delta_i = 1]) + \frac{2}{n} \\
    & \leq & \frac{1}{n} \sum_{i=1}^{n} h_B\biggl( \frac{1}{2} (H(Z_i \mid Z^{i-1}, W^n) - (H(Z_i \mid Z^{i-1}, X^n, W^n))^2)^{1/2} \biggr) + \frac{2}{n}. \IEEEeqnarraynumspace
\end{IEEEeqnarray}
For any $\delta \in (0, 1/2)$, create three partitions,
\begin{IEEEeqnarray}{rCl}
    A_{L} & = & \{ i \in [n] : H(Z_i \mid Z^{i-1}, W^n) \in [0, \delta) \} \\
    A_{M} & = & \{ i \in [n] : H(Z_i \mid Z^{i-1}, W^n) \in [\delta, 1 - \delta] \} \\
    A_{H} & = & \{ i \in [n] : H(Z_i \mid Z^{i-1}, W^n) \in (1 - \delta, 1] \}
\end{IEEEeqnarray}
which form the low, medium and high-entropy sets respectively. 
Then,
\begin{IEEEeqnarray}{rCl}
    \frac{\E[\ell(M)]}{n} & \leq & \frac{1}{n} \sum_{i \in A_H} h_B\biggl( \frac{1}{2} (H(Z_i \mid Z^{i-1}, W^n) - (H(Z_i \mid Z^{i-1}, X^n, W^n))^2)^{1/2} \biggr) \nonumber \\
    &&\< + h_B\biggl( \frac{\sqrt{\delta}}{2} \biggr) + \frac{\abs{A_M}}{n} + \frac{2}{n}.
\end{IEEEeqnarray}

To continue with the proof, which will involve bounding the sizes of $A_L$, $A_M$ and $A_H$, we recall the standard source polarization theorem.

\begin{lemma}[{\citealp[Theorem~1]{arikan2010}}]\label{lem:polarization}
Let $(X,Y) \sim P_{X,Y}$ be an arbitrary pair of random variables over $\mc{X} \times \mc{Y}$, where $\mc{X} = \{ 0, 1\}$.
Consider $\{ (X_i, Y_i) \}_{i\geq 1}$ a sequence of independent draws from $(X, Y)$ and write $(X^n, Y^n)$ to denote the first $n$ elements of this sequence.
For any $n = 2^m$, $m \geq 1$, let $Z^n = X^n G_n$.
Then, for any $\delta \in (0, 1)$,
\begin{IEEEeqnarray}{c}
    \lim_{n \to \infty} \frac{1}{n} \abs{\{ i \in [n] : H(Z_i \mid Z^{i-1}, Y^n) \in (1 - \delta, 1] \}}  = H(X \mid Y) \\
    \lim_{n \to \infty} \frac{1}{n} \abs{\{ i \in [n] : H(Z_i \mid Z^{i-1}, Y^n) \in [0, \delta) \}} = 1 - H(X \mid Y). 
\end{IEEEeqnarray}
\end{lemma}
\begin{remark}
Note that while the result in \citet{arikan2010} is stated for countable $\mc{Y}$ for simplicity, the same arguments, in particular regarding the convergence of the Bhattacharyya supermartingale and its connection to the entropy martingale, work unchanged with the side information spaces considered here if the source Bhattacharyya parameters are defined appropriately as
\begin{IEEEeqnarray}{c}
    Z(X \mid Y) = 2 \E_Y\Bigl[ \sqrt{P_{X \mid Y}(0 \mid Y) P_{X \mid Y}(1 \mid Y)} \Bigr].
\end{IEEEeqnarray}
More specifically, our MLC construction in \cref{sec:polar_mlc} uses the side information $(\bar{U}^N, B^{l-1})$ at level $l$, where $\bar{U}^N$ is continuous and $B^{l-1}$ is discrete.
Thus we state \cref{lem:polarization} in this setting.
See also Theorem~B.2 of \citet{ozyilkan2026} for a related analysis.
In what follows we assume that $n$ is a valid polar blocklength throughout, i.e.\ $n = 2^m$ for some integer $m \geq 1$ as required by the source polarization theorem.
\end{remark}

Since $(X^n, W^n) \sim P_{X,W}$ iid and the channel $P_{Y \mid X, W}$ is memoryless, we note
\begin{IEEEeqnarray}{rCl}
    P_{Y^n \mid W^n}(y^n \mid w^n) & = & \E_{X^n \mid W^n=w^n}[P_{Y^n \mid X^n, W^n}(y^n \mid X^n, w^n)] \label{eqn:memoryless_channel_start} \\
    & = & \E_{X^n \mid W^n=w^n} \biggl[ \prod_{i=1}^{n} P_{Y \mid X, W}(y_i \mid X_i, w_i) \biggr] \\
    & = & \E_{X_1 \mid W_1=w_1} \cdots \E_{X_n \mid W_n=w_n} \biggl[ \prod_{i=1}^{n} P_{Y \mid X, W}(y_i \mid X_i, w_i) \biggr] \\
    & = & \prod_{i=1}^{n} \E_{X_i \mid W_i = w_i}[P_{Y \mid X, W}(y_i \mid X_i, w_i)] \\
    & = & \prod_{i=1}^{n} P_{Y \mid W}(y_i \mid w_i). \label{eqn:memoryless_channel_end}
\end{IEEEeqnarray}
Therefore, $(Y^n, W^n) \sim P_{Y \mid W} P_W$ iid.
Applying \cref{lem:polarization} with $X^n \gets Y^n$ and $Y^n \gets W^n$, we then have that $\lim_{n \to \infty} \abs{A_M} / n = 0$.
From this, there exists some $N_A$ such that for all $n > N_A$, $\abs{A_M} / n < \delta$.
Further using the bound $h_B(\sqrt{\delta} / 2) \leq 2 \delta^{1/4}$ \citep{ozyilkan2026}, we have for any $n > N_A$,
\begin{IEEEeqnarray}{rCl}
    \frac{\E[\ell(M)]}{n} & \leq & \frac{1}{n} \sum_{i \in A_H} h_B\biggl( \frac{1}{2} (H(Z_i \mid Z^{i-1}, W^n) - (H(Z_i \mid Z^{i-1}, X^n, W^n))^2)^{1/2} \biggr) \nonumber \\
    &&\< + 2 \delta^{1/4} + \delta + \frac{2}{n}. \label{eqn:length_ineq}
\end{IEEEeqnarray}
Under the same condition on $n$, and again following \citet{ozyilkan2026} with the necessary modifications, 
\begin{IEEEeqnarray}{rCl}
    \IEEEeqnarraymulticol{3}{l}{
        (1 - \delta) \abs{A_H} \leq \sum_{i=1}^{n} H(Z_i \mid Z^{i-1}, W^n) \leq \abs{A_H} + (1 - \delta) \abs{A_M} + \delta \abs{A_L}
    } \label{eqn:partition_bound_start}\\* \
    & \implies & (1 - \delta) \abs{A_H} \leq n H(Y \mid W) \leq \abs{A_H} + (1 - \delta) \abs{A_M} + \delta \abs{A_L} \\
    & \implies & (1 - \delta) \frac{\abs{A_H}}{n} \leq H(Y \mid W) \leq \frac{\abs{A_H}}{n} + 2 \delta \\
    & \implies & -\delta \leq H(Y \mid W) - \frac{\abs{A_H}}{n} \leq 2 \delta \\
    & \implies & \frac{\abs{A_H}}{n} \leq H(Y \mid W) + \delta. \label{eqn:partition_bound_end}
\end{IEEEeqnarray}
Here we have used the fact that, as the polar transform is a bijection, $H(Z^n \mid W^n) = H(Y^n \mid W^n)$.
Also, as we previously established that $(Y^n, W^n)$ is iid,
\begin{IEEEeqnarray}{c}
    \sum_{i=1}^{n} H(Z_i \mid Z^{i-1}, W^n) = H(Y^n \mid W^n) = n H(Y \mid W). \label{eqn:iid_bijection_simplification}
\end{IEEEeqnarray}
Next defining
\begin{IEEEeqnarray}{c}
    \tilde{R} = \frac{1}{n} \sum_{i \in A_H} h_B\biggl( \frac{1}{2} (H(Z_i \mid Z^{i-1}, W^n) - (H(Z_i \mid Z^{i-1}, X^n, W^n))^2)^{1/2} \biggr) \label{eqn:r_tilde_def}
\end{IEEEeqnarray}
and the partitions
\begin{IEEEeqnarray}{rCl}
    B_{L} & = & \{ i \in [n] : H(Z_i \mid Z^{i-1}, X^n, W^n) \in [0, \delta) \} \\
    B_{M} & = & \{ i \in [n] : H(Z_i \mid Z^{i-1}, X^n, W^n) \in [\delta, 1 - \delta] \} \\
    B_{H} & = & \{ i \in [n] : H(Z_i \mid Z^{i-1}, X^n, W^n) \in (1 - \delta, 1] \}
\end{IEEEeqnarray}
we see that
\begin{IEEEeqnarray}{rCl}
    \tilde{R} & \leq & \frac{\abs{A_H \cap B_L}}{n} + \frac{\abs{A_H \cap B_M}}{n} + h_B\biggl( \sqrt{\frac{\delta}{2}} \biggr) \\
    & \leq &  \frac{\abs{A_H \cap B_L}}{n} + \frac{\abs{B_M}}{n} + h_B\biggl( \sqrt{\frac{\delta}{2}} \biggr) \label{eqn:r_tilde_ineq}
\end{IEEEeqnarray}
where we have used the fact that for any $i \in A_H \cap B_H$,
\begin{IEEEeqnarray}{rCl}
    H(Z_i \mid Z^{i-1}, W^n) - (H(Z_i \mid Z^{i-1}, X^n, W^n))^2 \leq 1 - (1 - \delta)^2 \leq 2 \delta.
\end{IEEEeqnarray}
As $(Y^n, X^n, W^n) \sim P_{Y \mid X, W} P_{X,W}$ iid, we can apply \cref{lem:polarization} again, this time with $X^n \gets Y^n$ and $Y^n \gets (X^n, W^n)$.
We then have $\lim_{n \to \infty} \abs{B_M} / n = 0$, and so there exists some $N_B$ such that for all $n > N_B$, $\abs{B_M} / n < \delta$.
For such $n$, following a similar approach to \eqref{eqn:partition_bound_start}--\eqref{eqn:partition_bound_end},
\begin{IEEEeqnarray}{rCl}
    \IEEEeqnarraymulticol{3}{l}{
        (1 - \delta) \abs{B_H} \leq \sum_{i=1}^{n} H(Z_i \mid Z^{i-1}, X^n, W^n) \leq \abs{B_H} + (1 - \delta) \abs{B_M} + \delta \abs{B_L}
    }\\* \
    & \implies & (1 - \delta) \abs{B_H} \leq n H(Y \mid X, W) \leq \abs{B_H} + (1 - \delta) \abs{B_M} + \delta \abs{B_L} \\
    & \implies & (1 - \delta) \frac{\abs{B_H}}{n} \leq H(Y \mid X, W) \leq \frac{\abs{B_H}}{n} + 2 \delta \\
    & \implies & -\delta \leq H(Y \mid X, W) - \frac{\abs{B_H}}{n} \leq 2 \delta \\
    & \implies & \frac{\abs{B_H}}{n} \geq H(Y \mid X, W) - 2 \delta.
\end{IEEEeqnarray}
Again, similar to \eqref{eqn:iid_bijection_simplification}, we have made use of the simplification
\begin{IEEEeqnarray}{c}
    \sum_{i=1}^{n} H(Z_i \mid Z^{i-1}, X^n, W^n) = H(Y^n \mid X^n, W^n) = n H(Y \mid X, W).
\end{IEEEeqnarray}
Thus, for any $n > N_0 = \max\{ N_A, N_B \}$, and noting that $B_H \subseteq A_H$ we have
\begin{IEEEeqnarray}{rCl}
    \frac{\abs{A_H \cap B_L}}{n}  & \leq & \frac{\abs{A_H}}{n} - \frac{\abs{A_H \cap B_H}}{n} \\
    & = & \frac{\abs{A_H}}{n} - \frac{\abs{B_H}}{n} \label{eqn:ah_bh_ineq} \\
    & \leq & H(Y \mid W) - H(Y \mid X, W) + 3 \delta \\
    & = & I(X; Y \mid W) + 3 \delta.
\end{IEEEeqnarray}
Substituting into \eqref{eqn:r_tilde_ineq}, we obtain for all such $n$,
\begin{IEEEeqnarray}{c}
    \tilde{R} \leq I(X; Y \mid W) + 4 \delta + 2 \delta^{1/4}
\end{IEEEeqnarray}
where we have applied the inequality $h_B(\sqrt{\delta / 2}) \leq 2 \delta^{1/4}$ \citep{ozyilkan2026}.
Finally, let $N_1 = \max\{ N_0, 2 / \delta \}$.
For any $n > N_1$, after substituting into \eqref{eqn:length_ineq}, we get
\begin{IEEEeqnarray}{c}
    \frac{\E[\ell(M)]}{n} \leq I(X; Y \mid W) + 6 \delta + 4 \delta^{1/4}.
\end{IEEEeqnarray}
Since $\delta$ can be made arbitrarily small, after combining with the fact that $I(X;Y \mid W)$ is a lower bound on the communication cost per sample, we have shown that
\begin{IEEEeqnarray}{c}
    \lim_{n \to \infty} \frac{1}{n} \E[\ell(M)] = I(X; Y \mid W)
\end{IEEEeqnarray}
which completes the proof.
\end{IEEEproof}

\section{Additional Experimental Results and Details}\label[appendix]{sec:exp_extra}
\subsection{Approximating the Posterior Distribution}\label[appendix]{sec:approx_posterior}
As briefly mentioned in \cref{sec:permuted_scheme_impl}, computing the exact posterior distribution $P_{K \mid X, \bar{U}^N}(\cdot \mid x, u^N)$ rapidly becomes impractical at larger alphabet sizes as it involves calculating matrix permanents.
When the alphabet size grows beyond $N = 16$ in our image compression application, we therefore adopt an iterative approximation of the posterior via a Sinkhorn-Knopp-based algorithm introduced to machine learning in \citet{mena2018,mena2020}.
In this section, we first briefly describe how the approximation is implemented in the context of the permuted scheme as introduced in \cref{sec:one_shot_simulation}, and then provide empirical results on the suitability of its output distribution for applied channel simulation tasks.

\paragraph{Implementation Details.}
\citet{mena2020} describe an algorithm for approximating posterior distributions over a permutation matrix $P$, i.e. a random $N \times N$ binary matrix with a unique 1 in each row and column representing a permutation of $[N]$, where the distribution of $P$ is 
\begin{IEEEeqnarray}{c}
    P_{P \mid A}(P \mid A) = \frac{1}{Z_{A}} \exp(\langle \ln A, P \rangle_F).
\end{IEEEeqnarray}
Here, $\langle \ln A, P \rangle_F$ is the Frobenius matrix inner product, $A$ is a parameter matrix and $Z_{A}$ is a normalizing constant.
In \citet{mena2020} it is shown and theoretically justified that the matrix of expectations $\E[P]$ can be well-approximated by applying the Sinkhorn operator \citep{sinkhorn1964} to $A$.
The Sinkhorn operator is defined to be the infinite successive row-then-column normalization of $A$ and produces a doubly stochastic matrix, which can be viewed as the solution to an instance of optimal transport with entropic regularization \citep{altschuler2017}.
The basic iterative implementation of this successive normalization idea is sometimes referred to as the Sinkhorn-Knopp algorithm \citep{knight2008}.
In practice, a finite number of row and column normalization steps are performed, and the algorithm is carried out in log-space to increase numerical stability.

We briefly outline how this framework allows us to compute an approximation for $P_{K \mid X, \bar{U}^N}(\cdot \mid x, u^N)$.
The likelihood of observing $\bar{U}^N = u^N$ in the course of the permuted scheme given the underlying permutation is $\Pi = \pi$ is
\begin{IEEEeqnarray}{c}
    p_{\bar{U}^N \mid \Pi}(u^N \mid \pi) = \prod_{i=1}^{N} p_{Y \mid X}(u_i \mid \pi^{-1}(i)).
\end{IEEEeqnarray}
Applying Bayes' rule, the posterior induced on the uniformly distributed random permutation $\Pi$ is therefore 
\begin{IEEEeqnarray}{c}
    \Pr[\Pi = \pi \mid \bar{U}^N = u^N] = \frac{p_{\bar{U}^N \mid \Pi}(u^N \mid \pi)}{N! p_{\bar{U}^N}(u^N)} = \frac{p_{\bar{U}^N \mid \Pi}(u^N \mid \pi)}{\operatorname{perm}(L(u^N))}.
\end{IEEEeqnarray}
We can map $\Pi$ to a permutation matrix $P(\Pi) \in \{ 0, 1 \}^{N \times N}$; given a realization $\pi$, we have that $\pi(j) = i$ if and only if $P_{ij} = 1$ or, in the context of the permuted scheme, $u_i$ was drawn from $P_{Y \mid X}(\cdot \mid j)$.
Then, the posterior can be written as a Frobenius inner product involving the log-likelihood matrix $\ln L(u^N)$, where $L(u^N)$ is as defined in \cref{sec:permuted_scheme_impl},
\begin{IEEEeqnarray}{c}
    \Pr[\Pi = \pi \mid \bar{U}^N = u^N] = \frac{\exp(\langle \ln L(u^N), P(\pi) \rangle_{F})}{\operatorname{perm}(L(u^N))}.
\end{IEEEeqnarray}
Taking the expectation of each element of $P(\Pi)$ with respect to the posterior distribution on $\Pi$ yields
\begin{IEEEeqnarray}{rCl}
    \E[P_{ij}] & = & \sum_{\pi \in S_N} \Pr[\Pi = \pi \mid \bar{U}^N = u^N] \1\{ \pi(j) = i \} \\
    & = & \E[\1\{ \Pi(j) = i \} \mid \bar{U}^N = u^N] \\
    & = & \Pr[\Pi(j) = i \mid \bar{U}^N = u^N]. \label{eqn:approx_posterior_final}
\end{IEEEeqnarray}
Note that \eqref{eqn:approx_posterior_final} is exactly the posterior distribution required to run the permuted scheme as described in \cref{sec:oneshot_permuted}, that is $\Pr[\Pi(j) = i \mid \bar{U}^N = u^N] = P_{K \mid X, \bar{U}^N}(i \mid j, u^N)$.
Thus we can obtain the required posterior probabilities given $X = x$ and $\bar{U}^N = u^N$, namely $\Pr[\Pi(x) = k \mid \bar{U}^N = u^N]$ for all $1 \leq k \leq N$, by taking the expectation and extracting column $x$ of $\E[P(\Pi)]$.
Consequently, to get an approximation, we can follow the approach in \citet{mena2020} by computing $\ln L(u^N)$, performing Sinkhorn-Knopp iterations on this log-likelihood matrix, and taking column $x$ from the result.
We show a simple PyTorch implementation in \cref{lst:sinkhorn}.
In practice, for our experiments in \cref{sec:experiment_vqvae}, we use a custom Triton implementation which avoids instantiating the entire likelihood matrix in GPU memory to increase throughput.

\begin{listing}[tb]
\begin{minted}[frame=single,fontsize=\footnotesize]{python}
def sinkhorn(L, num_iters=20):
    for _ in range(num_iters):
        # Row normalization
        L = L - torch.logsumexp(L, dim=1, keepdim=True)
        # Column normalization
        L = L - torch.logsumexp(L, dim=0, keepdim=True)
    return L
\end{minted}
\caption{Sinkhorn-Knopp algorithm in PyTorch.}
\label{lst:sinkhorn}
\end{listing}

\paragraph{Performance of the Approximation.}
To measure the suitability of the approximate Sinkhorn-Knopp posterior, we run a small additional experiment on basic discrete-input $N$-ary PAM channels with additive Gaussian noise.
In particular, we first test the marginal output distribution produced by the channel simulation for $N \in \{ 32, 64, 128, 256, 512, 1024 \}$ by generating $10^5$ samples uniformly from the $N$-ary input $X$, then simulating $P_{Y \mid X}$ with the permuted scheme for each input sample, incorporating the Sinkhorn-Knopp algorithm to compute $P_{K \mid X, \bar{U}^N}(\cdot \mid x, u^N)$ as required.
The inputs come from a constellation $\mc{X} = \{ 2m - (N - 1) : m \in [N] \}$ evenly spaced around the origin, and we use $P_{Y \mid X}(\cdot \mid x) = \mc{N}(x, \sigma^2)$ with $\sigma^2 = 10^{-\mathrm{SNR} / 10}$ and low, moderate and high signal-to-noise ratios (SNRs) in $\{ -10.0, 1.0, 10.0 \}$.

We use the Kolmogorov-Smirnov (KS) statistic \citep{conover1999} to measure the closeness to the target marginal with
\begin{IEEEeqnarray}{c}
    p_{Y}(y) = \frac{1}{N} \sum_{x \in \mc{X}} \mc{N}(y; x, \sigma^2) \label{eqn:true_marginal}
\end{IEEEeqnarray}
and present the results in \cref{tab:marginal_ks}.
For scale, we compare against the same test run on samples generated directly from the true marginal in \eqref{eqn:true_marginal}.
The mean of the KS statistic is displayed over 10 independent runs with $10^5$ samples each, along with its standard deviation.
For all iteration counts, the mean KS statistic for our approximate sampler falls within the uncertainty region. 
We also plot the empirical marginal CDF for $N=32$ derived from our approximate sampler using the Sinkhorn-Knopp posterior alongside the analytical Gaussian mixture CDF in \cref{fig:empirical_cdf}.

\begin{table}[bt]
    \centering
    \setlength\tabcolsep{5 pt}
    \newcolumntype{T}{S[
        table-format=1.2(1),
        uncertainty-mode=separate,
        table-text-alignment=right
    ]}
    \begin{tabular}{@{}llTTTTTT@{}}
        \toprule
        & {SNR} & \multicolumn{2}{c}{$-10.0$} & \multicolumn{2}{c}{$1.0$} & \multicolumn{2}{c}{$10.0$} \\
        $N$ & Iters. & {Approx.} & {True} & {Approx.} & {True} & {Approx.} & {True} \\
        \midrule
        32 & 20 & 2.77(81) & 2.94(87) & 2.87(72) & 3.00(90) & 2.76(82) & 2.99(92) \\
        & 50  & 2.38(86) & & 2.49(79) & & 2.68(76) & \\
        & 100 & 2.69(85) & & 2.61(80) & & 2.67(89) & \\
        64 & 20 & 2.59(81) & 2.50(61) & 2.85(77) & 2.62(76) & 2.89(86) & 2.63(77) \\
        & 50 & 3.23(100) & & 2.94(97) & & 3.30(96) & \\
        & 100 & 2.67(8.0) & & 2.59(69) & & 2.58(74) & \\
        128 & 20 & 2.48(57) & 2.62(77) & 2.58(65) & 2.65(0.77) & 2.58(63) & 2.66(78) \\
        & 50 & 2.65(82) & & 2.63(84) & & 2.66(83) & \\
        & 100 & 2.87(88) & & 2.89(89) & & 2.85(87) & \\
        256 & 20 & 2.94(93) & 2.60(87) & 2.95(92) & 2.62(86) & 2.95(93) & 2.61(87) \\
        & 50 & 2.90(9.0) & & 2.92(89) & & 2.94(87) & \\
        & 100 & 2.71(6.8) & & 2.75(62) & & 2.75(60) & \\
        512 & 20 & 2.75(77) & 2.86(65) & 2.73(79) & 2.82(67) & 2.73(77) & 2.82(68) \\
        & 50 & 2.58(7.1) & & 2.55(71) & & 2.55(70) & \\
        & 100 & 2.67(7.3) & & 2.68(72) & & 2.69(72) & \\
        1024 & 20 & 2.80(84) & 2.54(63) & 2.82(85) & 2.53(64) & 2.82(84) & 2.52(65) \\
        & 50 & 2.65(7.2) & & 2.64(72) & & 2.63(72) & \\
        & 100 & 2.56(7.2) & & 2.52(70) & & 2.53(71) & \\
        \bottomrule
    \end{tabular}
    \caption{KS statistics for the marginal distributions produced for $N$-ary Gaussian channels using the Sinkhorn-Knopp approximation (Approx.) with varying iteration counts and ground-truth samples (True). All values have been scaled up by $10^3$ and one standard deviation is shown.}
    \label{tab:marginal_ks}
\end{table}

\begin{figure}[bt]
    \centering
    \begin{tikzpicture}
\begin{groupplot}[
    width=5.5cm, 
    height=5cm,
    tick label style={font=\footnotesize}, 
    ymin=0,
    ymax=1,
    grid=major,
    max space between ticks=25,
    axis lines=left,
    axis line style={-},
    enlarge x limits=false,
    group style={group name=my plots, group size=3 by 1, horizontal sep=0.4cm, yticklabels at=edge left, ylabels at=edge left},
    xlabel={$y$}, ylabel={$F_Y(y)$}
]
    \nextgroupplot[
        legend pos=north west,
        legend cell align={left},
        legend style={font=\scriptsize, inner sep=1.5pt, row sep=0pt, column sep=3pt, legend image post style={scale=0.8}, cells={anchor=west}}
    ]

    \addplot[color=c0, thick] table[x=x1_est,y=y1_est,col sep=comma] {plots/empirical_cdf_data.csv};
    \addlegendentry{Approx.}
    
    \addplot[color=c4, thick, dashed] table[x=x1_est,y=y1_true,col sep=comma] {plots/empirical_cdf_data.csv};
    \addlegendentry{True}

    \nextgroupplot[
    ]

    \addplot[color=c0, thick] table[x=x2_est,y=y2_est,col sep=comma] {plots/empirical_cdf_data.csv};
    
    \addplot[color=c4, thick, dashed] table[x=x2_est,y=y2_true,col sep=comma] {plots/empirical_cdf_data.csv};

    \nextgroupplot[
    ]
    
    \addplot[color=c0, thick] table[x=x3_est,y=y3_est,col sep=comma] {plots/empirical_cdf_data.csv};
    
    \addplot[color=c4, thick, dashed] table[x=x3_est,y=y3_true,col sep=comma] {plots/empirical_cdf_data.csv};
\end{groupplot}
\end{tikzpicture}
    \caption{Empirical marginal CDFs for the output of a 1D Gaussian channel with 32-ary input constructed using the Sinkhorn-Knopp approximation (Approx.) and the ground-truth CDF (True).}
    \label{fig:empirical_cdf}
\end{figure}

As a more challenging test, we also consider the quality of the conditional distributions, namely $P_{Y \mid X}(\cdot \mid x)$ for each $x \in [N]$, in the specific context of the VQ-VAE experiment where we apply the Sinkhorn-Knopp approximation.
As set out in \cref{sec:img_compression_extra}, the experiment uses $N=256$ and the target distribution for each $x$ is an isotropic Gaussian with $D=64$ dimensions.
We run each mode through our variable-rate model for 5 different rate parameters $\lambda$ to generate $10^5$ samples each from the learned posterior and record the closeness to the target conditional distribution
\begin{IEEEeqnarray}{c}
    P_{Y \mid X}(\cdot \mid x) = \mc{N}(\mu_x, \operatorname{diag}(\sigma^2_x))
\end{IEEEeqnarray}
marginally for each of the 64 dimensions over 10 runs using 20 Sinkhorn-Knopp iterations.
These results are recorded in \cref{tab:conditional_ks}, where we show the mean, 5th and 95th-percentile, minimum and maximum KS statistics across all modes and all dimensions.
While the approximation error, particularly in the worst case, is higher than that observed for the marginal KS statistics in \cref{tab:marginal_ks}, the downstream rate-distortion results reported in \cref{sec:experiment_vqvae} suggest that the stochastic decoder can readily adapt to discrepancies which may arise in some modes or dimensions.

\begin{table}[bt]
    \centering
    \newcolumntype{T}{S[
        table-format=1.2,
        table-text-alignment=right
    ]}
    \newcolumntype{Q}{S[
        table-format=2.1,
        table-text-alignment=right
    ]}
    \begin{tabular}{@{}lTTTTrTTTTT@{}}
        \toprule
        & \multicolumn{5}{c}{Approx.} & \multicolumn{5}{c}{True} \\
        $\lambda$ & {Mean} & {P05} & {P95} & {Min} & {Max} & {Mean} & {P05} & {P95} & {Min} & {Max} \\
        \midrule
        \num{1.0e-1} & 2.74 & 2.27 & 3.23 & 1.87 & $35.9$ & 2.66 & 2.26 & 3.08 & 1.86 & 3.76 \\
        \num{2.7e-2} & 2.95 & 2.29 & 3.96 & 1.86 & $58.6$ & 2.66 & 2.27 & 3.10 & 1.76 & 3.72 \\
        \num{7.1e-3} & 3.71 & 2.32 & 7.94 & 1.70 & $50.4$ & 2.66 & 2.26 & 3.10 & 1.78 & 3.87 \\
        \num{1.9e-3} & 3.21 & 2.31 & 4.62 & 1.82 & $35.2$ & 2.66 & 2.27 & 3.09 & 1.75 & 3.73 \\
        \num{5.0e-4} & 2.66 & 2.27 & 3.09 & 1.79 & $3.58$ & 2.66 & 2.27 & 3.09 & 1.72 & 3.81 \\
        \bottomrule
    \end{tabular}
    \caption{KS test statistics for the conditional output distributions produced from our trained stochastic VQ-VAE using the Sinkhorn-Knopp approximation (Approx.) and ground-truth samples (True). All values have been scaled up by  $10^3$.}
    \label{tab:conditional_ks}
\end{table}

\subsection{Variable-Rate Image Compression}\label[appendix]{sec:img_compression_extra}
In this section, we provide additional details concerning the implementation of the experiments in \cref{sec:experiment_vqvae}, dealing in turn with our proposed stochastic VQ-VAE, then the standard deterministic VQ-VAE baseline, and finally the IS scheme, thus covering all the methods presented in \cref{fig:rd_plot} in the main paper.
First, we consider some elements of the setting common to all approaches.
We evaluate all the models on the CIFAR-10 dataset upsampled to $64 \times 64$ using bilinear interpolation, and adopt the standard split of \num{50000} train and \num{10000} test samples.
The batch size during training is 64; for testing it is 32, as this corresponds to 8192 latent variables to be transmitted per batch using either our polar-MLC scheme or IS.

Our results report both the mean rate-distortion values as well as the 5th and 95th-percentile distortions for each rate operating point, where the distortion is the average MSE across a batch.
The deterministic models are trained on a single L40S GPU with \SI{48}{\giga\byte} of VRAM, while the variable-rate stochastic models are trained on two such GPUs.
Evaluation is carried out on one GPU.
In \cref{fig:extra_example_imgs}, we provide some additional example output images from our proposed variable-rate stochastic VQ-VAE and the deterministic baselines in low, medium and high-rate regimes.

\paragraph{Stochastic VQ-VAE with Polar-MLC.}
The stochastic VQ-VAE uses the network architecture proposed in \citet{vandenoord2017}, based on the publicly available reference implementation.\footnote{\url{github.com/deepmind/sonnet/blob/v2/examples/vqvae_example.ipynb}}
This architecture is also used for the deterministic VQ-VAE and IS baselines.
In particular, the model consists of an encoder, a vector quantizer and a decoder, all of which are trained jointly.
The encoder contains 2 convolutional downsampling layers with stride 2 and kernel size $4 \times 4$, with 2 residual blocks following after.
The downsampling layers all have 256 hidden units, while the residual layers contain 64.
Each residual block contains a $3 \times 3$ convolution followed by a $1 \times 1$ convolution, and ReLU activations are adopted throughout.
The decoder mirrors the encoder, with the downsampling layers replaced by the corresponding upsampling operations.

Our codebook for the stochastic models has size $N = 256$ and the dimension of each codeword is $D = 64$. 
The codebook means are trained using exponential moving average (EMA) updates with a decay of 0.99 where codewords are selected at training and inference time based on the minimum-distance rule in \eqref{eqn:vq_vae_assignment}; gradients flow back to the encoder through a straight-through estimator (STE).
Codebook variances are trained using standard gradient descent.

To achieve rate-conditional training, we condition the decoder network and the learned codebook variances on a scalar input $\lambda$ using the strategy from \citet{dosovitskiy2020}.
Each convolutional and residual layer is followed by a feature-wise linear modulation (FiLM, \citealp{perez2018}) block, implemented as a multi-layer perceptron (MLP) with one hidden layer, a hidden dimension of 256, and SiLU activation.
If a layer outputs a feature map $F$ with dimensions $W \times H \times C$, each FiLM block learns a modulation $(a(\lambda), b(\lambda))$ where both $a$ and $b$ are vectors of dimension $C$.
Then, the feature map is transformed element-wise as
\begin{IEEEeqnarray}{c}
    \hat{F}_{ijk} = a_k(\lambda) F_{ijk} + b_k(\lambda).
\end{IEEEeqnarray}
During training, $\lambda$ is sampled log-uniformly and independently for each sample within a batch between $10^{-5}$ and $10^{-1}$.
Models are initialized from a converged deterministic checkpoint.
Following this, all parameters apart from the EMA-updated weights are trained using the Adam optimizer with $\beta_1 = 0.9$, $\beta_2 = 0.999$ and a learning rate of $10^{-4}$ for 200 epochs, followed by a cosine decay down to $10^{-6}$ over a further 100 epochs.
The decoder is trained on both ground truth samples and approximate samples generated from the Sinkhorn-Knopp approximation introduced in \cref{sec:permuted_scheme_impl}.

To obtain the rate measurements in \cref{fig:rd_plot} under joint coding of the 8192 latents using the polar-MLC scheme from \cref{sec:polar}, we report the length of the bitstream returned by arithmetic coding \citep{rissanen1976} divided by the number of latents $n$.
The Sinkhorn-Knopp approximation is used to estimate the discrete posterior $P_{K \mid X, \bar{U}^N}(\cdot \mid x, u^N)$, and the prior on the codebook is treated as uniform during training but learned across the training set for inference.
We denote this learned prior as $\hat{P}_X$ in what follows.
Lastly, following the exposition of the modified PolarSim algorithm in \cref{sec:polar_sim}, the arithmetic coder requires an estimate of the probability $\Pr[\Delta_i = 1]$ for each $i \in [n]$ and, in the MLC case for a 256-ary alphabet, for each level $l \in [8]$.
These are computed using 2000 Monte Carlo iterations with the inputs being synthetic discrete latents sampled from the learned prior.
During inference, the latents are shuffled using additional shared randomness before being fed to the channel simulator to better satisfy the iid input assumption of the polar-MLC scheme.
The original order is recovered before passing the results to the decoder network.
We evaluate on 3 random seeds.

\paragraph{Deterministic VQ-VAE Baseline.}
The deterministic VQ-VAEs use the same neural architecture, except they do not use rate-conditional training and each codebook entry is simply a fixed vector $c_k \in \mathbb{R}^D$.
To achieve different rates, we separately train 7 independent VQ-VAEs with the codebook sizes being $N \in \{ 4, 8, 16, 32, 64, 128, 256 \}$.
Again, we use the Adam optimizer, this time with a base learning rate of $2 \times 10^{-4}$ over 300 epochs, before applying a cosine decay to $10^{-6}$ for an additional 100 epochs.
To estimate the communication cost for the deterministic models, we assume an ideal entropy code given access to the learned prior on the codebook obtained from the training set, giving a lower bound of $\ell(M) \geq \sum_{i=1}^{n} - \log \hat{P}_X(x_i)$ bits for a realization $x^n$ of the full latent vector.

\paragraph{Importance Sampling Baseline.}
Our implementation follows the communication-efficient IS scheme discussed in \cite{phan2024}, which is a variant of ordered random coding \citep{theis2022b}, and is evaluated on the same trained $N=256$ rate-conditional checkpoints used to test our polar-MLC scheme.
Thus, we treat IS as an alternative approximate channel simulator, which does not require modification to any other aspects of the stochastic VQ-VAE concept, network architecture, or training.

We refer the reader to \citet{phan2024} for full details regarding the IS algorithm and outline the specifics relevant to our experiment here.
For a single sample, our proposal distribution is the learned marginal with
\begin{IEEEeqnarray}{c}
    \hat{p}_Y(y) = \sum_{x=1}^{N} \hat{P}_X(x) \mc{N}(y; \mu_x, \operatorname{diag}(\sigma^2_x))
\end{IEEEeqnarray}
and the target distribution is the specific mode, $P_{Y \mid X}(\cdot \mid x) = \mc{N}(\mu_x, \operatorname{diag}(\sigma^2_x))$.
To keep the computational complexity of IS manageable, the 8192 latents are split into chunks of size $B$ and each chunk is accorded $2^{16}$ iid samples from the marginal.
This sample count allows one operating point to be evaluated in approximately 1.5 hours; our polar-MLC method can do the same in 2--3 minutes. 
In \cref{fig:rd_plot}, we evaluate chunk sizes $B \in \{ 1, 2, 4 \}$.
While larger chunk sizes reduce the bitrate via amortization, the number of samples becomes insufficient to faithfully reproduce the output distribution at higher rates if $B$ is too large, leading to increased distortion.
We report the bitrate using the cross-entropy with the specific Zipf distribution described in \citet{li2024a}, which provides a slightly tighter bound than that used in \citet{phan2024}.
If an entropy code is constructed using this distribution, a lower bound on the bitrate is then $\ell(M) \geq \sum_{i=1}^{n/B} -\log P_{\mathrm{Zipf(\alpha)}}(k_i)$ where $k_i$ is the sample index returned by importance sampling for block $i$, and $P_{\mathrm{Zipf(\alpha)}}$ is the PMF of a Zipf distribution with parameter $\alpha = 1 + 1 / (I(X;Y) + 1)$.
We evaluate on 3 random seeds.

\begin{figure}[p]
    \centering
    \begin{subfigure}[b]{0.49\linewidth}
        \centering
        \includegraphics[width=\linewidth]{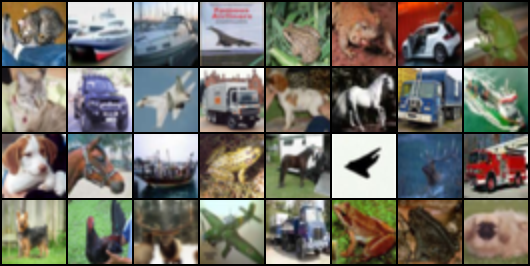}
        \caption{Ground truth}
    \end{subfigure}
    
    \begin{subfigure}[b]{0.49\linewidth}
        \centering
        \includegraphics[width=\linewidth]{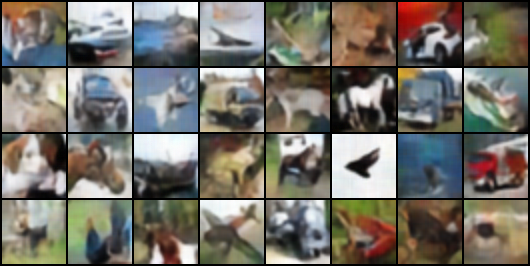}
        \caption{Polar-MLC, $\lambda=\num{7.2e-2}$, $R=1.9$}
    \end{subfigure}
    \hfill
    \begin{subfigure}[b]{0.49\linewidth}
        \centering
        \includegraphics[width=\linewidth]{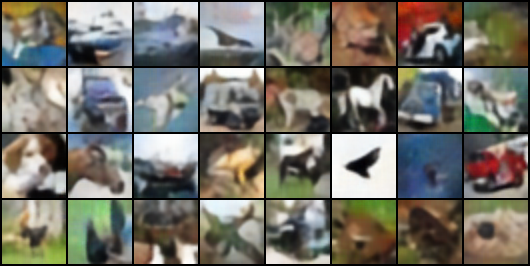}
        \caption{VQ-VAE, $N=4$, $R=1.9$}
    \end{subfigure}

    \begin{subfigure}[b]{0.49\linewidth}
        \centering
        \includegraphics[width=\linewidth]{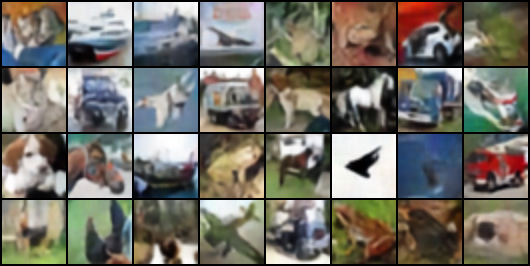}
        \caption{Polar-MLC, $\lambda=\num{2.5e-2}$, $R=2.9$}
    \end{subfigure}
    \hfill
    \begin{subfigure}[b]{0.49\linewidth}
        \centering
        \includegraphics[width=\linewidth]{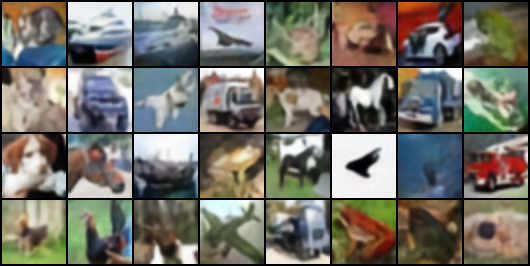}
        \caption{VQ-VAE, $N = 8$, $R=2.9$}
    \end{subfigure}

    \begin{subfigure}[b]{0.49\linewidth}
        \centering
        \includegraphics[width=\linewidth]{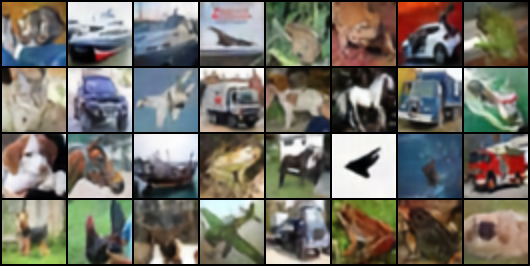}
        \caption{Polar-MLC, $\lambda = \num{6.0e-3}$, $R=5.0$}
    \end{subfigure}
    \hfill
    \begin{subfigure}[b]{0.49\linewidth}
        \centering
        \includegraphics[width=\linewidth]{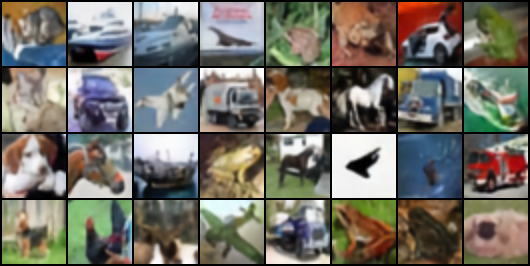}
        \caption{VQ-VAE, $N = 32$, $R=4.9$}
    \end{subfigure}

    \begin{subfigure}[b]{0.49\linewidth}
        \centering
        \includegraphics[width=\linewidth]{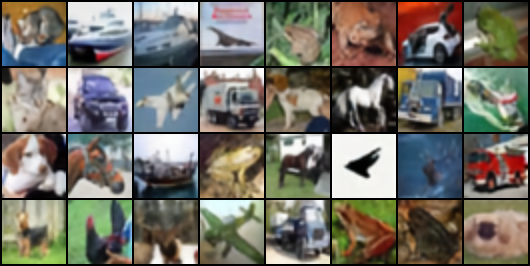}
        \caption{Polar-MLC, $\lambda = \num{5.0e-4}$, $R=7.8$}
    \end{subfigure}
    \hfill
    \begin{subfigure}[b]{0.49\linewidth}
        \centering
        \includegraphics[width=\linewidth]{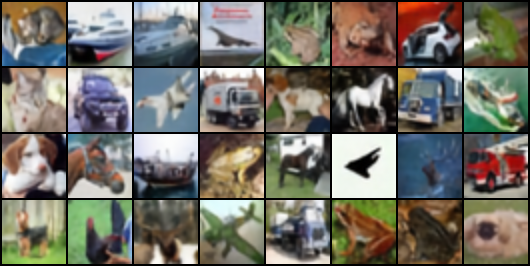}
        \caption{VQ-VAE, $N = 256$, $R=7.8$}
    \end{subfigure}
    \caption{Additional example outputs from the stochastic VQ-VAE experiment using the polar-MLC channel simulation scheme, along with reconstructions from deterministic VQ-VAE baselines achieving similar coding rates. The rate $R$ is in bits/latent.}
    \label{fig:extra_example_imgs}
\end{figure}

\subsection{Distributed Mean Estimation}\label[appendix]{sec:dme_extra}
Here, we provide some further information on the DME experiment in \cref{sec:experiment_cdp}, including privacy calibration, simulation with the polar-MLC scheme, the PFR implementation, and finally a comparison to dithered quantization.

\paragraph{Privacy Details and Calibration.}
First, we go into some further detail on the CDP setting and how privacy budgets are accounted for in order to determine the amount of Gaussian noise to be added.
Once the appropriate noise standard deviation has been determined by the privacy accountant, any exact channel simulator may be used to simulate such noise and faithfully reproduce the exact Gaussian mechanism with the associated privacy-utility guarantees and tradeoffs.

We start by giving some more specifics of our DME setup.
Recall that the vector $X_i$ held locally by client $i \in [C]$ has dimension $D=512$, and each coordinate $X_{ij}$, $j \in [D]$ is uniformly distributed on a set of $N=16$ points spaced evenly on $[-1,1]$.
More specifically, this input constellation is $\mc{X} = \{ 2 (m-1) / 15 - 1 : m \in [N] \}$.
We therefore have that the client vectors live in the $\ell_2$ ball $\mc{B}_D(L) = \{ v \in \mathbb{R}^D : \norm{v}_2 \leq L \}$ with $L = \sqrt{D}$.

Given the desired privacy $\varepsilon$, we apply the Gaussian mechanism by having each client add isotropic Gaussian noise $Z_i \sim \mc{N}(0, \sigma_\varepsilon^2 I_D)$ to its local vector.
Then, the noisy sample $Y_i = X_i + Z_i$ can be communicated to the server using a channel simulator.
The server in turn calculates and later releases the estimate $\hat{\mu} = \sum_{i=1}^{C} Y_i / C = \mu + Z$, where $Z = \sum_{i=1}^{C} Z_i / C$ is the aggregate added noise.

The standard definition of $(\varepsilon,\delta)$-CDP with respect to the released mean, see for example \citet{chen2023}, is provided below.

\begin{definition}[$(\varepsilon,\delta)$-CDP for DME]\label{def:cdp}
The mean $\hat{\mu}$ satisfies $(\varepsilon,\delta)$-central DP if for any neighboring datasets $x^C = (x_1, \ldots, x_i, \ldots, x_C)$ and $\tilde{x}^C = (x_1, \ldots, \tilde{x}_i, \ldots, x_C)$ and measurable $S$,
\begin{IEEEeqnarray}{c}
    \Pr[\hat{\mu} \in S \mid X^C = x^C] \leq e^{\varepsilon} \Pr[\hat{\mu} \in S \mid X^C = \tilde{x}^C] + \delta. \label{eqn:cdp_def}
\end{IEEEeqnarray}
\end{definition}

Provided $\sigma_\varepsilon$ is chosen appropriately given $\varepsilon$ and $\delta$, the Gaussian mechanism is known to satisfy \eqref{eqn:cdp_def} \citep{dwork2014}.
To numerically obtain the required $\sigma_\varepsilon$, following \citet{liu2024} we use a R\'{e}nyi DP accountant.
Our implementation uses Google's differential privacy library,\footnote{\url{github.com/google/differential-privacy/}} which provides a \verb|GaussianDpEvent| that can be calibrated against a desired $(\varepsilon,\delta)$ pair to return the ratio $\nu$ between the standard deviation and the $\ell_2$ sensitivity of the target function.
With reference to the notion of neighboring datasets in \cref{def:cdp}, the $\ell_2$ sensitivity of the mean is
\begin{IEEEeqnarray}{c}
    \Delta_2 = \sup_{x^C,\tilde{x}^C} \norm{\mu(x^C) - \mu(\tilde{x}^C)}_2 = \frac{1}{C} \sup_{x_i,\tilde{x}_i} \norm{x_i - \tilde{x}_i}_2 = \frac{2 \sqrt{D}}{C}.
\end{IEEEeqnarray}
Therefore, upon finding $\nu$ from the privacy accountant, the aggregate noise should follow $Z \sim \mc{N}(0, (4 \nu^2 D / C^2) I_D)$ and the noise added at each client should have $\sigma_\varepsilon = 2 \nu \sqrt{D/C}$.

\paragraph{Polar-MLC Coding Scheme for DME.}
Once we have obtained the required $\sigma_\varepsilon$ that each client should use, communicating the noisy data $Y_i$ to the server can be framed as a multi-shot channel simulation task with $P_{Y \mid X}(\cdot \mid x) = \mc{N}(x, \sigma^2_\varepsilon)$ and $D$ independent uses of this channel.
The alphabet size is $N=16$, and the exact polar-MLC scheme from \cref{sec:polar} using matrix permanents can be used to this end.
Because polar coding benefits from longer blocklengths, we jointly encode $B=8$ instances of the DME problem at a time by concatenating the noisy vectors to be transmitted, giving a coding blocklength of $n=4096$.
We emphasize that this does not affect the privacy or utility properties of the Gaussian mechanism under consideration, the only disadvantage of batched communication being greater latency at the server; privacy guarantees remain on the individual length-$D$ vectors rather than being jointly composed across a batch.

Concerning the implementation, as in \cref{sec:img_compression_extra}, probability tables for arithmetic coding are obtained from 2000 Monte Carlo iterations.
To align with the PFR baseline to be discussed next, we report a tight upper bound on the arithmetic coding cost, $\ell(M) \leq \sum_{i=1}^{n} -\log \hat{P}_{\Delta_i}(\Delta_i) + 2$, where $\Delta^n$ is the binary correction vector that appears in the modified PolarSim, \cref{alg:polar_sim}, and $\hat{P}_{\Delta_i}(\Delta_i)$ comes from the estimated probability table.
The number of bits per sample in \cref{fig:cdp} then refers directly to this upper bound applied to the normalized coding cost $\ell(M) / n$ and added across all 4 levels of the multi-level code.
We test 50 values of $\varepsilon$ and run 200 trials for each.

\paragraph{PFR Baseline.}
For PFR, we implement the basic scheme from \cite{li2018}.
We refer the reader to Algorithm~2 in \citet{li2024a} for detailed pseudocode outlining the method.
The proposal distribution is the marginal, which in our case is a standard mixture of Gaussians with
\begin{IEEEeqnarray}{c}
    p_Y(y) = \frac{1}{N} \sum_{x \in \mc{X}} \mc{N}(y; x, \sigma_\varepsilon^2) 
\end{IEEEeqnarray}
and the target distribution is the specific conditional Gaussian, $P_{Y \mid X}(\cdot \mid x) = \mc{N}(x, \sigma_\varepsilon^2)$.
To terminate, PFR requires an upper bound on the density ratio for each possible $x$, $g_x^\ast \geq \sup_y p_{Y \mid X}(y \mid x) / p_Y(y)$.
Fortunately, such a bound exists for our Gaussian mixture.
To enhance the speed of PFR, which becomes slower if a loose stopping bound is used, we precompute the tightest bound numerically for each $x \in \mc{X}$ using an optimization routine before entering the sampling loop.

While we have above described the algorithm in scalar terms for simplicity, to achieve a competitive coding cost, we apply PFR jointly on chunks of $B$ samples at a time.
As the computational complexity and expected number of samples to be generated increases exponentially in $B$ and also in $I(X;Y)$ \citep{li2024a}, $B$ cannot be made too large.
We therefore sweep $B$ in increments of 2 and choose the largest feasible value of $B$ for each $\varepsilon$, reported in \cref{tab:block_size_pfr}.
Our criteria is that for each $\varepsilon$, a block of 4096 samples can be communicated in less than \num{10} -- \SI{20}{\second} on an L40S GPU; with reference to the table in \cref{fig:cdp}, we emphasize that our polar-MLC algorithm can process 128 such blocks in approximately \SI{3}{\second} on a 32-core CPU.
We evaluate 25 values of $\varepsilon$ and run 200 trials for each.
Following Algorithm~2 in \cite{li2024a}, we assume the integer indices returned by PFR are to be encoded using an entropy code designed for a specific Zipf distribution with parameter $\alpha = 1 + 1 / (I(X;Y) + 1)$, and report the lower bound $\ell(M) \geq \sum_{i=1}^{n/B} -\log P_{\mathrm{Zipf(\alpha)}}(k_i)$ where $k_i$ is the sample index returned by PFR for block $i$ and $P_{\mathrm{Zipf}(\alpha)}$ is the PMF of the relevant Zipf distribution.

\begin{table}[tb]
    \centering
    \begin{tabular}{@{}lrrrrr@{}}
        \toprule
        Index & 0 & 1 & 2--3 & 4--10 & 11--24 \\
        \midrule
        $B$ & 12 & 10 & 8 & 6 & 4 \\
        \bottomrule
    \end{tabular}
    \caption{Chunk sizes used for the PFR experiment. The test index refers to evenly spaced values of $\varepsilon$ between 0.05 and 30. As $\varepsilon$ increases, so does the mutual information and the computational cost at a given chunked blocklength.}
    \label{tab:block_size_pfr}
\end{table}

\paragraph{Performance Benchmarks.}
We briefly mention the benchmarking setup used to produce the table in \cref{fig:cdp}.
For the polar-MLC scheme, 128 blocks of 4096 samples are transmitted, using one 32-core Intel Xeon Gold 6338 CPU running a C++ backend.
We run the benchmark 20 times and report the mean and standard deviation of the wall-clock time.
PFR runs on a single L40S GPU, and due to its higher computational cost, we run the benchmark 5 times on 8 blocks, then scale up to estimate the time for 128 blocks.
All tests, including the benchmark, use 3 random seeds.

\paragraph{Comparison with Dithered Quantization.}
Subtractively dithered quantization \citep{roberts1962,gray1993} is a widely-known technique which can exactly simulate additive noise channels, and as such has also been used to achieve DP with additive noise mechanisms, including the Gaussian mechanism which is of interest here \citep{hasircioglu2024,hegazy2024}.
However, as it can only realize additive noise channels, it is not considered a general-purpose channel simulation scheme in the same sense as PFR, GRS or our polar-MLC construction.

Nevertheless, for completeness we provide in this section a comparison of the polar-MLC scheme with the algorithm from \citet{hasircioglu2024}, which is specifically designed to solve the CDP DME problem using the Gaussian mechanism.
Given a scalar $X = x$, to transmit a sample $Y \sim \mc{N}(x, \sigma_\varepsilon^2)$, the scheme performs the following:
\begin{enumerate}
    \item Generate a sample $V \sim \Gamma(3/2, 1/2)$, where $\Gamma(3/2, 1/2)$ is the gamma distribution with shape and rate parameters $3/2$ and $1/2$ respectively.
    \item Given $V = v$, compute the quantization step size as $\Delta = 2 \sigma_\varepsilon \sqrt{v}$ and sample $U \sim \operatorname{Unif}[-\Delta / 2, \Delta / 2]$.
    \item Apply the quantization function $Q(w) =  \lfloor (x - \Delta / 2) / \Delta \rceil \Delta + \Delta / 2$ to $x + U$.
    \item Transmit the result using a fixed-length code consuming $b = \lceil \log_2(2 \lfloor 1 / \Delta + 1 \rceil) \rceil$ bits.
\end{enumerate}
The above description is for a scalar channel.
For a higher-dimensional channel such as ours, where 4096 samples must be transmitted, the scalar procedure is simply repeated and the resulting codewords concatenated.
In \cref{fig:dq_cdp}, we show the results obtained using this scheme and the fixed-length code described above in the same setting as that of the PFR to polar-MLC comparison in \cref{fig:cdp}.
We note that, unlike the polar-MLC or PFR schemes, the fixed-length construction does not approach zero bits/sample as $\varepsilon \to 0$, as it incurs at least 1 bit of communication for each coordinate.

As the scheme in \citet{hasircioglu2024} operates coordinate-wise, it may perform more poorly when the dimension of each sample is increased.
To illustrate this, we also consider a setting where the alphabet size is still $N=16$, but the constellation $\mc{X}$ is now the set of vertices of a 4D hypercube with side length $1$.
This maintains the same $\ell_2$ sensitivity of the mean despite the change in geometry, and the experimental results are shown in \cref{fig:dq_cdp} for $\varepsilon \in [0.05, 3]$.
As each dimension of the constellation must be simulated individually using the scalar protocol, the dithered quantization scheme incurs additional overhead while our more general-purpose polar-MLC scheme, which is agnostic to the geometry of the constellation, readily generalizes.
Again, we use 3 random seeds.

As an aside, we note that while variants on dithered quantizers directly simulating Gaussian noise in higher dimensions have recently been considered \citep{kobus2024}, they currently do not achieve exact simulation and therefore cannot guarantee privacy via the usual properties of the Gaussian mechanism.
Achieving exact simulation currently requires, for example, more complex schemes including an inner rejection sampling loop \citep{ling2025}, which incurs greater computational complexity and is beyond the scope of our experiment.
For scalar approaches, variable-length codes may also be used to encode the quantized index instead of a fixed-length code.
For example, \citet{hegazy2024} and \citet{shahmiri2024} adopt Elias gamma coding, which we find underperforms the fixed-length code in our problem setting.
Entropy codes conditioned on the shared randomness can theoretically obtain even better rate performance but generally require constructing a new code for each realization, which may be infeasible in practice, as noted in \citet{hegazy2024}.
This is outside the scope of the present comparison; we leave the consideration of such coding schemes and any potential simplifications in the discrete-to-continuous setting to future work.  

\begin{figure}
    \centering
    \begin{tikzpicture}
\begin{groupplot}[
    width=7.6cm, 
    height=6cm,
    tick label style={font=\footnotesize},
    xlabel={\footnotesize Privacy $\varepsilon$},
    ylabel={\footnotesize Rate (bits/sample)},
    clip=false,
    grid=major,
    max space between ticks=25,
    axis lines=left,
    axis line style={-},
    enlarge x limits=false,
    group style={group name=my plots, group size=2 by 1, horizontal sep=0.7cm, ylabels at=edge left},
]
\nextgroupplot[
    legend pos=south east,
    legend cell align={left},
    legend style={font=\scriptsize, inner sep=1.5pt, row sep=0pt, column sep=3pt, legend image post style={scale=0.8}, cells={anchor=west}, reverse legend},
    xmax=30
]

\addplot[color=c3, thick] table[x=eps,y=I_X_Y_U,col sep=comma] {plots/theoretical_results_scalar.csv};
\addlegendentry{$I(X;Y \mid \bar{U}^{16})$}

\addplot[color=c2, thick] table[x=eps,y=I_X_Y,col sep=comma] {plots/theoretical_results_scalar.csv};
\addlegendentry{$I(X;Y)$}

\addplot[color=c4, thick] table[x=eps,y=dat_mean,col sep=comma] {plots/dq_results_scalar_fixed.csv};
\addlegendentry{Has{\i}rc{\i}o{\u{g}}lu and G{\"u}nd{\"u}z (2024)}

\addplot[draw=none, name path=lower, forget plot] table[x=eps,y=dat_p05,col sep=comma] {plots/dq_results_scalar_fixed.csv};

\addplot[draw=none, name path=upper, forget plot] table[x=eps,y=dat_p95,col sep=comma] {plots/dq_results_scalar_fixed.csv};

\addplot[fill=c4!30, opacity=0.7, forget plot] fill between[of=lower and upper];

\addplot[color=c0, thick] table[x=eps,y=dat_mean,col sep=comma] {plots/polar_results_scalar.csv};
\addlegendentry{Polar-MLC}

\addplot[draw=none, name path=lower, forget plot] table[x=eps,y=dat_p05,col sep=comma] {plots/polar_results_scalar.csv};

\addplot[draw=none, name path=upper, forget plot] table[x=eps,y=dat_p95,col sep=comma] {plots/polar_results_scalar.csv};

\addplot[fill=c0!30, opacity=0.7, forget plot] fill between[of=lower and upper];

\nextgroupplot[
    xmax=3.0,
    ytick={1,2,3,4,5,6,7}
]

\addplot[color=c3, thick] table[x=eps,y=I_X_Y_U,col sep=comma] {plots/theoretical_results_vector.csv};

\addplot[color=c2, thick] table[x=eps,y=I_X_Y,col sep=comma] {plots/theoretical_results_vector.csv};

\addplot[color=c4, thick] table[x=eps,y=dat_mean,col sep=comma] {plots/dq_results_vector_fixed.csv};

\addplot[draw=none, name path=lower, forget plot] table[x=eps,y=dat_p05,col sep=comma] {plots/dq_results_vector_fixed.csv};

\addplot[draw=none, name path=upper, forget plot] table[x=eps,y=dat_p95,col sep=comma] {plots/dq_results_vector_fixed.csv};

\addplot[fill=c4!30, opacity=0.7, forget plot] fill between[of=lower and upper];

\addplot[color=c0, thick] table[x=eps,y=dat_mean,col sep=comma] {plots/polar_results_vector.csv};

\addplot[draw=none, name path=lower, forget plot] table[x=eps,y=dat_p05,col sep=comma] {plots/polar_results_vector.csv};

\addplot[draw=none, name path=upper, forget plot] table[x=eps,y=dat_p95,col sep=comma] {plots/polar_results_vector.csv};

\addplot[fill=c0!30, opacity=0.7, forget plot] fill between[of=lower and upper];

\end{groupplot}
\end{tikzpicture}
    \caption{Comparison with dithered quantization for distributed mean estimation under the Gaussian mechanism. \emph{Left:} Scalar 16-ary channel. \emph{Right:} 4D 16-ary channel. P05--P95 rates are shown.\vspace{-1em}}
    \label{fig:dq_cdp}
\end{figure}

\end{document}